\documentclass{article}

\usepackage{preprint,times}

\usepackage[utf8]{inputenc}
\usepackage[T1]{fontenc}

\usepackage{hyperref}
\usepackage{url}
\usepackage{booktabs}
\usepackage{amsfonts}
\usepackage{nicefrac}
\usepackage{microtype}
\usepackage{graphicx}
\usepackage{natbib}
\usepackage{doi}

\usepackage{amsmath}
\usepackage{amssymb}
\usepackage{amsthm}
\usepackage{mathtools}
\usepackage{cleveref}

\usepackage{array}
\usepackage{arydshln}
\usepackage{subcaption}
\usepackage{xcolor}
\usepackage{colortbl}

\usepackage{algorithm}
\usepackage{algpseudocode}

\theoremstyle{plain}
\newtheorem{theorem}{Theorem}
\newtheorem{lemma}{Lemma}
\newtheorem{proposition}{Proposition}

\theoremstyle{definition}
\newtheorem{assumption}{Assumption}

\theoremstyle{remark}
\newtheorem{remark}{Remark}

\newcommand{\best}[1]{\textbf{#1}}
\newcommand{\second}[1]{\underline{#1}}

\providecommand{\safeincludegraphics}[2][\linewidth]{%
    \IfFileExists{#2}{%
        \includegraphics[width=#1]{#2}%
    }{%
        \fbox{%
            \parbox[c][2.75cm][c]{#1}{%
                \centering\scriptsize
                Figure placeholder:\\[-1mm]
                \detokenize{#2}%
            }%
        }%
    }%
}

\title{
GUIDE-FBO: Guidance via Uncertainty \\
Intervention and Distributional Exchange \\
for Federated Bayesian Optimization
}

\author{
Jintao Wei
\qquad
Chenxi Li
\qquad
Songhao Wang\thanks{
Corresponding author: \texttt{wangsh2021@sustech.edu.cn}
}
\\
College of Business, Southern University of Science and Technology
\\
Shenzhen, China
}

\date{}

\begin{document}

\setcounter{footnote}{1}
\maketitle

\begin{abstract}
Federated Bayesian Optimization (FBO) enables distributed agents to collaboratively optimize expensive black-box objectives without sharing raw local observations. However, effective knowledge transfer remains challenging under communication constraints and task heterogeneity. 
We propose GUIDE-FBO, in which agents exchange compact distributions over the locations of their respective optima inferred from local Gaussian process (GP) posteriors, rather than raw observations, query points, or surrogate parameters.
The server merges and reweights these distributional components before returning a subset to each agent. Each agent then constructs a Federated Interventional GP (FI-GP), which preserves the local posterior mean and spatially rescales its covariance for local decision making.
For the upper confidence bound (UCB) instantiation, GUIDE-UCB, we prove that any bounded FI-GP uncertainty intervention preserves the leading-order cumulative regret rate of standard GP-UCB. When the transferred distributions place greater support near an optimum than in a suboptimal region, selecting the latter requires greater local posterior uncertainty.
Experiments on 12 synthetic benchmarks and three real-world optimization tasks show that GUIDE-FBO remains effective across settings ranging from homogeneous to severely heterogeneous. Ablation results highlight the importance of spatially localized uncertainty intervention, while the communication analysis shows that GUIDE-FBO exchanges only compact distributional messages.
\end{abstract}

\section{Introduction}
Optimizing expensive black-box functions is a ubiquitous challenge across domains ranging from hyperparameter tuning in automated machine learning (AutoML) and drug discovery to advanced manufacturing control. In these settings, the mapping from queries $\mathbf{x}$ to observations $y$ lacks a closed-form expression, and evaluations are costly in terms of time or budget~\citep{shahriari2015taking}.
Bayesian Optimization (BO) is a prominent methodology for such sample-efficient optimization. By employing a probabilistic surrogate model, typically a Gaussian Process (GP)~\citep{seeger2004gaussian}, to approximate the objective function and utilizing an acquisition function to guide the search, BO quantifies uncertainty to strategically balance exploration and exploitation, enabling efficient global optimization within a limited evaluation budget~\citep{wang2023recent}.

Recently, BO has been extended to distributed environments characterized by data silos and privacy concerns, termed Federated Bayesian Optimization (FBO)~\citep{dai2020federated}. We focus on federated multi-task BO. In this context, $N$ agents aim to accelerate the optimization of related but heterogeneous local objective functions $\{f_{n}(\mathbf{x})\}_{n=1}^{N}$ through knowledge transfer, without revealing their raw local observations. This setting arises in dose optimization across multiple clinical centers, where evaluations are costly and safety is critical. Differences among institutions and patient populations require personalized decisions~\citep{o1990continual,yu2015predicting,willard2025bayesian}.
Similar applications appear in federated hyperparameter tuning and robotic controller
optimization~\citep{marco2016automatic,khodak2021federated}.

However, transforming this immense potential into practice presents fundamental challenges. An FBO method must keep raw local observations private while addressing two central issues: (I) Task Heterogeneity: Objective functions can be distinct ($f_n \neq f_m$), meaning blind aggregation risks negative transfer; and (II) Communication Cost: Bandwidth is often limited, necessitating lightweight interaction protocols~\citep{dai2020federated, al2024collaborative}. Existing methods typically trade off these requirements. Federated Thompson Sampling (FTS)~\citep{dai2020federated} and its differentially private distributed exploration variant (DP-FTS-DE)~\citep{dai2021differentially} exchange random feature representations of local GP surrogates, whose communication cost grows with the approximation dimension. Federated Many-Task BO (FMTBO)~\citep{zhu2023federated} communicates lightweight GP hyperparameter information, but such sparse model-level summaries provide limited guidance for local sequential search. Constrained Gaussian Process (CGP) methods~\citep{chen2025multi} instead transfer selected high-potential queries. Under task heterogeneity, however, a point that is promising for one task may not be promising for another, making such point-level transfer fragile. These limitations motivate a fundamental question: What information should be exchanged to enable lightweight and effective knowledge transfer across heterogeneous tasks without exposing raw local observations?

To answer this question, we propose GUIDE-FBO, which uses a distributional representation of \textit{where the agents' optima are likely to lie} as the information carrier. Specifically, each agent transfers a distribution over the location of its optimum, as inferred from its GP posterior. The server receives and merges spatially redundant distributional components, reweights the merged set, and sends a subset of the resulting components to each agent. Each agent then converts the received components into a spatially varying scaling function for its local GP posterior covariance, yielding the Federated Interventional GP (FI-GP) for local BO decision making. The underlying philosophy is to direct each agent toward globally supported promising regions, thereby reducing redundant evaluations and improving sample efficiency across the federated system. Meanwhile, communication costs are lowered by transmitting only compact distribution parameters.

Our main contributions are summarized as follows:
\begin{itemize}
    \item We propose GUIDE-FBO, which exchanges compact distributions over the locations of agents' optima inferred from local GP posteriors. The server merges, reweights, and redistributes these distributions to provide global guidance with lightweight communication.

    \item We introduce FI-GP, which preserves the local GP posterior mean and spatially rescales its covariance according to the received distributions. It supports standard BO acquisition functions. For GUIDE-UCB, we prove the same leading-order cumulative regret rate as standard GP-UCB under bounded uncertainty intervention and show that informative transferred distributions raise the local uncertainty required to select a suboptimal point.

    \item GUIDE-FBO achieves the best average rank across all three heterogeneity levels on 12 synthetic benchmarks and remains effective on three real-world tasks. Ablation and communication analyses further support the proposed mechanism and its communication efficiency.

\end{itemize}
\section{Related Work and Problem Setup}
\label{sec:related_problem}

\subsection{Related Work}
\label{sec:related_work}

We categorize existing federated and collaborative BO methods according to where shared knowledge
enters the optimization pipeline.

The first category preserves a local surrogate fitted exclusively to private observations and uses external knowledge to coordinate query selection. In FTS~\citep{dai2020federated}, agents select query points using their own GPs or GP sample paths transferred from other agents using random Fourier features~\citep{rahimi2007random}. DP-FTS-DE~\citep{dai2021differentially} extends this framework with differential privacy and distributed exploration. Beyond sampled function exchange, \citet{yue2025collaborative} combines agents' local query proposals through a consensus matrix whose weights vary over time, gradually shifting from peer collaboration to personalized search. \citet{liu2025optimization} trains local classifiers on pairwise preferences generated by acquisition functions, aggregates their parameters, and uses the resulting global classifier as an implicit acquisition function.

The second category incorporates shared knowledge into the predictive modeling process used for
acquisition optimization.
FMTBO~\citep{zhu2023federated} shares local GP hyperparameters, estimates task relatedness from
predictive rankings, and combines local and aggregated surrogate predictions through a federated
ensemble acquisition function.
CGP methods~\citep{chen2025multi} receive promising designs from selected collaborators, filter them according to their consistency with local evidence, and use the accepted designs to construct a constrained GP surrogate.
Other methods communicate compact symbolic regression models while retaining local GP uncertainty estimates~\citep{wang2025efficient}, or aggregate local GP surrogates through a Wasserstein
barycenter to form a central predictive model~\citep{zhan2025collaborative}.

GUIDE-FBO differs in how transferred knowledge is integrated into local decision making.
Rather than directly coordinating query proposals or conditioning and combining predictive models, GUIDE preserves the local posterior mean and uses the received distributions only to spatially rescale its covariance for the current decision. The resulting FI-GP serves as an auxiliary decision posterior, while the underlying local GP remains fitted exclusively to private observations.

\subsection{Federated Multi-Task Bayesian Optimization}
\label{sec:problem_setup}

We consider a federated system with a central server and $N$ distributed agents, following the standard federated setting~\citep{mcmahan2017communication}. 
Each agent $n \in \{1,\dots,N\}$ owns a black-box objective function $f_n:\mathcal{X}\rightarrow\mathbb{R}$ over a shared compact domain $\mathcal{X}\subseteq\mathbb{R}^d$. The local objectives are related but may be heterogeneous. For each agent, let $\mathbf{x}_n^*$ denote an optimum of $f_n$ over $\mathcal{X}$, with
\begin{equation*}
\mathbf{x}_n^*
\in
\arg\max_{\mathbf{x}\in\mathcal{X}}
f_n(\mathbf{x}),
\qquad n=1,\dots,N.
\end{equation*}
At the start of round $t\geq1$, agent $n$ has the private observation set $\mathcal{D}_{n,t-1}=\mathcal{D}_{n,0}\cup\{(\mathbf{x}_{n,i},y_{n,i})\}_{i=1}^{t-1}$, where $y_{n,i}=f_n(\mathbf{x}_{n,i})+\epsilon_{n,i}$.
The corresponding pre-query local GP posterior is denoted by $p_{n,t-1}:=p(f_n\mid\mathcal{D}_{n,t-1})=\mathcal{GP}(\mu_{n,t-1},k_{n,t-1})$, with posterior mean $\mu_{n,t-1}$, covariance function $k_{n,t-1}$, and posterior standard deviation $\sigma_{n,t-1}$.
After selecting $\mathbf{x}_{n,t}$ and observing $y_{n,t}$, the local dataset is updated as
$\mathcal{D}_{n,t}=\mathcal{D}_{n,t-1}\cup\{(\mathbf{x}_{n,t},y_{n,t})\}$, yielding the updated local posterior $p_{n,t}$.
Importantly, raw observations remain local and are never shared with the server or other agents. The goal of federated multi-task BO is to establish a collaborative mechanism that exchanges abstract knowledge through the server and accelerates each agent's search for its optimum $\mathbf{x}_n^*$.

\section{Methodology}
\label{sec:methodology}

GUIDE-FBO proceeds in three stages. First, agents upload compact
distributions over the locations of their optima. Second, the server aggregates
and redistributes these distributional components. Third, each agent uses the
received components to construct an FI-GP for local decision making.

At round $t$, GUIDE-FBO constructs the current guidance from the pre-query
local posterior $p_{n,t-1}$. Newly constructed guidance quantities are indexed
by $t$; within a fixed round, the index is omitted when no ambiguity arises.

\subsection{Optimum Distribution Extraction}
\label{sec:local-belief-extraction}

GUIDE-FBO represents transferable knowledge by the posterior distribution over the location of agent $n$'s optimum, denoted by
$p(\mathbf{x}_n^*\mid\mathcal D_{n,t-1})$.
For agent $n$ at round $t$, since $p(\mathbf{x}_n^* \mid \mathcal{D}_{n,t-1})$ induced by $p_{n,t-1}$ is not available in closed form, we approximate it through posterior sampling. Specifically, we draw $M$ posterior sample paths $\{\widehat{f}_{n,t}^{(m)}\}_{m=1}^{M}$ from $p_{n,t-1}$ and maximize each sampled function to obtain candidate optimum locations
$\widehat{\mathbf{x}}_{n,t}^{(m)} \in \arg\max_{\mathbf{x}\in\mathcal{X}} \widehat{f}_{n,t}^{(m)}(\mathbf{x})$.

The resulting set $\mathcal{C}_{n,t} = \{\widehat{\mathbf{x}}_{n,t}^{(m)}\}_{m=1}^{M}$ provides a Monte Carlo approximation of where agent $n$'s optimum may lie. We use orthogonal random features to obtain scalable approximate posterior sample paths, with details deferred to Appendix~\ref{app:orf}. 
To compactly approximate this potentially multimodal distribution over optimum locations, we fit a Dirichlet process Gaussian mixture model (DPGMM)~\citep{rasmussen1999infinite} to the sampled optimum locations:
\[
q_{n,t}(\mathbf{x}^{*})
=
\sum_{k=1}^{K_{n,t}}
\pi_{n,k}
\mathcal{N}\!\left(
\mathbf{x}^{*}
\mid
\boldsymbol{\mu}_{n,k},
\boldsymbol{\Sigma}_{n,k}
\right).
\]
Here, $K_{n,t}$ denotes the number of active mixture components at round $t$. The mixture weight $\pi_{n,k}$ reflects the posterior support assigned to the $k$-th component, while the mean $\boldsymbol{\mu}_{n,k}$ and covariance $\boldsymbol{\Sigma}_{n,k}$ specify its center and spatial spread. This distributional representation converts discrete samples of optimum locations into a compact, transferable summary of promising regions.

To further reduce the communication cost, we use diagonal covariance matrices for the DPGMM components. Under the default uplink budget, GUIDE-FBO uploads only one representative
component, selected as the component with the largest mixture weight, $k_n^{\dagger} \in \arg\max_k \pi_{n,k}$. Figure~\ref{fig:local-module} provides an overview of the complete optimum distribution extraction process.

\begin{figure}[ht]
    \centering
    \includegraphics[width=\linewidth]{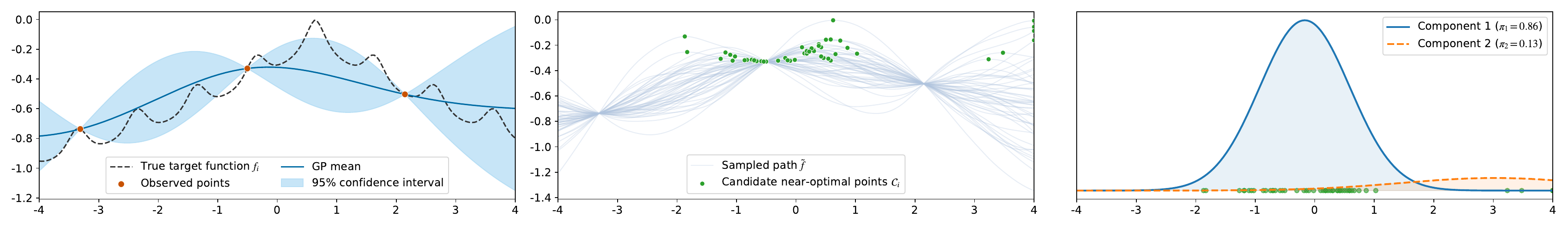}
    \caption{Optimum distribution extraction. A local GP posterior (left) generates candidate optimum locations from posterior sample paths (middle). A DPGMM models their spatial distribution, and its dominant component is selected as the compact distributional message for communication (right).}
    \label{fig:local-module}
\end{figure}

The mixture weight, however, reflects the posterior support assigned to a
component rather than how promising the corresponding region is under the
local surrogate. We therefore attach a conservative value score based on the
local lower confidence bound (LCB),
$\mathrm{LCB}_{n,t}=\mu_{n,t-1}(\boldsymbol{\mu}_{n,k_n^\dagger})-\kappa\sigma_{n,t-1}(\boldsymbol{\mu}_{n,k_n^\dagger}),$
where $\mu_{n,t-1}(\cdot)$ and $\sigma_{n,t-1}(\cdot)$ denote the posterior
mean and standard deviation of the local GP before round $t$, respectively,
and $\kappa>0$ controls the uncertainty penalty. The LCB therefore favors
components with high predicted objective values and low local posterior
uncertainty.

To account for scale differences across heterogeneous local objectives, we standardize the value estimate as $v_{n,t}=(\mathrm{LCB}_{n,t}-\bar{y}_{n,t-1})/\widehat{\sigma}^{\,y}_{n,t-1}$, where $\bar{y}_{n,t-1}$ and $\widehat{\sigma}^{\,y}_{n,t-1}$ are the empirical mean and standard deviation of the local observations available before round $t$. Agent $n$ then uploads
$\Phi_{n,t}=\left(\pi_{n,k_n^{\dagger}},\boldsymbol{\mu}_{n,k_n^{\dagger}},\boldsymbol{\Sigma}_{n,k_n^{\dagger}},v_{n,t}\right)$ to the server.

\subsection{Server-Side Component Merging and Reweighting}
\label{sec:server-merging-reweighting}

At round $t$, the server receives the uploaded components
$\{\Phi_{n,t}\}_{n=1}^{N}$. 
Broadcasting all received components would incur a downlink cost that grows
with the number of agents. GUIDE-FBO therefore returns only a limited subset
to each agent, making it important to avoid spending this budget on spatially
redundant components. 

The server consequently first merges components that represent nearby regions.
Write the received components as $\{(\pi_n,\boldsymbol{\mu}_n,\boldsymbol{\Sigma}_n,v_n)\}_{n=1}^{N}$.
The server measures the distance between two component centers using the
root-mean-square (RMS) Euclidean distance
$d_{\mathrm{RMS}}(\boldsymbol{\mu}_n,\boldsymbol{\mu}_m)
=\sqrt{\|\boldsymbol{\mu}_n-\boldsymbol{\mu}_m\|_2^2/d}$,
where $d=\dim(\mathcal{X})$, and performs complete-linkage clustering using
these centers. Specifically, the distance between two clusters is
$D(\mathcal{I}_a,\mathcal{I}_b)
=\max_{n\in\mathcal{I}_a,\,m\in\mathcal{I}_b}
d_{\mathrm{RMS}}(\boldsymbol{\mu}_n,\boldsymbol{\mu}_m)$.
Starting from singleton clusters, the closest pair is merged whenever
$D(\mathcal{I}_a,\mathcal{I}_b)\leq\delta_{\mathrm{merge}}$,
where $\delta_{\mathrm{merge}}\geq0$ is the spatial merging threshold, until
no eligible pair remains.

Each resulting cluster $\mathcal{I}_\ell$ is represented by a moment-matched Gaussian component:
\begin{equation*}
\pi'_\ell=\sum_{n\in\mathcal{I}_\ell}\pi_n,\qquad\boldsymbol{\mu}'_\ell=\frac{\sum_{n\in\mathcal{I}_\ell}\pi_n\boldsymbol{\mu}_n}{\pi'_\ell},
\qquad
\boldsymbol{\Sigma}'_\ell=\frac{\sum_{n\in\mathcal{I}_\ell}\pi_n\left[\boldsymbol{\Sigma}_n+(\boldsymbol{\mu}_n-\boldsymbol{\mu}'_\ell)(\boldsymbol{\mu}_n-\boldsymbol{\mu}'_\ell)^\top\right]}{\pi'_\ell}.
\end{equation*}
Its value score is aggregated using the same mixture weighted average,
$v'_\ell=(\pi'_\ell)^{-1}\sum_{n\in\mathcal{I}_\ell}\pi_n v_n$.
Under the default diagonal covariance implementation, only the diagonal
entries of $\boldsymbol{\Sigma}'_\ell$ are retained.

Because $\pi'_\ell$ reflects accumulated posterior support but not the
estimated quality of the corresponding region, the server further reweights
the merged components using their value scores as
$\omega_\ell=\pi'_\ell\exp(v'_\ell/\tau_t)/\sum_{j=1}^{L_t}\pi'_j\exp(v'_j/\tau_t)$,
where $\tau_t$ is the adaptive temperature parameter in the current round.
Thus, $\omega_\ell$ balances accumulated posterior support with the
conservative value score. Figure~\ref{fig:server-module} summarizes this
server-side transformation from the received components to the merged and
reweighted global component set.

\begin{figure}[ht]
    \centering
    \includegraphics[width=\linewidth]{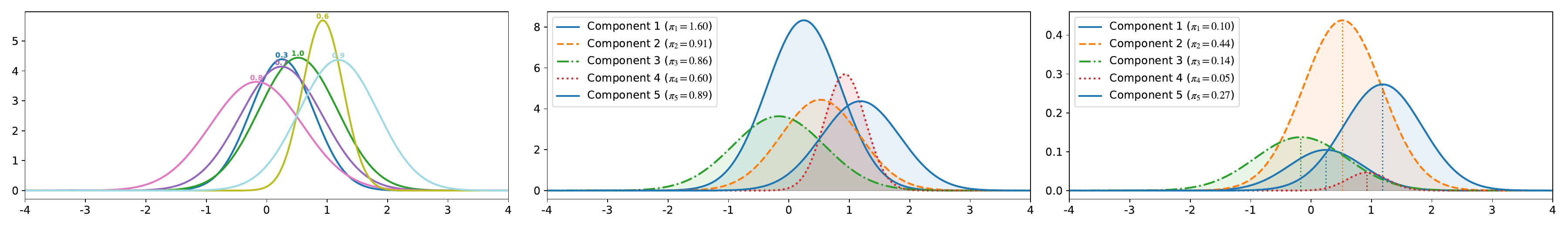}
    \caption{Server-side component processing: received local components (left), moment-matched components after spatial merging (middle), and value-aware reweighted global components (right).}
    \label{fig:server-module}
\end{figure}

The resulting compact global component set is
$\mathcal{B}_t=\{(\omega_\ell,\boldsymbol{\mu}'_\ell,\boldsymbol{\Sigma}'_\ell)\}_{\ell=1}^{L_t}$,
where $L_t\leq N$ and $\omega_\ell$ reflects the global importance of each
merged component as a promising region. Sending the same subset to every agent could concentrate
their parallel searches on the same few regions. To maintain diversity across
agents, the server independently samples up to $P$ components without
replacement for each agent according to $\{\omega_\ell\}_{\ell=1}^{L_t}$,
retaining their global weights. The resulting guidance packet sent to agent
$n$ is
$\Psi_{n,t}=\{(\omega_{n,p},\boldsymbol{\mu}_{n,p},\boldsymbol{\Sigma}_{n,p})\}_{p=1}^{P_t}$,
where $P_t=\min\{P,L_t\}$.

\subsection{Federated Interventional Gaussian Process}
\label{sec:figp}

The final stage integrates the received guidance $\Psi_{n,t}$ into agent
$n$'s local decision making. Because the exchanged components indicate where
promising regions may lie but do not provide objective-value information,
FI-GP preserves the local posterior mean and lets federated guidance act only
through uncertainty. We refer to this as a mean-preserving uncertainty
intervention.

Given
$\Psi_{n,t}=\{(\omega_{n,p},\boldsymbol{\mu}_{n,p},\boldsymbol{\Sigma}_{n,p})\}_{p=1}^{P_t}$,
agent $n$ first constructs a spatial guidance field
\[
G_{n,t}(\mathbf{x})
=
\sum_{p=1}^{P_t}
\omega_{n,p}
\exp\!\left(
-\frac{1}{2}
(\mathbf{x}-\boldsymbol{\mu}_{n,p})^\top
\boldsymbol{\Sigma}_{n,p}^{-1}
(\mathbf{x}-\boldsymbol{\mu}_{n,p})
\right).
\]
Each component contributes most strongly around its center, with
$\omega_{n,p}$ controlling its influence and
$\boldsymbol{\Sigma}_{n,p}$ determining its spatial extent.
$G_{n,t}$ is a nonnegative guidance field rather than a probability density.
Because each Gaussian-shaped term lies in $(0,1]$ and the retained global
weights satisfy $\sum_{p=1}^{P_t}\omega_{n,p}\leq1$, we have
$0\leq G_{n,t}(\mathbf{x})\leq1$.

We then convert this spatial guidance field into a spatially varying uncertainty scaling
function,
$S_{n,t}(\mathbf{x})=1+\lambda_tG_{n,t}(\mathbf{x})$,
where $\lambda_t=\lambda_{\max}/\sqrt{t}$ controls the intervention strength.
It follows that
$1\leq S_{n,t}(\mathbf{x})\leq1+\lambda_t$, so the intervention can only
expand local posterior uncertainty.
The $1/\sqrt{t}$ decay gives federated guidance greater
influence early in optimization and gradually reduces it as local evidence
accumulates.

Let $\mu_{n,t-1}(\mathbf{x})$ and
$k_{n,t-1}(\mathbf{x},\mathbf{x}')$
denote the mean and covariance functions of the local GP posterior before
round $t$. FI-GP keeps the local posterior mean unchanged and spatially
rescales its covariance to form the round-$t$ decision posterior:
\[
\widetilde{\mu}_{n,t}(\mathbf{x})
=
\mu_{n,t-1}(\mathbf{x}),
\qquad
\widetilde{k}_{n,t}(\mathbf{x},\mathbf{x}')
=
S_{n,t}(\mathbf{x})
k_{n,t-1}(\mathbf{x},\mathbf{x}')
S_{n,t}(\mathbf{x}').
\]
This rescaling preserves positive semidefiniteness and hence defines a valid
GP covariance, as shown in Appendix~\ref{app:figp-scaling}. We denote the
resulting FI-GP decision posterior by $\widetilde{p}_{n,t}$. 
Accordingly, its
standard deviation satisfies
$\widetilde{\sigma}_{n,t}(\mathbf{x})
=S_{n,t}(\mathbf{x})\sigma_{n,t-1}(\mathbf{x})$.
This decision posterior is used only for query selection; the fitted local GP
remains unchanged and continues to be updated solely from local observations.
Figure~\ref{fig:figp-module} illustrates this mean-preserving uncertainty
intervention.

\begin{figure}[h]
    \centering
    \includegraphics[width=0.46\linewidth]{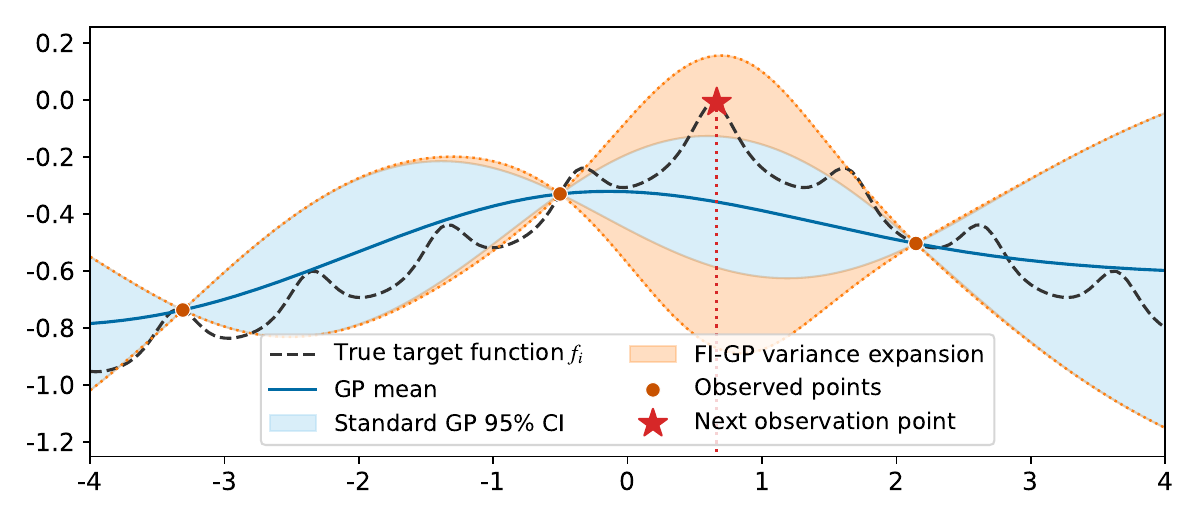}
    \caption{Federated interventional decision making. FI-GP preserves the local posterior mean while expanding uncertainty around regions supported by federated guidance to guide the next query.}
    \label{fig:figp-module}
\end{figure}

Importantly, the absolute effect of federated guidance is further modulated by local
posterior uncertainty:
$
\widetilde{\sigma}_{n,t}(\mathbf{x})
-\sigma_{n,t-1}(\mathbf{x})
=
\lambda_t G_{n,t}(\mathbf{x})
\sigma_{n,t-1}(\mathbf{x}).
$
Thus, for the same guidance intensity, the intervention has a smaller
absolute effect where the local GP is already more certain.

For acquisition-based rules such as UCB and noisy expected improvement (NEI),
agent $n$ selects
$\mathbf{x}_{n,t}\in\arg\max_{\mathbf{x}\in\mathcal{X}}
\alpha(\mathbf{x};\widetilde{p}_{n,t})$.
For Thompson sampling (TS), the agent applies the same uncertainty scaling to
a local posterior sample path and optimizes the resulting path. Thus, the same
uncertainty intervention can be combined with different BO decision rules.

The complete GUIDE-FBO procedure is summarized in
Algorithm~\ref{alg:guide-fbo} in Appendix~\ref{app:algorithm}.

\section{Theoretical Analysis}
\label{sec:theory}

We analyze GUIDE-UCB under two guidance regimes. First, under arbitrary
bounded guidance, including inaccurate or misleading guidance, we establish
whether GUIDE-UCB preserves the regret rate of standard GP-UCB. Second, under
informative guidance, we characterize the local posterior uncertainty that a
suboptimal point must retain to be selected.

For agent $n$, let $r_{n,t}=f_n(\mathbf{x}_n^*)-f_n(\mathbf{x}_{n,t})$ denote its instantaneous regret and $R_{n,T}=\sum_{t=1}^{T}r_{n,t}$ its cumulative regret. Using the FI-GP uncertainty scaling defined in Section~\ref{sec:figp}, GUIDE-UCB selects
\begin{equation}
\mathbf{x}_{n,t}
\in
\arg\max_{\mathbf{x}\in\mathcal X}
\mu_{n,t-1}(\mathbf{x})
+
\sqrt{\beta_t}
\bigl(
1+\lambda_tG_{n,t}(\mathbf{x})
\bigr)
\sigma_{n,t-1}(\mathbf{x}).
\label{eq:guide-ucb-rule}
\end{equation}
Our analysis relies on the standard simultaneous GP confidence event used in GP-UCB and kernelized bandit analyses~\citep{srinivas2010gaussian,chowdhury2017kernelized}. 
The guidance field is uniformly bounded as
$0\leq G_{n,t}(\mathbf{x})\leq G_{\max}$, with $G_{\max}=1$ being a valid
bound under the normalized weighting used in GUIDE-FBO; see
Remark~\ref{rem:bounded-guidance}. Formal confidence conditions, auxiliary results, and complete proofs are provided in Appendix~\ref{app:theory}.

\begin{theorem}[Cumulative regret under arbitrary bounded guidance]
\label{thm:hetero-regret}
Under Assumption~\ref{assump:local-confidence}, suppose that the standard cumulative posterior variance bound in Lemma~\ref{lem:info-gain} holds,
$\sum_{t=1}^{T}
\sigma_{n,t-1}^{2}(\mathbf{x}_{n,t})
\leq
C_{\gamma}\gamma_{n,T}$,
where $\gamma_{n,T}$ is the standard maximum information-gain (log-determinant) complexity term associated with the local kernel of agent $n$. If $\lambda_t=\lambda_{\max}/\sqrt{t}$ for some fixed $\lambda_{\max}\geq0$, then, with probability at least $1-\delta$, every agent $n$ satisfies
\begin{equation}
R_{n,T}
\leq
\sqrt{\beta_T C_{\gamma}\gamma_{n,T}}
\sqrt{
4T
+
4\lambda_{\max}G_{\max}(2\sqrt{T}-1)
+
\lambda_{\max}^{2}G_{\max}^{2}(1+\log T)
}.
\label{eq:hetero-regret-bound}
\end{equation}
Consequently,
$R_{n,T}
=
\mathcal{O}\!\left(
\sqrt{T\beta_T\gamma_{n,T}}
\right)$.
\end{theorem}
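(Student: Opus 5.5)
The plan is to follow the standard GP-UCB argument, carrying the scaling factor $S_{n,t}(\mathbf{x})=1+\lambda_tG_{n,t}(\mathbf{x})\in[1,1+\lambda_tG_{\max}]$ through the instantaneous regret bound. First, I work on the simultaneous confidence event of Assumption~\ref{assump:local-confidence}, which holds with probability at least $1-\delta$ for all agents, rounds, and points. On this event $|f_n(\mathbf{x})-\mu_{n,t-1}(\mathbf{x})|\leq\sqrt{\beta_t}\sigma_{n,t-1}(\mathbf{x})$. Since $S_{n,t}\geq 1$, the GUIDE-UCB index in \eqref{eq:guide-ucb-rule} is still an upper confidence bound:
\[
f_n(\mathbf{x}_n^*)\leq\mu_{n,t-1}(\mathbf{x}_n^*)+\sqrt{\beta_t}S_{n,t}(\mathbf{x}_n^*)\sigma_{n,t-1}(\mathbf{x}_n^*).
\]
By the maximizing property of $\mathbf{x}_{n,t}$, this is at most $\mu_{n,t-1}(\mathbf{x}_{n,t})+\sqrt{\beta_t}S_{n,t}(\mathbf{x}_{n,t})\sigma_{n,t-1}(\mathbf{x}_{n,t})$. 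Subtracting the lower confidence bound at $\mathbf{x}_{n,t}$ gives
\[
r_{n,t}\leq\sqrt{\beta_t}\bigl(1+S_{n,t}(\mathbf{x}_{n,t})\bigr)\sigma_{n,t-1}(\mathbf{x}_{n,t})\leq\sqrt{\beta_t}\,(2+\lambda_tG_{\max})\,\sigma_{n,t-1}(\mathbf{x}_{n,t}).
\]
This is the only place where guidance enters. Because the bound uses only $0\leq G_{n,t}\leq G_{\max}$, it holds for arbitrary, even misleading, guidance.

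Second, I sum over $t$. Using monotonicity $\beta_t\leq\beta_T$ and Cauchy--Schwarz,
\[
R_{n,T}\leq\sqrt{\beta_T}\Bigl(\sum_{t=1}^T(2+\lambda_tG_{\max})^2\Bigr)^{1/2}\Bigl(\sum_{t=1}^T\sigma_{n,t-1}^2(\mathbf{x}_{n,t})\Bigr)^{1/2}.
\]
The second factor is at most $\sqrt{C_\gamma\gamma_{n,T}}$ by the assumed variance bound of Lemma~\ref{lem:info-gain}.

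Third, I expand the first factor using $\lambda_t=\lambda_{\max}/\sqrt{t}$:
\[
\sum_t(2+\lambda_tG_{\max})^2=4T+4\lambda_{\max}G_{\max}\sum_t t^{-1/2}+\lambda_{\max}^2G_{\max}^2\sum_t t^{-1}.
\]
The integral comparisons $\sum_{t=1}^T t^{-1/2}\leq 1+\int_1^T s^{-1/2}\,ds=2\sqrt{T}-1$ and $\sum_{t=1}^T t^{-1}\leq 1+\log T$ then give exactly \eqref{eq:hetero-regret-bound}. The asymptotic statement follows because the radicand is $4T+\mathcal{O}(\sqrt{T})+\mathcal{O}(\log T)=\mathcal{O}(T)$ for fixed $\lambda_{\max}$ and $G_{\max}$.

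The main obstacle is ensuring the confidence event and the $\beta_t$ schedule are stated so that $\beta_t$ is nondecreasing and the event holds uniformly over all agents simultaneously. This may need a union bound over $n$ folded into $\beta_t$, and I will check how Assumption~\ref{assump:local-confidence} packages it. A second subtle point is that the event must be with respect to the \emph{local} posterior $\mu_{n,t-1},\sigma_{n,t-1}$ and not the FI-GP. This is fine because the local GP is updated only from local data and guidance affects only which points are queried, so the standard self-normalized or Bayesian confidence arguments, which are valid for adaptively chosen queries, still apply.
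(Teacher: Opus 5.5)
Your proposal is correct and follows essentially the same route as the paper: the one-step bound $r_{n,t}\le\sqrt{\beta_t}\,(2+\lambda_tG_{\max})\,\sigma_{n,t-1}(\mathbf{x}_{n,t})$, then $\beta_t\le\beta_T$, Cauchy--Schwarz against the cumulative variance bound, and the same estimates $\sum_t t^{-1/2}\le 2\sqrt{T}-1$ and $\sum_t t^{-1}\le 1+\log T$. The only difference is cosmetic: you use $S_{n,t}(\mathbf{x}_n^*)\ge1$ immediately to make the GUIDE-UCB index optimistic, whereas the paper keeps the nonpositive term $-\lambda_tG_{n,t}(\mathbf{x}_n^*)\sigma_{n,t-1}(\mathbf{x}_n^*)$ in Lemma~\ref{lem:instant-regret}, so it can be reused for Proposition~\ref{prop:effective-threshold}, and discards it only afterward; the concerns in your last paragraph are already covered by Assumption~\ref{assump:local-confidence}, which posits a nondecreasing $\beta_t$ and an event holding simultaneously for all agents with respect to the local posterior.
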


Theorem~\ref{thm:hetero-regret} shows that GUIDE-UCB preserves the leading-order cumulative regret rate of standard GP-UCB under arbitrary bounded guidance. In Eq.~\ref{eq:hetero-regret-bound}, the guidance-dependent terms grow only as $\sqrt{T}$ and $\log T$, compared with the leading $T$ term inside the square root. Thus, bounded guidance affects
the regret bound only through lower-order terms. A fixed intervention strength $\lambda_t\equiv\lambda$ also preserves the same leading-order regret rate up to a constant factor; see Appendix~\ref{app:constant-intervention}.

\begin{proposition}[Selection threshold under informative guidance]
\label{prop:effective-threshold}
Under Assumption~\ref{assump:local-confidence}, fix an agent $n$ and round $t$, and suppose that $\lambda_t\ge0$. If GUIDE-UCB selects a suboptimal point $\mathbf{x}_{n,t}$ with $\Delta_n(\mathbf{x}_{n,t}):=f_n(\mathbf{x}_n^*)-f_n(\mathbf{x}_{n,t})>0$, then
\begin{equation}
    \sigma_{n,t-1}(\mathbf{x}_{n,t})
    \ge
    \frac{
        \Delta_n(\mathbf{x}_{n,t})
        +
        \lambda_t\sqrt{\beta_t}
        G_{n,t}(\mathbf{x}_n^*)\sigma_{n,t-1}(\mathbf{x}_n^*)
    }{
        \sqrt{\beta_t}
        \bigl(2+\lambda_tG_{n,t}(\mathbf{x}_{n,t})\bigr)
    }.
    \label{eq:exact-threshold}
\end{equation}
If, in addition, $G_{n,t}(\mathbf{x}_n^*)\ge c$ and $G_{n,t}(\mathbf{x}_{n,t})\le\varepsilon_G$ for some $c>0$ and $\varepsilon_G\ge0$, then under $\lambda_t=\lambda_{\max}/\sqrt t$ with $\lambda_{\max}\ge0$,
\begin{equation*}
    \sigma_{n,t-1}(\mathbf{x}_{n,t})
    \ge
    \frac{
        \Delta_n(\mathbf{x}_{n,t})
        +
        \frac{\lambda_{\max}c}{\sqrt t}
        \sqrt{\beta_t}
        \sigma_{n,t-1}(\mathbf{x}_n^*)
    }{
        \sqrt{\beta_t}
        \left(
            2+\frac{\lambda_{\max}\varepsilon_G}{\sqrt t}
        \right)
    }.
\end{equation*}
\end{proposition}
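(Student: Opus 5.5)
The plan is to compare the GUIDE-UCB score of the selected point with that of the optimum, and then sandwich both scores with the confidence bounds of Assumption~\ref{assump:local-confidence}. Write $S_{n,t}(\mathbf{x})=1+\lambda_tG_{n,t}(\mathbf{x})$. On the confidence event, for every $\mathbf{x}\in\mathcal X$ we have $|f_n(\mathbf{x})-\mu_{n,t-1}(\mathbf{x})|\le\sqrt{\beta_t}\,\sigma_{n,t-1}(\mathbf{x})$. The bound must hold in particular at $\mathbf{x}_n^*$ and at the selected point $\mathbf{x}_{n,t}$. If the assumption is stated only on a discretization, I would apply it at these two points; the exact form of Assumption~\ref{assump:local-confidence} should make this direct.

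First, the selection rule \eqref{eq:guide-ucb-rule} gives
\[
\mu_{n,t-1}(\mathbf{x}_{n,t})+\sqrt{\beta_t}S_{n,t}(\mathbf{x}_{n,t})\sigma_{n,t-1}(\mathbf{x}_{n,t})\ge \mu_{n,t-1}(\mathbf{x}_n^*)+\sqrt{\beta_t}S_{n,t}(\mathbf{x}_n^*)\sigma_{n,t-1}(\mathbf{x}_n^*).
\]
Second, I lower bound the right-hand side. Using $\mu_{n,t-1}(\mathbf{x}_n^*)\ge f_n(\mathbf{x}_n^*)-\sqrt{\beta_t}\sigma_{n,t-1}(\mathbf{x}_n^*)$, the right-hand side is at least
\[
f_n(\mathbf{x}_n^*)+\lambda_t\sqrt{\beta_t}\,G_{n,t}(\mathbf{x}_n^*)\sigma_{n,t-1}(\mathbf{x}_n^*).
\]
The unit part of $S_{n,t}(\mathbf{x}_n^*)$ cancels exactly against the confidence width, and only the guidance bonus at the optimum remains. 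Third, I upper bound the left-hand side. Using $\mu_{n,t-1}(\mathbf{x}_{n,t})\le f_n(\mathbf{x}_{n,t})+\sqrt{\beta_t}\sigma_{n,t-1}(\mathbf{x}_{n,t})$, the left-hand side is at most
\[
f_n(\mathbf{x}_{n,t})+\sqrt{\beta_t}\bigl(2+\lambda_tG_{n,t}(\mathbf{x}_{n,t})\bigr)\sigma_{n,t-1}(\mathbf{x}_{n,t}).
\]
Combining the two bounds and moving $f_n(\mathbf{x}_{n,t})$ to the other side gives $\Delta_n(\mathbf{x}_{n,t})+\lambda_t\sqrt{\beta_t}G_{n,t}(\mathbf{x}_n^*)\sigma_{n,t-1}(\mathbf{x}_n^*)\le\sqrt{\beta_t}(2+\lambda_tG_{n,t}(\mathbf{x}_{n,t}))\sigma_{n,t-1}(\mathbf{x}_{n,t})$. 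Since $\lambda_t\ge0$ and $G_{n,t}\ge0$, the factor $2+\lambda_tG_{n,t}(\mathbf{x}_{n,t})$ is at least $2>0$, and $\beta_t>0$. Dividing through therefore yields \eqref{eq:exact-threshold}.

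For the second display, I substitute $\lambda_t=\lambda_{\max}/\sqrt t$ and argue by monotonicity. The numerator is nondecreasing in $G_{n,t}(\mathbf{x}_n^*)$, because its coefficient $\lambda_t\sqrt{\beta_t}\sigma_{n,t-1}(\mathbf{x}_n^*)$ is nonnegative. Replacing $G_{n,t}(\mathbf{x}_n^*)$ by its lower bound $c$ can only decrease the numerator. The denominator is nondecreasing in $G_{n,t}(\mathbf{x}_{n,t})$, so replacing it by its upper bound $\varepsilon_G$ can only decrease the denominator. Both replacements therefore weaken the right-hand side, which remains a valid lower bound. The numerator itself is nonnegative, since $\Delta_n>0$, so shrinking the denominator does not flip the direction of the inequality.

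The main obstacle is bookkeeping rather than depth. The confidence event must be used at exactly the two points $\mathbf{x}_n^*$ and $\mathbf{x}_{n,t}$, and with the round-$t$ parameter $\beta_t$ matching the selection rule. One must also track the asymmetry that produces the factor $2$: the width cancels at the optimum but doubles at the selected point. I would state clearly that the bound holds on the high-probability event of Assumption~\ref{assump:local-confidence}.
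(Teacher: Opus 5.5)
Your proof is correct and follows the paper's route. The paper packages your three steps as Lemma~\ref{lem:instant-regret}: the argmax comparison against $\mathbf{x}_n^*$, the confidence lower bound at $\mathbf{x}_n^*$, and the confidence upper bound at $\mathbf{x}_{n,t}$. It then rearranges, divides by the positive factor $\sqrt{\beta_t}\bigl(2+\lambda_tG_{n,t}(\mathbf{x}_{n,t})\bigr)$, and substitutes $c$ and $\varepsilon_G$.

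There is one sign slip in your justification of the second display. Replacing $G_{n,t}(\mathbf{x}_{n,t})$ by its upper bound $\varepsilon_G$ \emph{enlarges} the denominator; it does not shrink it. Because the numerator is positive, enlarging the denominator lowers the fraction. Your conclusion that the bound is weakened and remains valid is therefore right, but the stated reason is backwards: shrinking the denominator would strengthen the bound and is not justified.
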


Proposition~\ref{prop:effective-threshold} gives a necessary local
uncertainty condition for selecting a suboptimal point under informative
guidance. When the guidance is large near the agent's optimum and small at a suboptimal candidate, that candidate can be selected only if it retains sufficiently large local posterior uncertainty.
In particular, when $\varepsilon_G$ is negligible, the lower bound on the required uncertainty exceeds the standard GP-UCB counterpart $\Delta_n(\mathbf{x}_{n,t})/(2\sqrt{\beta_t})$ by approximately $\lambda_{\max}c\,\sigma_{n,t-1}(\mathbf{x}_n^*)/(2\sqrt t)$.
Thus, when guidance favors the optimum over a suboptimal candidate, the latter can be selected only if it remains sufficiently uncertain under the local GP.

\section{Experiments}
\label{sec:experiments}

\subsection{Experimental Setup}
\label{sec:exp_setup}

\paragraph{Baselines.}
We compare GUIDE-FBO against nine competitive methods from three categories.
The first category contains non-collaborative BO methods, including TS~\citep{thompson1933likelihood}, UCB~\citep{srinivas2010gaussian}, and NEI~\citep{letham2019constrained,balandat2020botorch}.
The second category contains classical FBO methods, including FTS~\citep{dai2020federated} and its distributed exploration variant FTS-DE~\citep{dai2021differentially}.
The third category contains recent federated and collaborative BO methods, including FMTBO~\citep{zhu2023federated} and the CGP family (CGP-TS, CGP-UCB, and CGP-NEI)~\citep{chen2025multi}.
We instantiate GUIDE-FBO with the same three local acquisition rules, yielding GUIDE-TS, GUIDE-UCB, and GUIDE-NEI. These variants use the same distributional exchange and uncertainty intervention mechanism and differ only in the local acquisition function.

\paragraph{Benchmarks and heterogeneity.}
We evaluate all methods on 12 synthetic benchmark functions and three
real-world tasks. Landmine Detection~\citep{xue2007multi} uses 29 landmine
fields as agents that tune support vector machines with radial basis function
kernels; Activity Recognition~\citep{anguita2013public} uses 30 subjects as
agents that tune logistic-regression classifiers; and FedHPO-Bench
(FedHPO)~\citep{wang2023fedhpo} uses Cora, CiteSeer, and PubMed as three
graph convolutional network hyperparameter-optimization tasks evaluated with
the official surrogates.
For each synthetic benchmark and experimental run, the local objectives are
generated from a common base function using independently sampled random
shifts and rotations for different agents. The same realized objectives are
used by all compared methods within each run. Level~1 uses no transformation;
Level~2 uses $(\delta_{\mathrm{shift}},\delta_{\mathrm{rot}})=(0.05,0.1)$;
and Level~3 uses $(0.3,1.0)$, corresponding to homogeneous, mild, and severe
heterogeneity, respectively. Complete task construction and heterogeneity
details are provided in Appendix~\ref{app:heterogeneity}.

\paragraph{Protocol and metrics.}
All experiments use 30 initial evaluations per agent and 50 BO rounds and
are repeated over 10 independent runs. Synthetic experiments use $N=16$
agents with Gaussian observation noise of standard deviation $0.1$, while
the real-world tasks use their natural numbers of agents.
For synthetic benchmarks, we report simple regret averaged across agents.
For Landmine Detection, we report the area under the receiver operating
characteristic curve (AUC), while Activity Recognition and FedHPO use
validation accuracy. Higher values are better for all real-world metrics.
Complete benchmark definitions and implementation details are provided in
Appendix~\ref{app:benchmarks}.

\subsection{Main Results}
\label{sec:main_results}

\paragraph{Synthetic benchmarks.}
Table~\ref{tab:synthetic_average_rank} shows that GUIDE achieves strong
aggregate performance across all three heterogeneity levels. GUIDE-UCB obtains
the best average rank over the 12 synthetic benchmarks at Levels~1--3, with
ranks of $1.33$, $1.58$, and $2.58$, respectively, while GUIDE-NEI ranks
second with $2.42$, $2.17$, and $3.42$.
Matched acquisition comparisons further isolate the contribution of GUIDE
itself: GUIDE-UCB improves upon UCB on $12/12$, $11/12$, and $8/12$
benchmarks from Levels~1--3; GUIDE-NEI improves upon NEI on $12/12$,
$11/12$, and $9/12$; and GUIDE-TS improves upon TS on $12/12$, $12/12$,
and $8/12$. These results show that the benefit of GUIDE extends across different acquisition rules
rather than being specific to one of them.

At Levels~1 and~2, where the empirical mean pairwise Spearman correlations
of objective rankings are $1.000$ and $0.557$, respectively, GUIDE-UCB and
GUIDE-NEI frequently reach lower-regret regions earlier and maintain favorable
convergence trajectories as more local observations are collected.
This pattern reflects the mechanism of GUIDE: FI-GP increases local posterior
uncertainty around regions supported by the transferred distributions,
allowing them to be explored earlier when they remain plausible under the
local posterior.

At Level~3, the mean pairwise Spearman correlation of objective rankings
decreases to $0.081$, indicating that useful information is substantially
less transferable across agents. GUIDE-UCB and GUIDE-NEI nevertheless retain
the first and second best average ranks, although their advantages become
smaller and more task dependent. Two design features limit the influence of
mismatched guidance. First, the intervention strength decays over time, so
federated guidance gradually gives way to local evidence. Second, the
uncertainty increment is proportional to local posterior uncertainty, so
guidance has little effect where the local GP is already confident. Negative
transfer can nevertheless occur on individual tasks under severe
heterogeneity. Detailed results and convergence analyses are provided in
Appendix~\ref{app:full-results}.

\begin{table*}[ht]
    \caption{Average rank across 12 synthetic benchmarks under three heterogeneity levels, computed from the final average simple regret over 10 independent runs. Lower is better; best and second-best results are shown in bold and underlined.}
    \label{tab:synthetic_average_rank}
    \centering
    \setlength{\tabcolsep}{3.1pt}
    \renewcommand{\arraystretch}{1.08}
    \scriptsize
    \resizebox{\textwidth}{!}{%
    \begin{tabular}{lcccccccccccc}
    \toprule
    \textbf{Heterogeneity}
    & \textbf{TS}
    & \textbf{UCB}
    & \textbf{NEI}
    & \textbf{FTS}
    & \textbf{FTS-DE}
    & \textbf{FMTBO}
    & \textbf{CGP-TS}
    & \textbf{CGP-UCB}
    & \textbf{CGP-NEI}
    & \textbf{GUIDE-TS}
    & \textbf{GUIDE-UCB}
    & \textbf{GUIDE-NEI} \\
    \midrule
    Level 1: $(0,0)$       & 11.33 & 7.00 & 5.50 & 10.42 & 9.83 & 5.75 & 10.17 & 4.25 & 2.75 & 7.25 & \best{1.33} & \second{2.42} \\
    Level 2: $(0.05,0.1)$  & 10.67 & 6.17 & 5.33 & 10.25 & 10.75 & 5.58 & 10.25 & 4.67 & 3.08 & 7.50 & \best{1.58} & \second{2.17} \\
    Level 3: $(0.3,1.0)$   & 9.17 & 4.83 & 5.42 & 11.08 & 10.33 & 3.67 & 8.33 & 4.25 & 6.75 & 8.17 & \best{2.58} & \second{3.42} \\
    \bottomrule
    \end{tabular}%
    }
\end{table*}

\paragraph{Real-world tasks.}
Table~\ref{tab:realworld_results} shows that a GUIDE variant achieves the
highest final mean performance on each real-world task, although the
differences among the leading methods are small on Activity Recognition and
FedHPO. GUIDE-UCB obtains the highest AUC on Landmine Detection, while
GUIDE-NEI obtains the highest validation accuracy on Activity Recognition
and FedHPO and the best average rank of $1.67$. Moreover, GUIDE-UCB has higher
final mean performance than UCB, and GUIDE-NEI than NEI, on all three tasks.
By contrast, GUIDE-TS improves over TS only on Landmine Detection and has an
average rank of $9.67$ across the three tasks. Complete convergence
trajectories are provided in Appendix~\ref{app:realworld_curves}.

\begin{table*}[ht]
    \caption{
    Results on three real-world federated optimization benchmarks, averaged over 10 independent runs. Higher task performance and lower average rank are better; best and second-best results are shown in bold and underlined.
    }
    \label{tab:realworld_results}
    \centering
    \setlength{\tabcolsep}{3.0pt}
    \renewcommand{\arraystretch}{1.08}
    \scriptsize
    \resizebox{\textwidth}{!}{%
    \begin{tabular}{lcccccccccccc}
    \toprule
    \textbf{Task}
    & \textbf{TS}
    & \textbf{UCB}
    & \textbf{NEI}
    & \textbf{FTS}
    & \textbf{FTS-DE}
    & \textbf{FMTBO}
    & \textbf{CGP-TS}
    & \textbf{CGP-UCB}
    & \textbf{CGP-NEI}
    & \textbf{GUIDE-TS}
    & \textbf{GUIDE-UCB}
    & \textbf{GUIDE-NEI} \\
    \midrule

    Landmine Detection (AUC)    & 0.803987 & 0.805593 & 0.805508 & 0.804052 & 0.801349 & 0.805970 
                                & 0.805247 & \second{0.806501} & 0.805218 & 0.805001 & \best{0.806604} 
                                & 0.806452 \\

    Activity Recognition (Acc.)   & 0.987787 & 0.987821 & 0.987804 & 0.987800 & 0.987739
                                & 0.987891 & 0.987701 & 0.987959 & \second{0.987990}
                                & 0.987754 & 0.987938 & \best{0.987994} \\

    FedHPO (Acc.)   & 0.850968 & 0.850973 & 0.851145 & 0.850439 & 0.851098 & 0.851049
                    & 0.850766 & \second{0.851262} & 0.851068 & 0.850853 & 0.851178 & \best{0.851271} \\

    \midrule
    \textbf{Average rank}   & 9.67 & 6.33 & 5.67 & 10.00 & 9.33 & 5.33 & 10.00 & \second{2.33}
                            & 5.33 & 9.67 & 2.67 & \best{1.67} \\

    \bottomrule
    \end{tabular}%
    }
\end{table*}

\subsection{Ablation and Communication Efficiency}
\label{sec:ablation_communication}

\paragraph{Ablation study.}
Table~\ref{tab:main_ablation} examines the contribution of the main components of GUIDE-UCB. Under Level~2 heterogeneity, Full GUIDE-UCB achieves the best average rank of $1.50$ and remains in the top two on all four representative benchmarks. The largest degradation occurs when the spatially varying uncertainty intervention is replaced by uniform uncertainty scaling: its average rank increases to $6.25$, close to Vanilla UCB at $6.75$. The absolute results show the same pattern. On
Ackley, Full GUIDE-UCB, Uniform uncertainty scaling, and Vanilla UCB obtain $8.4785$, $19.1366$, and $19.4350$, respectively; on Rastrigin, the corresponding values are $58.3082$, $80.3952$, and $80.4771$. These results indicate that, among the tested Level~2 ablations, the spatial localization of the uncertainty intervention is the strongest factor distinguishing GUIDE-UCB from a uniform increase in posterior uncertainty.

The effects of the remaining modules are smaller and more task dependent. Multimodal belief modeling, value-aware reweighting, component merging, and agent-specific sampling contribute to the
aggregate performance of the complete method, although individual ablations can outperform Full GUIDE-UCB on particular functions. Appendix~\ref{app:ablation} further shows that the relative importance of these components changes under stronger heterogeneity. At Level~3, Full GUIDE-UCB retains the best aggregate rank of $2.50$, whereas removing value-aware reweighting gives the worst rank of $5.25$. This suggests that value-aware server reweighting becomes more important when cross-agent agreement is weak.

\begin{table*}[ht]
    \caption{Ablation study of GUIDE-UCB on four synthetic benchmarks under Level~2 heterogeneity. Final average simple regret over 10 runs is reported; lower is better. The best and second-best results in each row are shown in bold and underlined, respectively.}
    \label{tab:main_ablation}
    \centering
    \setlength{\tabcolsep}{3.0pt}
    \renewcommand{\arraystretch}{1.08}
    \scriptsize
    \resizebox{\textwidth}{!}{%
    \begin{tabular}{lccccccc}
    \toprule
    \textbf{Benchmark}
    & \shortstack{\textbf{Vanilla}\\\textbf{UCB}}
    & \shortstack{\textbf{Single-Gaussian}\\\textbf{belief}}
    & \shortstack{\textbf{w/o value-aware}\\\textbf{reweighting}}
    & \shortstack{\textbf{w/o component}\\\textbf{merging}}
    & \shortstack{\textbf{w/o agent-specific}\\\textbf{sampling}}
    & \shortstack{\textbf{Uniform uncertainty}\\\textbf{scaling}}
    & \shortstack{\textbf{Full}\\\textbf{GUIDE-UCB}} \\
    \midrule
    Ackley      & 19.4350 & 8.6510 & \best{8.4753} & 8.5349 & 8.4855 & 19.1366 & \second{8.4785}\\
    Rastrigin   & 80.4771 & 60.2355 & \second{59.1892} & 59.9796 & 60.0268 & 80.3952 & \best{58.3082} \\
    Zakharov    & 49.8750 & 25.5947 & 22.9307 & 25.3701 & \best{21.5431} & 52.0749 & \second{22.1717} \\
    Michalewicz & 6.1248 & 5.9494 & 5.9814 & 5.9650 & \second{5.8771} & 6.1078 & \best{5.8629} \\
    \midrule
    \textbf{Average rank}
                & 6.75 & 4.50 & 2.75 & 3.75 & \second{2.50} & 6.25 & \best{1.50} \\
    \bottomrule
    \end{tabular}%
    }
\end{table*}

\paragraph{Communication efficiency.}
Table~\ref{tab:main_communication} shows that GUIDE-FBO obtains its optimization gains with a compact communication message. Under the default synthetic setting with $d=10$ and $P=5$, GUIDE-FBO communicates at most $127$ scalars per agent per BO round, compared with $212$ for CGP, $1000$ for FTS, and $4000$ for FTS-DE. FMTBO has the lowest communication cost among the collaborative baselines, requiring only $14$ scalars per round. This lower communication volume, however, is accompanied by weaker aggregate optimization performance than GUIDE-UCB on the synthetic benchmarks, while both GUIDE-UCB and GUIDE-NEI achieve higher final mean performance than FMTBO on all three real-world tasks.
Overall, GUIDE provides a favorable trade-off between preserving spatial information about promising regions and limiting communication cost.

\begin{table}[ht]
    \caption{Communication cost per agent per round for $d=10$. GUIDE-FBO uses diagonal covariance and downloads at most $P=5$ components per round.}
    \label{tab:main_communication}
    \centering
    \small
    \setlength{\tabcolsep}{3pt}
    \renewcommand{\arraystretch}{1.08}
    \begin{tabular}{lccccc}
    \toprule
    & \textbf{FTS}
    & \textbf{FTS-DE}
    & \textbf{FMTBO}
    & \textbf{CGP}
    & \textbf{GUIDE} \\
    \midrule
    \textbf{Uplink}   & 500  & 2000 & 12 & 12  & 22 \\
    \textbf{Downlink} & 500  & 2000 & 2  & 200 & $\leq 105$ \\
    \textbf{Total}    & 1000 & 4000 & 14 & 212 & $\leq 127$ \\
    \bottomrule
    \end{tabular}
\end{table}

\subsection{Additional Analyses}
\label{sec:additional_analyses}

The additional experiments further examine the sensitivity and computational efficiency of GUIDE. 
The Level~3 sensitivity study in Appendix~\ref{app:sensitivity} shows that,
on the tested benchmarks, GUIDE remains stable across moderate choices of
$M$, $P$, and $\delta_{\mathrm{merge}}$, while overly strong uncertainty
intervention degrades performance.
These results suggest that the default
configuration is not narrowly tuned to a single parameter setting.
The runtime study in Appendix~\ref{app:efficiency} shows that GUIDE introduces
additional computation relative to independent BO, but remains more efficient
than the corresponding CGP variants on most Level~2 synthetic benchmarks.
In applications with expensive objective evaluations, this optimizer-side
overhead may constitute a smaller fraction of the total optimization cost.

\section{Conclusion}
\label{sec:conclusion}

GUIDE-FBO approaches federated Bayesian optimization from the perspective of what should be communicated across heterogeneous agents. By representing local search information as distributions over optimum locations, it separates global knowledge exchange from local surrogate fitting and uses transferred information only to spatially rescale posterior uncertainty during local decision making.
This intervention can be combined with UCB, NEI, and TS without redesigning their local decision rules. Our analysis of GUIDE-UCB establishes the same leading-order regret rate as GP-UCB under bounded federated guidance and shows that, when guidance is sufficiently stronger near the optimum than at a suboptimal point, the latter can be selected only if it remains sufficiently uncertain under the local GP. 
Experiments further demonstrate the value of this mechanism across heterogeneous tasks. Although GUIDE-FBO avoids sharing raw local observations, the current framework does not provide a formal
privacy guarantee for the exchanged distributional information. Future work could incorporate differential privacy into local distribution release or server aggregation while studying how privacy noise interacts with federated guidance.

\section*{AI Use Statement}

In this work, we used generative AI tools to provide critical ingredients for proving mathematical claims, design or provide feedback on research methodology and experiments, implement methods, and assist with translation. 

We did not use generative AI tools to generate synthetic datasets, develop theoretical models or conceptual frameworks, formulate mathematical claims, assist in writing proofs, propose or refine hypotheses, clean or reformat datasets, or interpret experimental results. Qualitative and thematic data analysis is not applicable to this work.

We additionally used generative AI tools to create or edit software code, search for information, improve the readability of the manuscript, and propose or refine the paper title and keywords. 

All AI-assisted work was reviewed by the authors. Mathematical claims and proofs were independently checked, experimental interpretations were verified against the reported results, and AI-assisted code was reviewed and tested. We take full responsibility for the final content of this work, including any text, claims, or artifacts produced with the aid of generative AI.

\section*{Reproducibility Statement}

Detailed experimental settings, benchmark construction, data processing, and GUIDE-FBO hyperparameters are provided in Appendix~\ref{app:benchmarks}. 
Full assumptions and proofs for the theoretical results are given in Appendix~\ref{app:theory}.
The complete source code and experiment scripts are publicly available at \url{https://github.com/JintaoWEI/GUIDE-FBO-Federated-Bayesian-Optimization}.

\clearpage
\bibliographystyle{unsrtnat}
\bibliography{references}

\clearpage

\appendix

\section*{Appendix Contents}
\label{app:contents}
\begin{enumerate}
    \item[\textbf{A.}] \hyperref[app:algorithm]{Complete GUIDE-FBO Pseudocode}
    \item[\textbf{B.}] \hyperref[app:theory]{Theoretical Assumptions, Lemmas, and Proofs}
    \item[\textbf{C.}] \hyperref[app:benchmarks]{Benchmarks, Heterogeneity, Metrics, and Experimental Settings}
    \item[\textbf{D.}] \hyperref[app:full-results]{Complete Experimental Results and Convergence Trajectories}
    \item[\textbf{E.}] \hyperref[app:ablation]{Ablation Study Details}
    \item[\textbf{F.}] \hyperref[app:sensitivity]{Sensitivity Analysis}
    \item[\textbf{G.}] \hyperref[app:communication]{Communication Cost Analysis}
    \item[\textbf{H.}] \hyperref[app:efficiency]{Computational Efficiency Analysis}
    \item[\textbf{I.}] \hyperref[app:orf]{Orthogonal Random Features for Posterior-Optimum Sampling}
\end{enumerate}

\section{Complete GUIDE-FBO Pseudocode}
\label{app:algorithm}

The complete optimization procedure is given in Algorithm~\ref{alg:guide-fbo}. The three stages correspond directly to optimum distribution extraction, server-side component merging and reweighting, and FI-GP-based local decision making.

\begin{algorithm}[h]
\caption{GUIDE-FBO}
\label{alg:guide-fbo}
\begin{algorithmic}[1]
\Require Agents $\{1,\ldots,N\}$, initial local datasets
$\{\mathcal{D}_{n,0}\}_{n=1}^{N}$, total rounds $T$,
posterior-optimum samples $M$, downlink packet size $P$,
merge threshold $\delta_{\mathrm{merge}}$, guidance strength $\lambda_{\max}$
\Ensure Best observed local incumbents $\{\mathbf{x}_{n}^{\mathrm{best}}\}_{n=1}^{N}$

\For{each agent $n$ in parallel}
    \State Fit the initial local GP posterior
    $p_{n,0}:=p(f_n\mid\mathcal{D}_{n,0})$
\EndFor

\For{$t=1,\ldots,T$}

    \Statex \textbf{Optimum distribution extraction}
    \For{each agent $n$ in parallel}
        \State Draw $M$ posterior sample paths from $p_{n,t-1}$ and collect candidate optima $\mathcal{C}_{n,t}$
        \State Fit a DPGMM $q_{n,t}(\mathbf{x}^*)$ to $\mathcal{C}_{n,t}$
        \State Select $k_n^\dagger\in\arg\max_k\pi_{n,k}$ and compute the standardized conservative value score $v_{n,t}$
        \State Upload
        $\Phi_{n,t}
        =
        (\pi_{n,k_n^\dagger},
        \boldsymbol{\mu}_{n,k_n^\dagger},
        \boldsymbol{\Sigma}_{n,k_n^\dagger},
        v_{n,t})$
    \EndFor

    \Statex \textbf{Server-side component merging and reweighting}
    \State Cluster uploaded components using complete linkage with threshold $\delta_{\mathrm{merge}}$
    \State Moment-match each cluster $\mathcal{I}_\ell$ to obtain
    $(\pi'_\ell,\boldsymbol{\mu}'_\ell,\boldsymbol{\Sigma}'_\ell,v'_\ell)_{\ell=1}^{L_t}$
    \State Compute
    $\omega_\ell
    \propto
    \pi'_\ell\exp(v'_\ell/\tau_t)$
    and form
    $\mathcal{B}_t
    =
    \{(\omega_\ell,\boldsymbol{\mu}'_\ell,\boldsymbol{\Sigma}'_\ell)\}_{\ell=1}^{L_t}$

    \For{each agent $n$}
        \State Set $P_t=\min\{P,L_t\}$
        \State Sample $P_t$ components without replacement from $\mathcal{B}_t$ according to $\{\omega_\ell\}_{\ell=1}^{L_t}$
        \State Send
        $\Psi_{n,t}
        =
        \{(\omega_{n,p},\boldsymbol{\mu}_{n,p},\boldsymbol{\Sigma}_{n,p})\}_{p=1}^{P_t}$
    \EndFor

    \Statex \textbf{Federated interventional decision making}
    \For{each agent $n$ in parallel}
        \State Construct $G_{n,t}(\mathbf{x})$ and
        $S_{n,t}(\mathbf{x})
        =
        1+\lambda_tG_{n,t}(\mathbf{x})$,
        where $\lambda_t=\lambda_{\max}/\sqrt{t}$
        \State Construct $\widetilde{p}_{n,t}$ by preserving the mean of $p_{n,t-1}$ and spatially scaling its covariance
        \State Select $\mathbf{x}_{n,t}$ using the chosen local BO decision rule under $\widetilde{p}_{n,t}$
        \State Evaluate
        $y_{n,t}
        =
        f_n(\mathbf{x}_{n,t})+\epsilon_{n,t}$
        \State Update
        $\mathcal{D}_{n,t}
        =
        \mathcal{D}_{n,t-1}
        \cup
        \{(\mathbf{x}_{n,t},y_{n,t})\}$
        and refit $p_{n,t}$
    \EndFor

\EndFor

\State \Return
$\mathbf{x}_{n}^{\mathrm{best}}
\in
\arg\max_{(\mathbf{x},y)\in\mathcal{D}_{n,T}}y$
for each agent $n$
\end{algorithmic}
\end{algorithm}

\section{Theoretical Assumptions, Lemmas, and Proofs}
\label{app:theory}

\setcounter{assumption}{0}
\renewcommand{\theassumption}{B.\arabic{assumption}}
\setcounter{lemma}{0}
\renewcommand{\thelemma}{B.\arabic{lemma}}
\setcounter{remark}{0}
\renewcommand{\theremark}{B.\arabic{remark}}

\providecommand{\theHassumption}{}
\renewcommand{\theHassumption}{B.\arabic{assumption}}
\providecommand{\theHlemma}{}
\renewcommand{\theHlemma}{B.\arabic{lemma}}
\providecommand{\theHremark}{}
\renewcommand{\theHremark}{B.\arabic{remark}}

This appendix states the confidence conditions and structural properties used in Section~\ref{sec:theory}, establishes the auxiliary results required by the analysis, and provides the complete proofs. Throughout, round $t$ starts from the local posterior conditioned on $\mathcal D_{n,t-1}$, while all guidance and decision quantities constructed during that round carry index $t$. For theoretical clarity, we additionally restore the round index on all received guidance-component parameters in this appendix.

\begin{assumption}[Local GP confidence event]
\label{assump:local-confidence}
Let $\{\beta_t\}_{t\ge1}$ be a positive and nondecreasing sequence. With probability at least $1-\delta$, the following event holds simultaneously for all agents $n\in[N]$, rounds $t\ge1$, and points $\mathbf{x}\in\mathcal X$:
\[
    |f_n(\mathbf{x})-\mu_{n,t-1}(\mathbf{x})|
    \le
    \sqrt{\beta_t}\sigma_{n,t-1}(\mathbf{x}).
\]
\end{assumption}

Assumption~\ref{assump:local-confidence} is the standard simultaneous confidence event used in GP-UCB and kernelized bandit analyses~\citep{srinivas2010gaussian,chowdhury2017kernelized}. Sufficient RKHS and noise conditions are provided in Section~\ref{app:confidence-conditions}.

\begin{remark}[Nonnegativity and boundedness of the GUIDE field]
\label{rem:bounded-guidance}
For agent $n$ at round $t$, the GUIDE field is
\[
G_{n,t}(\mathbf{x})
=
\sum_{p=1}^{P_t}
\omega_{n,t,p}
\exp\left(
-\frac{1}{2}
(\mathbf{x}-\boldsymbol{\mu}_{n,t,p})^\top
\boldsymbol{\Sigma}_{n,t,p}^{-1}
(\mathbf{x}-\boldsymbol{\mu}_{n,t,p})
\right),
\]
where $\omega_{n,t,p}\ge0$ and
$\boldsymbol{\Sigma}_{n,t,p}\succ0$.
Each Gaussian guidance factor lies in $(0,1]$, and therefore
\[
0
\le
G_{n,t}(\mathbf{x})
\le
\sum_{p=1}^{P_t}\omega_{n,t,p}.
\]
Under the normalized global reweighting and retained-weight downlink protocol,
\[
\sum_{p=1}^{P_t}\omega_{n,t,p}\le1
\]
for every agent $n$ and round $t$. Hence,
\[
G_{\max}
:=
\sup_{n\in[N],\,t\ge1,\,\mathbf{x}\in\mathcal X}
G_{n,t}(\mathbf{x})
\le1.
\]
\end{remark}

\begin{lemma}[Instantaneous regret decomposition]
\label{lem:instant-regret}
Under Assumption~\ref{assump:local-confidence}, for any nonnegative intervention strength $\lambda_t$, the GUIDE-UCB rule in Eq.~\ref{eq:guide-ucb-rule} satisfies, for every agent $n$ and round $t$,
\begin{equation*}
    r_{n,t}
    \le
    \sqrt{\beta_t}
    \Big[
        2\sigma_{n,t-1}(\mathbf{x}_{n,t})
        +
        \lambda_tG_{n,t}(\mathbf{x}_{n,t})
        \sigma_{n,t-1}(\mathbf{x}_{n,t})
        -
        \lambda_tG_{n,t}(\mathbf{x}_n^*)
        \sigma_{n,t-1}(\mathbf{x}_n^*)
    \Big].
\end{equation*}
\end{lemma}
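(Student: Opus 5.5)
The plan is the standard GP-UCB sandwich argument, carried out with the inflated exploration bonus $\sqrt{\beta_t}\,S_{n,t}(\mathbf{x})\sigma_{n,t-1}(\mathbf{x})$ in place of $\sqrt{\beta_t}\sigma_{n,t-1}(\mathbf{x})$. Work on the event of Assumption~\ref{assump:local-confidence}, fix an agent $n$ and a round $t$, and write the GUIDE-UCB acquisition as
\[
U_{n,t}(\mathbf{x})=\mu_{n,t-1}(\mathbf{x})+\sqrt{\beta_t}\bigl(1+\lambda_tG_{n,t}(\mathbf{x})\bigr)\sigma_{n,t-1}(\mathbf{x}).
\]

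The steps are as follows.

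\textbf{Step 1 (optimum side).} The upper confidence inequality at $\mathbf{x}_n^*$ gives $f_n(\mathbf{x}_n^*)\le \mu_{n,t-1}(\mathbf{x}_n^*)+\sqrt{\beta_t}\sigma_{n,t-1}(\mathbf{x}_n^*)$. Adding and subtracting the nonnegative term $\sqrt{\beta_t}\lambda_tG_{n,t}(\mathbf{x}_n^*)\sigma_{n,t-1}(\mathbf{x}_n^*)$ yields
\[
f_n(\mathbf{x}_n^*)\le U_{n,t}(\mathbf{x}_n^*)-\sqrt{\beta_t}\lambda_tG_{n,t}(\mathbf{x}_n^*)\sigma_{n,t-1}(\mathbf{x}_n^*).
\]

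\textbf{Step 2 (maximality).} Since $\mathbf{x}_{n,t}$ maximizes $U_{n,t}$ by Eq.~\ref{eq:guide-ucb-rule}, we have $U_{n,t}(\mathbf{x}_n^*)\le U_{n,t}(\mathbf{x}_{n,t})$.

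\textbf{Step 3 (selected-point side).} The lower confidence inequality at $\mathbf{x}_{n,t}$ gives $-f_n(\mathbf{x}_{n,t})\le -\mu_{n,t-1}(\mathbf{x}_{n,t})+\sqrt{\beta_t}\sigma_{n,t-1}(\mathbf{x}_{n,t})$.

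\textbf{Combination.} Summing the three steps, the posterior means $\mu_{n,t-1}(\mathbf{x}_{n,t})$ cancel. We are left with
\[
r_{n,t}\le \sqrt{\beta_t}\Bigl[2\sigma_{n,t-1}(\mathbf{x}_{n,t})+\lambda_tG_{n,t}(\mathbf{x}_{n,t})\sigma_{n,t-1}(\mathbf{x}_{n,t})-\lambda_tG_{n,t}(\mathbf{x}_n^*)\sigma_{n,t-1}(\mathbf{x}_n^*)\Bigr],
\]
which is exactly the claimed bound.

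There is no serious obstacle; the only point needing care is Step 1. There the intervention term at $\mathbf{x}_n^*$ must be kept explicitly rather than dropped. We are not discarding it as a nonnegative quantity; we are writing $\mu+\sqrt{\beta_t}\sigma=U_{n,t}-\sqrt{\beta_t}\lambda_tG\sigma$ as an identity, which produces the negative term in the bound.

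Nonnegativity of $\lambda_t$ and $G_{n,t}$ is not actually needed for this identity. It is needed later, when the negative term is dropped to obtain Theorem~\ref{thm:hetero-regret} and when the term is used in Proposition~\ref{prop:effective-threshold}.

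The argument uses the confidence event only at the two points $\mathbf{x}_n^*$ and $\mathbf{x}_{n,t}$, and at the same index $t$. No monotonicity of $\beta_t$ is required here.
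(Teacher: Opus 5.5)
Your proof is correct and uses the same argument as the paper's. Both combine three ingredients: maximality of $\mathbf{x}_{n,t}$ against $\mathbf{x}_n^*$, the upper confidence bound at $\mathbf{x}_n^*$, and the lower confidence bound at $\mathbf{x}_{n,t}$. The paper merely orders them differently: it first rearranges the selection inequality into a bound on $\mu_{n,t-1}(\mathbf{x}_n^*)-\mu_{n,t-1}(\mathbf{x}_{n,t})$, then adds the two confidence bounds and substitutes $S_{n,t}=1+\lambda_tG_{n,t}$. Your side remarks are also accurate: the lemma's inequality uses neither the sign of $\lambda_tG_{n,t}$ nor the monotonicity of $\beta_t$.
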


Lemma~\ref{lem:instant-regret} is the one-step inequality underlying both theoretical results. For the cumulative regret bound, the final nonpositive term is discarded. For the informative-guidance result, it is retained to characterize how guidance around $\mathbf{x}_n^*$ changes the selection threshold of suboptimal points.

\subsection{Sufficient Conditions for the Local Confidence Event}
\label{app:confidence-conditions}

The main analysis is stated directly in terms of Assumption~\ref{assump:local-confidence}. A standard set of sufficient conditions is given below.

\begin{assumption}[Compact domain and bounded kernels]
\label{assump:app-domain-kernel}
The decision domain $\mathcal X$ is compact. For every agent $n$, the kernel $k_n$ is bounded on the diagonal:
\[
    k_n(\mathbf{x},\mathbf{x})\le1,
    \qquad
    \forall \mathbf{x}\in\mathcal X.
\]
\end{assumption}

\begin{assumption}[RKHS-bounded local objectives]
\label{assump:app-rkhs}
For every agent $n$, the objective $f_n$ belongs to the RKHS $\mathcal H_{k_n}$ induced by $k_n$ and satisfies
\[
    \|f_n\|_{\mathcal H_{k_n}}\le B_n.
\]
Let $B=\max_{n\in[N]}B_n$.
\end{assumption}

\begin{assumption}[Sub-Gaussian observation noise]
\label{assump:app-noise}
At round $t$, agent $n$ observes
\[
    y_{n,t}
    =
    f_n(\mathbf{x}_{n,t})+\epsilon_{n,t},
\]
where $\epsilon_{n,t}$ is conditionally $R_\epsilon$-sub-Gaussian with respect to the filtration generated by the observation history.
\end{assumption}

Under Assumptions~\ref{assump:app-domain-kernel}--\ref{assump:app-noise}, standard kernelized-bandit concentration results imply Assumption~\ref{assump:local-confidence} for an appropriate nondecreasing confidence sequence $\{\beta_t\}_{t\ge1}$~\citep{srinivas2010gaussian,chowdhury2017kernelized}. A union bound over the $N$ agents yields simultaneous validity across the federation.

As is standard in kernelized-bandit analyses, the theoretical local posterior is formed using a fixed kernel $k_n$ and a fixed positive regularization parameter $\sigma_{\mathrm{gp}}^2>0$. This parameter determines the posterior variance and the log-determinant complexity term used below; it does not impose a Gaussian distribution on the actual observation noise, which remains conditionally $R_\epsilon$-sub-Gaussian as specified in Assumption~\ref{assump:app-noise}. In the empirical implementation, kernel hyperparameters are estimated from the local observations.

\subsection{Maximum Information Gain}
\label{app:information-gain}

The optimization procedure starts from a nonempty initial local dataset
$\mathcal D_{n,0}$. To keep the theoretical notation consistent with the
round convention in Section~\ref{sec:problem_setup}, we condition the
sequential analysis on this initial dataset.

Let
$\mathcal X_{n,0}=\{\mathbf x:(\mathbf x,y)\in\mathcal D_{n,0}\}$
denote the initial design locations of agent $n$. For the fixed theoretical
kernel $k_n$ and GP regularization parameter
$\sigma_{\mathrm{gp}}^2>0$, define the covariance function after conditioning
on the initial dataset as
\begin{equation*}
\begin{aligned}
k_n^{(0)}(\mathbf x,\mathbf x')
&=
k_n(\mathbf x,\mathbf x')
\\
&\quad -
k_n(\mathbf x,\mathcal X_{n,0})
\left(
\mathbf K_{n,0}
+
\sigma_{\mathrm{gp}}^2\mathbf I
\right)^{-1}
k_n(\mathcal X_{n,0},\mathbf x'),
\end{aligned}
\end{equation*}
where
\[
\mathbf K_{n,0}
=
\left[
k_n(\mathbf x,\mathbf x')
\right]_{\mathbf x,\mathbf x'\in\mathcal X_{n,0}}.
\]
Thus, $k_n^{(0)}$ is precisely the covariance of the local GP posterior
available before the first BO round. Subsequent posterior variances
$\sigma_{n,t-1}^2(\mathbf x)$ are obtained by further conditioning this
initial posterior on the BO observations collected in rounds
$1,\ldots,t-1$.

For a finite query set
$A=\{\mathbf x_1,\ldots,\mathbf x_T\}\subset\mathcal X$, define
\[
\mathbf K_{n,A}^{(0)}
=
\left[
k_n^{(0)}(\mathbf x,\mathbf x')
\right]_{\mathbf x,\mathbf x'\in A}.
\]
The maximum information gain conditional on the initial dataset is
\begin{equation*}
\gamma_{n,T}^{(0)}
=
\frac{1}{2}
\max_{A\subset\mathcal X:\,|A|=T}
\log\det\left(
\mathbf I
+
\sigma_{\mathrm{gp}}^{-2}
\mathbf K_{n,A}^{(0)}
\right).
\end{equation*}

For comparison, let
\begin{equation*}
\gamma_{n,T}
=
\frac{1}{2}
\max_{A\subset\mathcal X:\,|A|=T}
\log\det\left(
\mathbf I
+
\sigma_{\mathrm{gp}}^{-2}
\mathbf K_{n,A}
\right),
\end{equation*}
where
\[
\mathbf K_{n,A}
=
\left[
k_n(\mathbf x,\mathbf x')
\right]_{\mathbf x,\mathbf x'\in A}.
\]
This is the standard maximum information-gain, or equivalently
log-determinant complexity, quantity used in GP-UCB and kernelized-bandit
analyses~\citep{srinivas2010gaussian,chowdhury2017kernelized}.

Conditioning on the initial observations cannot increase covariance.
Therefore, for every finite $A$,
\[
\mathbf K_{n,A}^{(0)}
\preceq
\mathbf K_{n,A},
\]
which implies
\begin{equation}
\gamma_{n,T}^{(0)}
\le
\gamma_{n,T}.
\label{eq:conditional-information-gain-upper}
\end{equation}
Consequently, the standard information-gain complexity
$\gamma_{n,T}$ remains a valid upper bound after accounting for the
initial local dataset.

Under a Gaussian observation model, the log-determinant quantities above
coincide with the corresponding mutual information between noisy
observations and latent function values. The analysis here only requires
their log-determinant forms and therefore does not require Gaussian
observation noise.

\begin{lemma}[Cumulative posterior variance]
\label{lem:info-gain}
Under Assumption~\ref{assump:app-domain-kernel}, for any adaptively
selected sequence
$\{\mathbf x_{n,t}\}_{t=1}^{T}$ generated after conditioning on
$\mathcal D_{n,0}$,
\[
\sum_{t=1}^{T}
\sigma_{n,t-1}^{2}(\mathbf x_{n,t})
\le
C_{\gamma}\gamma_{n,T}^{(0)}
\le
C_{\gamma}\gamma_{n,T},
\]
where
\[
C_{\gamma}
=
\frac{2}{
\log\left(
1+\sigma_{\mathrm{gp}}^{-2}
\right)
}.
\]
\end{lemma}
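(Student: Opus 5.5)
The plan is the standard elliptical-potential argument for GP-UCB, with one adjustment: the posterior variance $\sigma_{n,t-1}^2(\mathbf x_{n,t})$ is a Schur complement of the conditioned kernel, and we express it through a log-determinant.

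\textbf{Step 1 (per-round variance as an information increment).} Condition throughout on $\mathcal D_{n,0}$. Then $\sigma_{n,t-1}^2$ is the posterior variance of a GP with prior covariance $k_n^{(0)}$ after observing $\mathbf x_{n,1},\dots,\mathbf x_{n,t-1}$ with regularization $\sigma_{\mathrm{gp}}^2$. Let $A_t=\{\mathbf x_{n,1},\dots,\mathbf x_{n,t}\}$. The Schur-complement (chain-rule) identity for determinants gives
\[
\log\det\bigl(\mathbf I+\sigma_{\mathrm{gp}}^{-2}\mathbf K^{(0)}_{n,A_t}\bigr)
=\log\det\bigl(\mathbf I+\sigma_{\mathrm{gp}}^{-2}\mathbf K^{(0)}_{n,A_{t-1}}\bigr)
+\log\bigl(1+\sigma_{\mathrm{gp}}^{-2}\sigma_{n,t-1}^2(\mathbf x_{n,t})\bigr).
\]
This is a purely algebraic identity: we write $\sigma_{\mathrm{gp}}^2\mathbf I+\mathbf K$ in block form and apply the determinant formula for block matrices. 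It needs no Gaussian noise assumption, and it remains valid if points repeat. Telescoping over $t=1,\dots,T$ gives
\[
\tfrac12\sum_{t}\log\bigl(1+\sigma_{\mathrm{gp}}^{-2}\sigma_{n,t-1}^2(\mathbf x_{n,t})\bigr)
=\tfrac12\log\det\bigl(\mathbf I+\sigma_{\mathrm{gp}}^{-2}\mathbf K^{(0)}_{n,A_T}\bigr)\le\gamma^{(0)}_{n,T}.
\]
The definition of $\gamma^{(0)}_{n,T}$ maximizes over sets with $|A|=T$. If $A_T$ contains repeated points, we treat it as a multiset. Alternatively, we note that the log-determinant is monotone under adding points, so any such value is dominated by a distinct $T$-point configuration, or by a limit of such configurations. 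I expect this bookkeeping with repeated queries to be the only delicate point. The simplest fix is to interpret the max over multisets, as is standard.

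\textbf{Step 2 (linearization).} By Assumption~\ref{assump:app-domain-kernel} and the fact that conditioning only reduces variance, $s_t:=\sigma_{\mathrm{gp}}^{-2}\sigma_{n,t-1}^2(\mathbf x_{n,t})\in[0,\sigma_{\mathrm{gp}}^{-2}]$. The function $s\mapsto\log(1+s)$ is concave, so on this interval it lies above its chord:
\[
s\le\frac{\sigma_{\mathrm{gp}}^{-2}}{\log(1+\sigma_{\mathrm{gp}}^{-2})}\log(1+s).
\]
Hence
\[
\sigma_{n,t-1}^2(\mathbf x_{n,t})\le\frac{1}{\log(1+\sigma_{\mathrm{gp}}^{-2})}\log\bigl(1+\sigma_{\mathrm{gp}}^{-2}\sigma_{n,t-1}^2(\mathbf x_{n,t})\bigr).
\]

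\textbf{Step 3 (combine).} Summing the inequality of Step 2 over $t$ and using Step 1 gives
\[
\sum_t\sigma_{n,t-1}^2(\mathbf x_{n,t})\le\frac{2}{\log(1+\sigma_{\mathrm{gp}}^{-2})}\gamma^{(0)}_{n,T}=C_\gamma\gamma^{(0)}_{n,T}.
\]
The second inequality, $C_\gamma\gamma^{(0)}_{n,T}\le C_\gamma\gamma_{n,T}$, is exactly \eqref{eq:conditional-information-gain-upper}. It follows from $\mathbf K^{(0)}_{n,A}\preceq\mathbf K_{n,A}$ together with the monotonicity of $\log\det(\mathbf I+\cdot)$ in the Loewner order. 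Adaptivity of the sequence is harmless, since both Step 1 and Step 2 hold pathwise for any realized sequence.
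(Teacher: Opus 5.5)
Your proposal is correct and follows the paper's proof essentially step for step: the same chord bound from concavity (you rescale to $s=\sigma_{\mathrm{gp}}^{-2}u$, the paper works directly with $u\in[0,1]$), the same sequential Schur-complement determinant identity for the initial-data-conditioned kernel $k_n^{(0)}$, and the same Loewner comparison $\mathbf K^{(0)}_{n,A}\preceq\mathbf K_{n,A}$ for the final inequality. Your remark on repeated queries addresses a subtlety the paper passes over, and reading the maximum over multisets is the right fix; monotonicity of the log-determinant under adding points would not by itself let you trade a repeated point for a distinct one, for example on a finite domain.
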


\begin{proof}
Let
\[
u_t
=
\sigma_{n,t-1}^{2}(\mathbf x_{n,t}).
\]
Since conditioning cannot increase posterior variance and
$k_n(\mathbf x,\mathbf x)\le1$ under
Assumption~\ref{assump:app-domain-kernel}, we have
$u_t\in[0,1]$. By concavity of
$u\mapsto\log(1+\sigma_{\mathrm{gp}}^{-2}u)$ on $[0,1]$,
\[
\log\left(
1+\sigma_{\mathrm{gp}}^{-2}u_t
\right)
\ge
u_t
\log\left(
1+\sigma_{\mathrm{gp}}^{-2}
\right),
\]
and hence
\[
u_t
\le
\frac{
\log\left(
1+\sigma_{\mathrm{gp}}^{-2}u_t
\right)
}{
\log\left(
1+\sigma_{\mathrm{gp}}^{-2}
\right)
}.
\]

For the realized BO query sequence, define the kernel matrix under the
initial-data-conditioned covariance as
\[
\mathbf K_{n,1:T}^{(0)}
=
\left[
k_n^{(0)}(\mathbf x_{n,s},\mathbf x_{n,t})
\right]_{s,t=1}^{T}.
\]
The standard sequential determinant identity, now applied after
conditioning on $\mathcal D_{n,0}$, gives
\[
\frac{1}{2}
\log\det\left(
\mathbf I
+
\sigma_{\mathrm{gp}}^{-2}
\mathbf K_{n,1:T}^{(0)}
\right)
=
\frac{1}{2}
\sum_{t=1}^{T}
\log\left(
1+
\sigma_{\mathrm{gp}}^{-2}
\sigma_{n,t-1}^{2}(\mathbf x_{n,t})
\right).
\]
Therefore,
\[
\begin{aligned}
\sum_{t=1}^{T}
\sigma_{n,t-1}^{2}(\mathbf x_{n,t})
&\le
\frac{2}{
\log\left(
1+\sigma_{\mathrm{gp}}^{-2}
\right)
}
\frac{1}{2}
\log\det\left(
\mathbf I
+
\sigma_{\mathrm{gp}}^{-2}
\mathbf K_{n,1:T}^{(0)}
\right)
\\
&\le
C_{\gamma}\gamma_{n,T}^{(0)}
\\
&\le
C_{\gamma}\gamma_{n,T},
\end{aligned}
\]
where the final inequality follows from
Eq.~\ref{eq:conditional-information-gain-upper}.
\end{proof}

\subsection{Validity and Variance Scaling of FI-GP}
\label{app:figp-scaling}

\begin{lemma}[FI-GP covariance validity and variance scaling]
\label{lem:figp-scaling}
Let the local GP posterior of agent $n$ before round $t$ be
\[
    f_n(\mathbf{x})\mid\mathcal D_{n,t-1}
    \sim
    \mathcal{GP}\left(
        \mu_{n,t-1}(\mathbf{x}),
        k_{n,t-1}(\mathbf{x},\mathbf{x}')
    \right).
\]
For $\lambda_t\ge0$, define
\[
S_{n,t}(\mathbf{x})
=
1+\lambda_tG_{n,t}(\mathbf{x})
\]
and
\[
\widetilde{k}_{n,t}(\mathbf{x},\mathbf{x}')
=
S_{n,t}(\mathbf{x})
k_{n,t-1}(\mathbf{x},\mathbf{x}')
S_{n,t}(\mathbf{x}').
\]
Then $\widetilde{k}_{n,t}$ is a valid positive-semidefinite covariance kernel. Moreover, the FI-GP decision posterior preserves the local posterior mean and satisfies
\[
\widetilde{\mu}_{n,t}(\mathbf{x})
=
\mu_{n,t-1}(\mathbf{x}),
\qquad
\widetilde{\sigma}_{n,t}(\mathbf{x})
=
S_{n,t}(\mathbf{x})
\sigma_{n,t-1}(\mathbf{x}).
\]
\end{lemma}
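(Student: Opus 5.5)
The argument is a direct verification of positive semidefiniteness for a kernel of the form $S(\mathbf{x})k(\mathbf{x},\mathbf{x}')S(\mathbf{x}')$, followed by reading off the diagonal. I would take the following three steps.

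\textbf{Validity.} Fix an arbitrary finite set $\{\mathbf{x}_1,\ldots,\mathbf{x}_m\}\subset\mathcal X$ and coefficients $a_1,\ldots,a_m\in\mathbb R$. Set $b_i=a_iS_{n,t}(\mathbf{x}_i)$, which is well defined and real because $S_{n,t}$ is a finite real function. Then
\[
\sum_{i,j}a_ia_j\widetilde k_{n,t}(\mathbf{x}_i,\mathbf{x}_j)=\sum_{i,j}b_ib_jk_{n,t-1}(\mathbf{x}_i,\mathbf{x}_j)\ge0 ,
\]
since $k_{n,t-1}$ is itself a GP posterior covariance and hence positive semidefinite. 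Equivalently, in matrix form $\widetilde{\mathbf K}=\mathbf D\mathbf K\mathbf D$ with $\mathbf D=\mathrm{diag}(S_{n,t}(\mathbf{x}_i))$, which is a congruence and therefore preserves PSD. Symmetry of $\widetilde k_{n,t}$ follows from symmetry of $k_{n,t-1}$.

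For a constructive interpretation, I would also note that if $g\sim\mathcal{GP}(0,k_{n,t-1})$, then the process $\mu_{n,t-1}(\mathbf{x})+S_{n,t}(\mathbf{x})g(\mathbf{x})$ is a Gaussian process with mean $\mu_{n,t-1}$ and covariance $\widetilde k_{n,t}$. This realizes the FI-GP as an actual GP and is consistent with how the TS variant rescales sample paths.

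\textbf{Mean.} Mean preservation holds by definition of the FI-GP decision posterior, which sets $\widetilde\mu_{n,t}=\mu_{n,t-1}$.

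\textbf{Variance scaling.} Evaluating on the diagonal gives $\widetilde\sigma_{n,t}^2(\mathbf{x})=S_{n,t}(\mathbf{x})^2\sigma_{n,t-1}^2(\mathbf{x})$. Taking square roots requires $S_{n,t}(\mathbf{x})\ge0$. This holds because $\lambda_t\ge0$ and $G_{n,t}\ge0$ by Remark~\ref{rem:bounded-guidance}, so in fact $S_{n,t}\ge1$. Hence $\widetilde\sigma_{n,t}=S_{n,t}\sigma_{n,t-1}$.

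This last sign check is the only point needing care; there is no real obstacle. Without nonnegativity of $S_{n,t}$ one would only obtain $\widetilde\sigma_{n,t}=|S_{n,t}|\,\sigma_{n,t-1}$.
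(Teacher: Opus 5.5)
Your proposal is correct and follows essentially the same route as the paper: validity via the congruence $\mathbf{D}_S\mathbf{K}\mathbf{D}_S$ on an arbitrary finite set, mean preservation by definition, and the diagonal identity $\widetilde{\sigma}_{n,t}^2=S_{n,t}^2\sigma_{n,t-1}^2$, with $S_{n,t}\ge 1$ (from $\lambda_t\ge0$ and $G_{n,t}\ge0$) justifying the square root. Your constructive representation $\mu_{n,t-1}+S_{n,t}\,g$ is a harmless extra that the paper does not use.
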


\begin{proof}
Consider an arbitrary finite set
$\mathbf X=\{\mathbf{x}_1,\ldots,\mathbf{x}_q\}$. Let
\[
    \mathbf K
    =
    [
        k_{n,t-1}(\mathbf{x}_i,\mathbf{x}_j)
    ]_{i,j=1}^q
\]
and define
\[
    \mathbf D_S
    =
    \operatorname{diag}\left(
        S_{n,t}(\mathbf{x}_1),
        \ldots,
        S_{n,t}(\mathbf{x}_q)
    \right).
\]
The covariance matrix induced by
$\widetilde{k}_{n,t}$ is
\[
\widetilde{\mathbf K}
=
\mathbf D_S
\mathbf K
\mathbf D_S.
\]
Since $\mathbf K\succeq0$, for every vector $\mathbf a\in\mathbb R^q$,
\[
    \mathbf a^\top
    \widetilde{\mathbf K}
    \mathbf a
    =
    (
        \mathbf D_S\mathbf a
    )^\top
    \mathbf K
    (
        \mathbf D_S\mathbf a
    )
    \ge0.
\]
Thus, $\widetilde{\mathbf K}\succeq0$ for every finite set $\mathbf X$, and $\widetilde{k}_{n,t}$ is a valid covariance kernel.

The FI-GP mean is defined to equal the local posterior mean. For the
marginal variance,
\[
\begin{aligned}
    \widetilde{\sigma}_{n,t}^2(\mathbf{x})
    &=
    \widetilde{k}_{n,t}(\mathbf{x},\mathbf{x})\\
    &=
    S_{n,t}^2(\mathbf{x})k_{n,t-1}(\mathbf{x},\mathbf{x})\\
    &=
    S_{n,t}^2(\mathbf{x})\sigma_{n,t-1}^2(\mathbf{x}).
\end{aligned}
\]
By Remark~\ref{rem:bounded-guidance} and $\lambda_t\ge0$, we have $S_{n,t}(\mathbf{x})\ge1$. Taking square roots gives
\[
    \widetilde{\sigma}_{n,t}(\mathbf{x})
    =
    S_{n,t}(\mathbf{x})\sigma_{n,t-1}(\mathbf{x}).
\]
\end{proof}

\subsection{Proof of Lemma~\ref{lem:instant-regret}}
\label{app:proof-instant-regret}

\begin{proof}
Fix an arbitrary agent $n$ and round $t$, and define
\[
    S_{n,t}(\mathbf{x})
    =
    1+\lambda_tG_{n,t}(\mathbf{x}).
\]
By the GUIDE-UCB selection rule in Eq.~\ref{eq:guide-ucb-rule},
\[
    \mu_{n,t-1}(\mathbf{x}_{n,t})
    +
    \sqrt{\beta_t}
    S_{n,t}(\mathbf{x}_{n,t})
    \sigma_{n,t-1}(\mathbf{x}_{n,t})
    \ge
    \mu_{n,t-1}(\mathbf{x}_n^*)
    +
    \sqrt{\beta_t}
    S_{n,t}(\mathbf{x}_n^*)
    \sigma_{n,t-1}(\mathbf{x}_n^*).
\]
Rearranging gives
\[
\begin{aligned}
    \mu_{n,t-1}(\mathbf{x}_n^*)
    -
    \mu_{n,t-1}(\mathbf{x}_{n,t})
    \le
    \sqrt{\beta_t}
    \Big[
        &
        S_{n,t}(\mathbf{x}_{n,t})
        \sigma_{n,t-1}(\mathbf{x}_{n,t})\\
        &-
        S_{n,t}(\mathbf{x}_n^*)
        \sigma_{n,t-1}(\mathbf{x}_n^*)
    \Big].
\end{aligned}
\]

Under Assumption~\ref{assump:local-confidence},
\[
    f_n(\mathbf{x}_n^*)
    \le
    \mu_{n,t-1}(\mathbf{x}_n^*)
    +
    \sqrt{\beta_t}
    \sigma_{n,t-1}(\mathbf{x}_n^*)
\]
and
\[
    f_n(\mathbf{x}_{n,t})
    \ge
    \mu_{n,t-1}(\mathbf{x}_{n,t})
    -
    \sqrt{\beta_t}
    \sigma_{n,t-1}(\mathbf{x}_{n,t}).
\]
Consequently,
\[
\begin{aligned}
    r_{n,t}
    &=
    f_n(\mathbf{x}_n^*)
    -
    f_n(\mathbf{x}_{n,t})\\
    &\le
    \mu_{n,t-1}(\mathbf{x}_n^*)
    -
    \mu_{n,t-1}(\mathbf{x}_{n,t})
    +
    \sqrt{\beta_t}
    \sigma_{n,t-1}(\mathbf{x}_n^*)
    +
    \sqrt{\beta_t}
    \sigma_{n,t-1}(\mathbf{x}_{n,t})\\
    &\le
    \sqrt{\beta_t}
    \Big[
        S_{n,t}(\mathbf{x}_{n,t})
        \sigma_{n,t-1}(\mathbf{x}_{n,t})
        -
        S_{n,t}(\mathbf{x}_n^*)
        \sigma_{n,t-1}(\mathbf{x}_n^*)\\
    &\hspace{35mm}
        +
        \sigma_{n,t-1}(\mathbf{x}_n^*)
        +
        \sigma_{n,t-1}(\mathbf{x}_{n,t})
    \Big].
\end{aligned}
\]
Substituting
$S_{n,t}(\mathbf{x})=1+\lambda_tG_{n,t}(\mathbf{x})$
yields
\[
\begin{aligned}
    r_{n,t}
    \le
    \sqrt{\beta_t}
    \Big[
        2\sigma_{n,t-1}(\mathbf{x}_{n,t})
        &+
        \lambda_tG_{n,t}(\mathbf{x}_{n,t})
        \sigma_{n,t-1}(\mathbf{x}_{n,t})\\
        &-
        \lambda_tG_{n,t}(\mathbf{x}_n^*)
        \sigma_{n,t-1}(\mathbf{x}_n^*)
    \Big],
\end{aligned}
\]
which proves Lemma~\ref{lem:instant-regret}.
\end{proof}

\subsection{Proof of Theorem~\ref{thm:hetero-regret}}
\label{app:proof-hetero-regret}

\begin{proof}
Fix an arbitrary agent $n$. By Lemma~\ref{lem:instant-regret}, and because $\lambda_t\ge0$ and $G_{n,t}(\mathbf{x}_n^*)\ge0$, discarding the final nonpositive term gives
\[
    r_{n,t}
    \le
    \sqrt{\beta_t}
    \left(
        2+
        \lambda_tG_{n,t}(\mathbf{x}_{n,t})
    \right)
    \sigma_{n,t-1}(\mathbf{x}_{n,t}).
\]
By Remark~\ref{rem:bounded-guidance},
$G_{n,t}(\mathbf{x}_{n,t})\le G_{\max}$.
Using
$\lambda_t=\lambda_{\max}/\sqrt t$
with $\lambda_{\max}\ge0$,
\[
    r_{n,t}
    \le
    \sqrt{\beta_t}
    \left(
        2+
        \frac{
            \lambda_{\max}G_{\max}
        }{
            \sqrt t
        }
    \right)
    \sigma_{n,t-1}(\mathbf{x}_{n,t}).
\]
Since $\{\beta_t\}_{t\ge1}$ is nondecreasing under Assumption~\ref{assump:local-confidence},
\[
    R_{n,T}
    \le
    \sqrt{\beta_T}
    \sum_{t=1}^T
    \left(
        2+
        \frac{
            \lambda_{\max}G_{\max}
        }{
            \sqrt t
        }
    \right)
    \sigma_{n,t-1}(\mathbf{x}_{n,t}).
\]
Applying Cauchy--Schwarz to the complete product gives
\[
\begin{aligned}
    R_{n,T}
    \le
    \sqrt{\beta_T}
    &\sqrt{
        \sum_{t=1}^T
        \left(
            2+
            \frac{
                \lambda_{\max}G_{\max}
            }{
                \sqrt t
            }
        \right)^2
    }\\
    &\times
    \sqrt{
        \sum_{t=1}^T
        \sigma_{n,t-1}^2(\mathbf{x}_{n,t})
    }.
\end{aligned}
\]
Applying Lemma~\ref{lem:info-gain} yields
\[
    R_{n,T}
    \le
    \sqrt{
        \beta_T
        C_\gamma
        \gamma_{n,T}
    }
    \sqrt{
        \sum_{t=1}^T
        \left(
            2+
            \frac{
                \lambda_{\max}G_{\max}
            }{
                \sqrt t
            }
        \right)^2
    }.
\]
The remaining coefficient satisfies
\[
\begin{aligned}
    \sum_{t=1}^T
    \left(
        2+
        \frac{
            \lambda_{\max}G_{\max}
        }{
            \sqrt t
        }
    \right)^2
    &=
    4T
    +
    4\lambda_{\max}G_{\max}
    \sum_{t=1}^T
    t^{-1/2}\\
    &\quad
    +
    \lambda_{\max}^2G_{\max}^2
    \sum_{t=1}^T
    t^{-1}\\
    &\le
    4T
    +
    4\lambda_{\max}G_{\max}
    (2\sqrt T-1)\\
    &\quad
    +
    \lambda_{\max}^2G_{\max}^2
    (1+\log T),
\end{aligned}
\]
where
\[
    \sum_{t=1}^T
    t^{-1/2}
    \le
    2\sqrt T-1,
    \qquad
    \sum_{t=1}^T
    t^{-1}
    \le
    1+\log T.
\]
Combining these inequalities gives
\[
    R_{n,T}
    \le
    \sqrt{
        \beta_T
        C_\gamma
        \gamma_{n,T}
    }
    \sqrt{
        4T
        +
        4\lambda_{\max}G_{\max}(2\sqrt T-1)
        +
        \lambda_{\max}^2G_{\max}^2(1+\log T)
    }.
\]
For fixed $\lambda_{\max}$ and $G_{\max}$, the guidance-dependent
$\sqrt{T}$ and $\log T$ terms are lower order than the leading
$T$ term inside the second square root. Therefore,
\[
    R_{n,T}
    =
    \mathcal O\left(
        \sqrt{
            T\beta_T\gamma_{n,T}
        }
    \right).
\]
\end{proof}

\subsection{Constant Intervention Strength}
\label{app:constant-intervention}

For a fixed intervention strength
$\lambda_t\equiv\lambda$
with $\lambda\ge0$, Lemma~\ref{lem:instant-regret} and Remark~\ref{rem:bounded-guidance} imply
\[
    r_{n,t}
    \le
    \sqrt{\beta_t}
    (2+\lambda G_{\max})
    \sigma_{n,t-1}(\mathbf{x}_{n,t}).
\]
Using the monotonicity of $\beta_t$, Cauchy--Schwarz, and Lemma~\ref{lem:info-gain},
\[
\begin{aligned}
    R_{n,T}
    &\le
    (2+\lambda G_{\max})
    \sqrt{\beta_T}
    \sum_{t=1}^T
    \sigma_{n,t-1}(\mathbf{x}_{n,t})\\
    &\le
    (2+\lambda G_{\max})
    \sqrt{
        T\beta_T
        \sum_{t=1}^T
        \sigma_{n,t-1}^2(\mathbf{x}_{n,t})
    }\\
    &\le
    (2+\lambda G_{\max})
    \sqrt{
        T\beta_T
        C_\gamma
        \gamma_{n,T}
    }.
\end{aligned}
\]

Hence, for any fixed $\lambda\geq0$, the intervention changes only the constant factor in the regret bound and remains sublinear whenever $\beta_T\gamma_{n,T}=o(T)$.
\subsection{Proof of Proposition~\ref{prop:effective-threshold}}
\label{app:proof-effective-threshold}

\begin{proof}
Fix an arbitrary agent $n$ and round $t$. Let $\mathbf{x}_{n,t}$ be a selected suboptimal point, so that
\[
    \Delta_n(\mathbf{x}_{n,t})
    =
    f_n(\mathbf{x}_n^*)
    -
    f_n(\mathbf{x}_{n,t})
    =
    r_{n,t}
    >
    0.
\]
Starting from Lemma~\ref{lem:instant-regret},
\[
\begin{aligned}
    \Delta_n(\mathbf{x}_{n,t})
    \le
    \sqrt{\beta_t}
    \Big[
        2\sigma_{n,t-1}(\mathbf{x}_{n,t})
        &+
        \lambda_tG_{n,t}(\mathbf{x}_{n,t})
        \sigma_{n,t-1}(\mathbf{x}_{n,t})\\
        &-
        \lambda_tG_{n,t}(\mathbf{x}_n^*)
        \sigma_{n,t-1}(\mathbf{x}_n^*)
    \Big].
\end{aligned}
\]
Rearranging gives
\[
\begin{aligned}
    \sqrt{\beta_t}
    \left(
        2+
        \lambda_tG_{n,t}(\mathbf{x}_{n,t})
    \right)
    \sigma_{n,t-1}(\mathbf{x}_{n,t})
    \ge
    \Delta_n(\mathbf{x}_{n,t})
    +
    \lambda_t\sqrt{\beta_t}
    G_{n,t}(\mathbf{x}_n^*)
    \sigma_{n,t-1}(\mathbf{x}_n^*).
\end{aligned}
\]
Because $\beta_t>0$ and $\lambda_t\ge0$, the denominator below is strictly positive. Therefore,
\[
    \sigma_{n,t-1}(\mathbf{x}_{n,t})
    \ge
    \frac{
        \Delta_n(\mathbf{x}_{n,t})
        +
        \lambda_t\sqrt{\beta_t}
        G_{n,t}(\mathbf{x}_n^*)
        \sigma_{n,t-1}(\mathbf{x}_n^*)
    }{
        \sqrt{\beta_t}
        \left(
            2+
            \lambda_tG_{n,t}(\mathbf{x}_{n,t})
        \right)
    }.
\]

If
$G_{n,t}(\mathbf{x}_n^*)\ge c$
and
$G_{n,t}(\mathbf{x}_{n,t})\le\varepsilon_G$
for some $c>0$ and $\varepsilon_G\ge0$, then under
$\lambda_t=\lambda_{\max}/\sqrt t$
with $\lambda_{\max}\ge0$,
\[
    \sigma_{n,t-1}(\mathbf{x}_{n,t})
    \ge
    \frac{
        \Delta_n(\mathbf{x}_{n,t})
        +
        \frac{
            \lambda_{\max}c
        }{
            \sqrt t
        }
        \sqrt{\beta_t}
        \sigma_{n,t-1}(\mathbf{x}_n^*)
    }{
        \sqrt{\beta_t}
        \left(
            2+
            \frac{
                \lambda_{\max}\varepsilon_G
            }{
                \sqrt t
            }
        \right)
    }.
\]
This proves Proposition~\ref{prop:effective-threshold}.
\end{proof}

\paragraph{Comparison with standard GP-UCB.}
Setting $\lambda_t=0$ in Eq.~\ref{eq:exact-threshold} recovers the standard GP-UCB selection threshold $\Delta_n(\mathbf{x}_{n,t})/(2\sqrt{\beta_t})$.
For $\lambda_t>0$, direct comparison with the bound under $G_{n,t}(\mathbf{x}_n^*)\geq c$ and $G_{n,t}(\mathbf{x}_{n,t})\leq\varepsilon_G$ shows that the GUIDE-UCB threshold is strictly larger whenever
\[
2\sqrt{\beta_t}\,
c\,
\sigma_{n,t-1}(\mathbf{x}_n^*)
>
\varepsilon_G
\Delta_n(\mathbf{x}_{n,t}).
\]
Thus, the increase in the selection threshold occurs when guidance near the agent's optimum is sufficiently strong relative to the guidance assigned to the competing suboptimal point.

\subsection{Geometric Interpretation of Informative Guidance}
\label{app:effective-guidance-natural}

The informative-guidance condition admits a direct geometric interpretation. If a received component $p^\dagger$ lies within Mahalanobis radius $\rho$ of the agent's optimum, then
$G_{n,t}(\mathbf{x}_n^*)\geq
\omega_{n,t,p^\dagger}\exp(-\rho^2/2)$.
Conversely, if a suboptimal point $\mathbf{x}$ has Mahalanobis distance at least $R_{\mathrm{tail}}$ from every received component, then, because the retained weights sum to at most one,
$G_{n,t}(\mathbf{x})\leq\exp(-R_{\mathrm{tail}}^2/2)$.
Hence, the conditions $G_{n,t}(\mathbf{x}_n^*)\geq c$ and
$G_{n,t}(\mathbf{x})\leq\varepsilon_G$ arise naturally when the received guidance places substantial support near the agent's optimum while the suboptimal candidate lies in the tails of the received components.

\section{Benchmarks, Heterogeneity, Metrics, and Experimental Settings}
\label{app:benchmarks}

\subsection{Synthetic Benchmarks}

Table~\ref{tab:appendix_benchmarks} summarizes the 12 synthetic functions. We convert minimization functions to maximization by negating the function value where necessary. The implementation follows the corresponding BoTorch test-function implementations where available~\citep{balandat2020botorch}, together with the Sphere, Weierstrass, Ellipsoid, and Zakharov implementations used in our code. The dimensions and search domains used in the experiments are reported explicitly in Table~\ref{tab:appendix_benchmarks}.

\begin{table}[h]
    \caption{Synthetic benchmark functions. All objectives are evaluated as maximization problems.}
    \label{tab:appendix_benchmarks}
    \centering
    \setlength{\tabcolsep}{4.0pt}
    \renewcommand{\arraystretch}{1.08}
    \small
    \resizebox{0.8\columnwidth}{!}{%
    \begin{tabular}{lccll}
    \toprule
    \textbf{Function} & \textbf{Dim.} & \textbf{Domain} & \textbf{Maximum} & \textbf{Landscape} \\
    \midrule
    Ackley          & 10 & $[-32.768,32.768]^{10}$  & $0$      & Multimodal \\
    Levy            & 10 & $[-10,10]^{10}$          & $0$      & Multimodal \\
    Griewank        & 10 & $[-600,600]^{10}$        & $0$      & Multimodal \\
    Rastrigin       & 10 & $[-5.12,5.12]^{10}$      & $0$      & Highly multimodal \\
    Weierstrass     & 10 & $[-0.5,0.5]^{10}$        & $0$      & Irregular multimodal \\
    Ellipsoid       & 10 & $[-5.12,5.12]^{10}$      & $0$      & Ill-conditioned \\
    Sphere          & 10 & $[-5,5]^{10}$            & $0$      & Unimodal \\
    Zakharov        & 10 & $[-5,5]^{10}$            & $0$      & Plate-shaped \\
    Rosenbrock      & 10 & $[-2.048,2.048]^{10}$    & $0$      & Narrow curved valley \\
    Michalewicz     & 10 & $[0,\pi]^{10}$           & $9.66$   & Multimodal with steep valleys \\
    Powell          & 10 & $[-5,5]^{10}$            & $0$      & Non-separable \\
    Styblinski--Tang& 10 & $[-5,5]^{10}$            & $391.66$ & Multimodal, separable \\
    \bottomrule
    \end{tabular}%
    }
\end{table}

For Powell, we optimize a 10-dimensional search vector as implemented in our experiments; because the objective is evaluated in complete four-variable blocks, the first eight coordinates are active when $d=10$, while the remaining two do not affect the function value.

All synthetic BO variables are represented in normalized search coordinates before local modeling and server-side component merging. The original domains in Table~\ref{tab:appendix_benchmarks} define the underlying black-box functions, while the merging threshold $\delta_{\mathrm{merge}}$ is applied in the normalized search space.

\subsection{Real-world federated optimization tasks}
\label{app:realworld_tasks}

\paragraph{Landmine Detection.}
The Landmine dataset~\citep{xue2007multi} contains 29 binary classification tasks corresponding to 29 landmine fields, with each field treated as one FBO agent.
For each field, the BO objective is evaluated using fixed three-fold stratified cross-validation on the local training data. Each queried configuration trains a support vector machine (SVM) with a radial basis function (RBF) kernel, with feature standardization
fitted separately within each training fold. The optimized hyperparameters are
\[
C_{\mathrm{SVM}}\in[10^{-4},10],
\qquad
\gamma_{\mathrm{RBF}}\in[10^{-2},10],
\]
both decoded on logarithmic scales. The BO objective is the mean area under the receiver operating characteristic curve (AUC) across the cross-validation folds.

\paragraph{Activity Recognition Using Mobile Phone Sensors.}
Following FTS~\citep{dai2020federated}, we use the Human Activity Recognition Using Smartphones dataset from the University of California, Irvine (UCI) Machine Learning Repository. The dataset contains measurements from 30 subjects performing six activities, represented by
561 features. Each subject is treated as one FBO agent. The official partitions are merged, and the samples of each subject are divided into a stratified $50/50$ local training and validation split for each experimental replication. The features are further standardized using statistics from the local training split only.

Each BO query specifies three logistic regression hyperparameters:
\[
B\in[20,60],\qquad
\lambda_{\mathrm{L2}}\in[10^{-6},1],\qquad
\eta\in[10^{-2},10^{-1}],
\]
where $B$ is the mini-batch size, $\lambda_{\mathrm{L2}}$ is the L2 regularization parameter, and $\eta$ is the learning rate. Each query trains a fresh linear classifier with 561 inputs and 6 outputs for 100 epochs using Stochastic Gradient Descent and cross-entropy loss. The BO objective is validation accuracy.

\paragraph{FedHPO.}
We use FedHPO-Bench~\citep{wang2023fedhpo} to construct a federated hyperparameter optimization benchmark. Cora, CiteSeer, and PubMed are treated as three outer FBO agents, with each agent corresponding to a separate Graph Convolutional Network (GCN) hyperparameter optimization
task under Federated Averaging (FedAvg). BO operates in a normalized search space, and each query is decoded according to the official FedHPO-Bench configuration space. Objective values are obtained from the official surrogate models using validation accuracy. We use full client
participation and the highest available training-round fidelity.

\subsection{Synthetic Task Heterogeneity and Empirical Similarity}
\label{app:heterogeneity}

\paragraph{Task construction.}
Generating related benchmark instances through transformations of a common base function is standard practice in black-box optimization. In particular, BBOB and COCO use search-space translations and rotations to generate different instances of benchmark functions \citep{bbob2019,hansen2021coco}. Similar constructions have also been adopted in collaborative BO. For example,~\citet{chen2025multi} introduces agent heterogeneity by translating a common base function,
\begin{equation*}
    f_n(\mathbf x)
    =
    f_{\mathrm{base}}(\mathbf x+\boldsymbol\psi_n),
\end{equation*}
where $\boldsymbol\psi_n$ is sampled from a ball whose radius is proportional to the search-domain width.

Following this general principle, we combine agent-specific translations and rotations to control the degree of task heterogeneity. All variables are first represented in the normalized BO domain. For agent $n$, we define
\begin{equation*}
    f_n(\mathbf x)
    =
    f_{\mathrm{base}}
    \left(
        \mathbf R_n(\mathbf x-\mathbf z_n)
    \right),
\end{equation*}
where $\mathbf z_n$ controls the displacement of the landscape and $\mathbf R_n$ applies a rotation that, for non-rotationally-invariant functions, also changes the coordinate interactions.

Let $\mathbf l$ and $\mathbf u$ denote the lower and upper bounds of the normalized search domain. The shift vector is sampled as
\begin{equation*}
    \mathbf z_n
    \sim
    \mathcal N
    \left(
        \mathbf 0,
        \left(
            \frac{
                \delta_{\mathrm{shift}}
                \|\mathbf u-\mathbf l\|_2
            }{\sqrt d}
        \right)^2
        \mathbf I
    \right).
\end{equation*}
Hence,
\begin{equation*}
    \sqrt{\mathbb E\|\mathbf z_n\|_2^2}
    =
    \delta_{\mathrm{shift}}
    \|\mathbf u-\mathbf l\|_2,
\end{equation*}
so $\delta_{\mathrm{shift}}$ directly controls the root-mean-square shift magnitude relative to the domain scale.

For the rotation, we draw $\mathbf A_n\in\mathbb R^{d\times d}$ with independent standard Gaussian entries and construct the skew-symmetric matrix
\begin{equation*}
    \mathbf H_n
    =
    \mathbf A_n-\mathbf A_n^\top.
\end{equation*}
We then define
\begin{equation*}
    \mathbf R_n
    =
    \exp
    \left(
        \delta_{\mathrm{rot}}\mathbf H_n
    \right).
\end{equation*}
Since the matrix exponential of a skew-symmetric matrix is orthogonal,
$\mathbf R_n$ is a valid rotation matrix. The parameter
$\delta_{\mathrm{rot}}$ controls the strength of the rotation.

We consider three regimes:
\begin{align*}
    \text{Level 1:}\quad
    &(\delta_{\mathrm{shift}},\delta_{\mathrm{rot}})
    =(0,0),\\
    \text{Level 2:}\quad
    &(\delta_{\mathrm{shift}},\delta_{\mathrm{rot}})
    =(0.05,0.1),\\
    \text{Level 3:}\quad
    &(\delta_{\mathrm{shift}},\delta_{\mathrm{rot}})
    =(0.3,1.0).
\end{align*}
Level~1 contains identical local objectives, Level~2 introduces mild heterogeneity, and Level~3 applies substantially stronger transformations. For every experimental replication, the same agent-specific transformations are used by all compared methods.

\paragraph{Empirical task similarity.}
The transformation parameters specify how tasks are generated but do not
directly quantify the similarity of the resulting objective landscapes.
We therefore complement them with a rank-based empirical similarity
measure.

For each benchmark $b$, heterogeneity level $\ell$, and experimental
replication $r$, we generate a common scrambled Sobol probe set
\begin{equation*}
    \mathcal X_{\mathrm{probe}}
    =
    \{\mathbf x_q\}_{q=1}^{Q},
    \qquad Q=4096,
\end{equation*}
over the normalized search domain using a fixed Sobol seed of zero.
For agent $n$, we evaluate its noise-free latent objective on all probe
points and form
\begin{equation*}
    \mathbf v_{n}^{(b,\ell,r)}
    =
    \left[
        f_n(\mathbf x_1),
        \ldots,
        f_n(\mathbf x_Q)
    \right].
\end{equation*}

For every pair of agents $n<m$, their similarity is measured by the
Spearman rank correlation
\begin{equation*}
    \rho_{n,m}^{(b,\ell,r)}
    =
    \rho_{\mathrm S}
    \left(
        \mathbf v_n^{(b,\ell,r)},
        \mathbf v_m^{(b,\ell,r)}
    \right).
\end{equation*}
The mean similarity of one benchmark realization is
\begin{equation*}
    \mathrm{Sim}_{b,\ell,r}
    =
    \frac{2}{N(N-1)}
    \sum_{n<m}
    \rho_{n,m}^{(b,\ell,r)}.
\end{equation*}
Finally, we average over the $B=12$ synthetic benchmarks and
$R=10$ experimental realizations:
\begin{equation*}
    \overline{\mathrm{Sim}}_{\ell}
    =
    \frac{1}{BR}
    \sum_{b=1}^{B}
    \sum_{r=1}^{R}
    \mathrm{Sim}_{b,\ell,r}.
\end{equation*}

Spearman correlation measures agreement in the relative ranking of common
candidate locations and is insensitive to differences in objective scale.
We use it only to characterize the generated task heterogeneity.

\begin{table}[ht]
    \caption{
    Mean pairwise Spearman correlation between the 16 agent objectives. Each benchmark entry is averaged over 10 task realizations, and Overall additionally averages across the 12 benchmarks. Higher values indicate greater agreement in candidate rankings.
    }
    \label{tab:task_similarity}
    \centering
    \small
    \setlength{\tabcolsep}{5.0pt}
    \renewcommand{\arraystretch}{1.05}
    \begin{tabular}{lccc}
    \toprule
    \textbf{Benchmark}
    & \textbf{Level 1}
    & \textbf{Level 2}
    & \textbf{Level 3} \\
    \midrule
    Ackley             & 1.000 & 0.485 & 0.027 \\
    Levy               & 1.000 & 0.450 & 0.076 \\
    Griewank           & 1.000 & 0.861 & 0.155 \\
    Rastrigin          & 1.000 & 0.489 & 0.136 \\
    Weierstrass        & 1.000 & 0.464 & 0.005 \\
    Ellipsoid          & 1.000 & 0.740 & 0.120 \\
    Sphere             & 1.000 & 0.861 & 0.155 \\
    Zakharov           & 1.000 & 0.527 & 0.013 \\
    Rosenbrock         & 1.000 & 0.676 & 0.115 \\
    Michalewicz        & 1.000 & 0.034 & -0.001 \\
    Powell             & 1.000 & 0.635 & 0.046 \\
    Styblinski--Tang   & 1.000 & 0.466 & 0.124 \\
    \midrule
    \textbf{Overall}
                       & \textbf{1.000}
                       & \textbf{0.557}
                       & \textbf{0.081} \\
    \bottomrule
    \end{tabular}
\end{table}

The empirical similarities clearly separate the three heterogeneity regimes. Level~1 is exactly homogeneous, Level~2 retains moderate agreement in candidate rankings, and Level~3 exhibits only weak average agreement across agents.

\subsection{Evaluation Metrics}
\label{app:evaluation_metrics}

For a synthetic objective, let $f_n^*=f_n(\mathbf x_n^*)$ denote the optimal value of agent $n$, and let
$f_{n,t}^{\mathrm{best}}$ denote its best noise-free value observed through round $t$.
The simple regret is
\begin{equation*}
    s_{n,t}
    =
    f_n^*-f_{n,t}^{\mathrm{best}},
\end{equation*}
and the reported average simple regret is
\begin{equation*}
    \overline{s}_t
    =
    \frac{1}{N}
    \sum_{n=1}^{N}s_{n,t}.
\end{equation*}
Lower values are better.

For Landmine Detection, the reported performance is AUC averaged across the 29 agents.
For Activity Recognition and FedHPO, the reported performance is validation accuracy averaged across their agents.
Higher values are better for these real-world metrics.
For convergence plots, we use the lower-is-better metrics $1-\mathrm{AUC}$ and $1-\mathrm{Accuracy}$. The main text reports the
original AUC and accuracy values.

Average ranks are computed independently within each benchmark and then averaged over the corresponding benchmark set.

\subsection{Shared Experimental Settings}
\label{app:shared-settings}

All compared methods use identical search domains, per-agent evaluation budgets, and repetition seeds within each benchmark. For a fair comparison, all baseline methods use the hyperparameter settings recommended in their original papers or official implementations. For synthetic experiments, all methods except FTS-DE use the same Collaborative Latin Hypercube Sampling (CLHS) initialization following~\citet{chen2025multi}. FTS-DE retains the initialization prescribed by its Distributed Exploration mechanism~\citep{dai2021differentially}, which assigns agents to different search subregions during initialization. Synthetic experiments use $N=16$ agents, 30 initial observations per agent, 50 BO rounds, and Gaussian observation noise with standard deviation $0.1$.
The real-world experiments use their natural task counts: 29 agents for Landmine Detection, 30 agents for Activity Recognition, and 3 agents for FedHPO. Each experiment uses 30 initial evaluations per agent and 50 BO rounds, and the complete federated experiment is repeated independently 10 times.

\paragraph{GUIDE-FBO configuration.}
GUIDE-FBO introduces four principal tunable hyperparameters: the number of posterior-optimum samples $M$, the downlink packet size $P$, the merging threshold $\delta_{\mathrm{merge}}$, and the maximum guidance strength $\lambda_{\max}$. Unless otherwise stated, the same configuration is used across all benchmarks without benchmark-specific tuning. The remaining quantities in Table~\ref{tab:guide_hyperparameters} are fixed modeling or implementation choices.

\begin{table}[ht]
    \caption{
    Default configuration of GUIDE-FBO. The same settings are used across
    benchmarks unless otherwise stated.
    }
    \label{tab:guide_hyperparameters}
    \centering
    \small
    \setlength{\tabcolsep}{6pt}
    \renewcommand{\arraystretch}{1.08}
    \begin{tabular}{lc}
        \toprule
        \textbf{Parameter} & \textbf{Setting} \\
        \midrule
        Posterior-optimum samples $M$          & $500$ \\
        ORF features $D_{\mathrm{ORF}}$        & $500$ \\
        DPGMM covariance                       & Diagonal \\
        LCB coefficient $\kappa$               & $1.0$ \\
        Merge threshold $\delta_{\mathrm{merge}}$ & $0.05$ \\
        Downlink packet size $P$               & $5$ \\
        Maximum guidance strength $\lambda_{\max}$ & $1.0$ \\
        Guidance schedule $\lambda_t$          & $\lambda_{\max}/\sqrt{t}$ \\
        \bottomrule
    \end{tabular}
\end{table}

The server-side softmax temperature is determined automatically from the spread of the standardized component scores rather than treated as an independently tuned hyperparameter. Specifically, letting
\[
\Delta v_t
=
\max_{\ell} v'_\ell-\min_{\ell} v'_\ell,
\]
we use
\[
\tau_t
=
\begin{cases}
\max\{0.1\Delta v_t,\,10^{-2}\},
& \Delta v_t>10^{-6},\\
10^{-2},
& \text{otherwise}.
\end{cases}
\]
The local surrogate is implemented as a fixed-noise GP with an ARD Mat\'ern-$5/2$ kernel, whose hyperparameters are fitted by maximizing the exact marginal likelihood. The ORF posterior-path approximation follows the corresponding Mat\'ern-$5/2$ spectral construction described in Appendix~\ref{app:orf}.

\section{Complete Experimental Results and Convergence Trajectories}
\label{app:full-results}

This section reports the complete numerical results and convergence trajectories underlying Section~\ref{sec:main_results}. For each synthetic heterogeneity level, we report the final average simple regret
on all 12 benchmark functions and the corresponding trajectories over the 50 BO rounds. We then report the trajectories for the three real-world tasks. All curves show the mean over 10 independent runs, with shaded regions denoting $\pm 1$ standard error of the mean.

\subsection{Level 1: $\delta_{\mathrm{shift}}=0$ and $\delta_{\mathrm{rot}}=0$}

\begin{table*}[h]
    \caption{Complete synthetic results under Level 1 (Homogeneous). Entries report the final mean average simple regret over 10 independent runs. Lower is better. The best and second-best results in each row are shown in bold and underlined, respectively.}
    \label{tab:appendix_level1_results}
    \centering
    \setlength{\tabcolsep}{3.0pt}
    \renewcommand{\arraystretch}{1.07}
    \scriptsize
    \resizebox{\textwidth}{!}{%
    \begin{tabular}{lcccccccccccc}
    \toprule
    \textbf{Benchmark}
    & \textbf{TS} & \textbf{UCB} & \textbf{NEI}
    & \textbf{FTS} & \textbf{FTS-DE} & \textbf{FMTBO}
    & \textbf{CGP-TS} & \textbf{CGP-UCB} & \textbf{CGP-NEI}
    & \textbf{GUIDE-TS} & \textbf{GUIDE-UCB} & \textbf{GUIDE-NEI} \\
    \midrule
    Ackley             & 19.4789 & 19.3082 & 18.4994 & 19.4870 & 19.6941 & 19.0814 
                       & 19.5242 & 8.7013 & 6.5675 & 18.6182 & \best{4.4896} & \second{5.2015} \\
    Levy               & 24.9042 & 11.9439 & 10.5159 & 23.0893 & 23.5361 & 9.3873 
                       & 22.9011 & 3.7128 & 2.6747 & 11.6750 & \best{1.5865} & \second{1.9122} \\
    Griewank           & 47.1321 & 14.1117 & 13.3126 & 43.9996 & 15.4063 & 11.3637 
                       & 43.2063 & 6.3414 & \second{4.9622} & 23.2003 & \best{4.9517} & 6.3052 \\
    Rastrigin          & 105.6477 & 80.4791 & 79.9759 & 102.8319 & 103.4912 & 81.3840 
                       & 102.9572 & 62.0303 & 59.7812 & 90.3331 & \best{53.4890} & \second{54.6122} \\
    Weierstrass        & 13.5724 & 12.8962 & 11.8821 & 13.5131 & 13.5052 & 12.2728 
                       & 13.5183 & 8.0722 & 8.0706 & 12.4086 & \best{3.2670} & \second{3.7779} \\
    Ellipsoid          & 146.4074 & 117.5092 & 87.9959 & 138.8153 & 137.5396 & 81.6766 
                       & 140.1417 & 26.8736 & 24.0600 & 98.2097 & \best{1.7311} & \second{3.0529} \\
    Sphere             & 23.9937 & 14.8929 & 11.2488 & 23.4224 & 21.1340 & 9.8874 
                       & 22.6427 & 3.9151 & 2.9221 & 11.6519 & \best{0.6487} & \second{1.0040} \\
    Zakharov           & 60.3082 & 47.1536 & 46.0734 & 58.7893 & 60.6199 & 46.2480 
                       & 59.6063 & 48.3546 & 45.0008 & 44.7664 & \best{15.6653} & \second{22.4356} \\
    Rosenbrock         & 376.8339 & 110.0299 & 105.3660 & 363.1293 & 270.5447 & 111.8808 
                       & 340.1700 & 90.6186 & 73.0144 & 162.5873 & \best{45.4639} & \second{51.9817} \\
    Michalewicz        & 6.3605 & 5.8237 & 5.8773 & 6.4475 & 6.2028 & 6.2124 
                       & 6.4715 & \second{5.2263} & \best{5.0059} & 6.3292 & 5.2700 & 5.2798 \\
    Powell             & 677.0909 & 139.5000 & 123.8940 & 716.2145 & 707.2955 & 116.0112 
                       & 651.4241 & 195.2524 & 116.9121 & 329.9346 & \best{38.6966} & \second{38.7592} \\
    Styblinski--Tang    & 156.5159 & 107.2154 & 103.6553 & 156.2315 & 155.8989 & 103.3376 
                       & 153.1554 & \second{62.7248} & \best{56.2330} & 150.9906 & 99.6545 & 102.8457 \\
    \bottomrule
    \end{tabular}%
    }
\end{table*}

\begin{figure*}[h]
    \centering
    \begin{subfigure}[t]{0.25\textwidth}
        \centering
        \includegraphics[width=\linewidth]{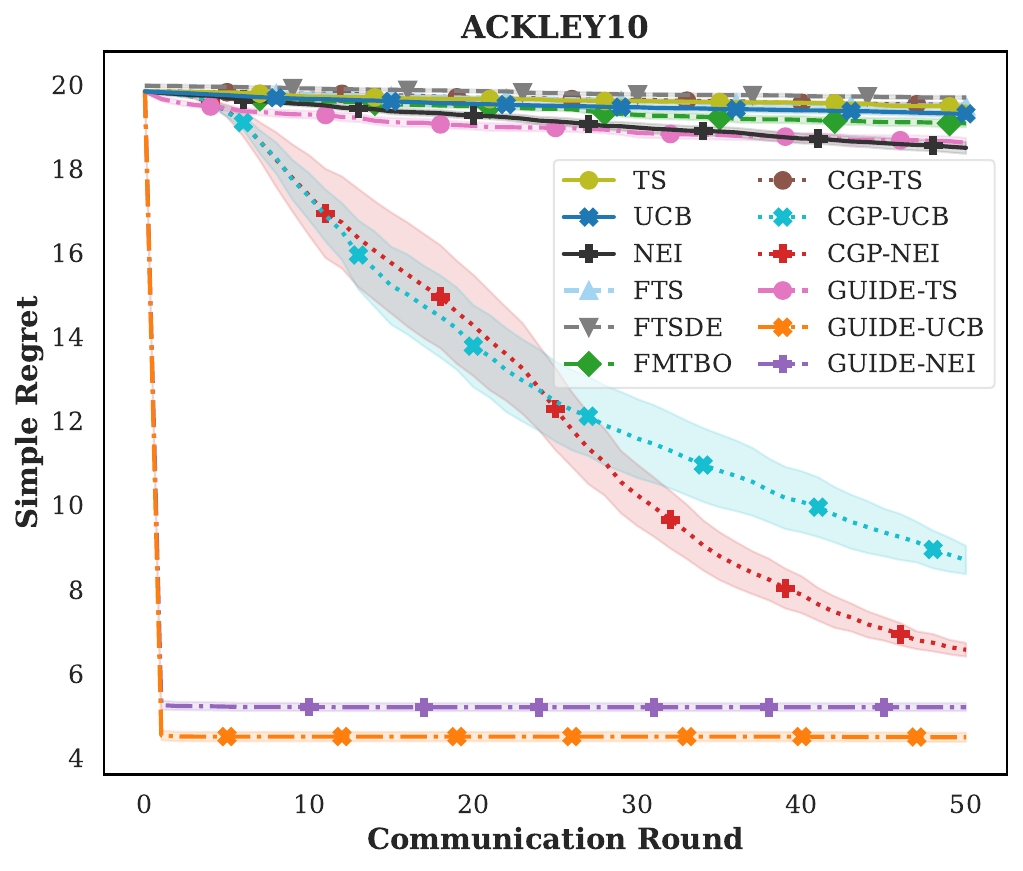}
        \caption{Ackley}
    \end{subfigure}\hfill
    \begin{subfigure}[t]{0.25\textwidth}
        \centering
        \includegraphics[width=\linewidth]{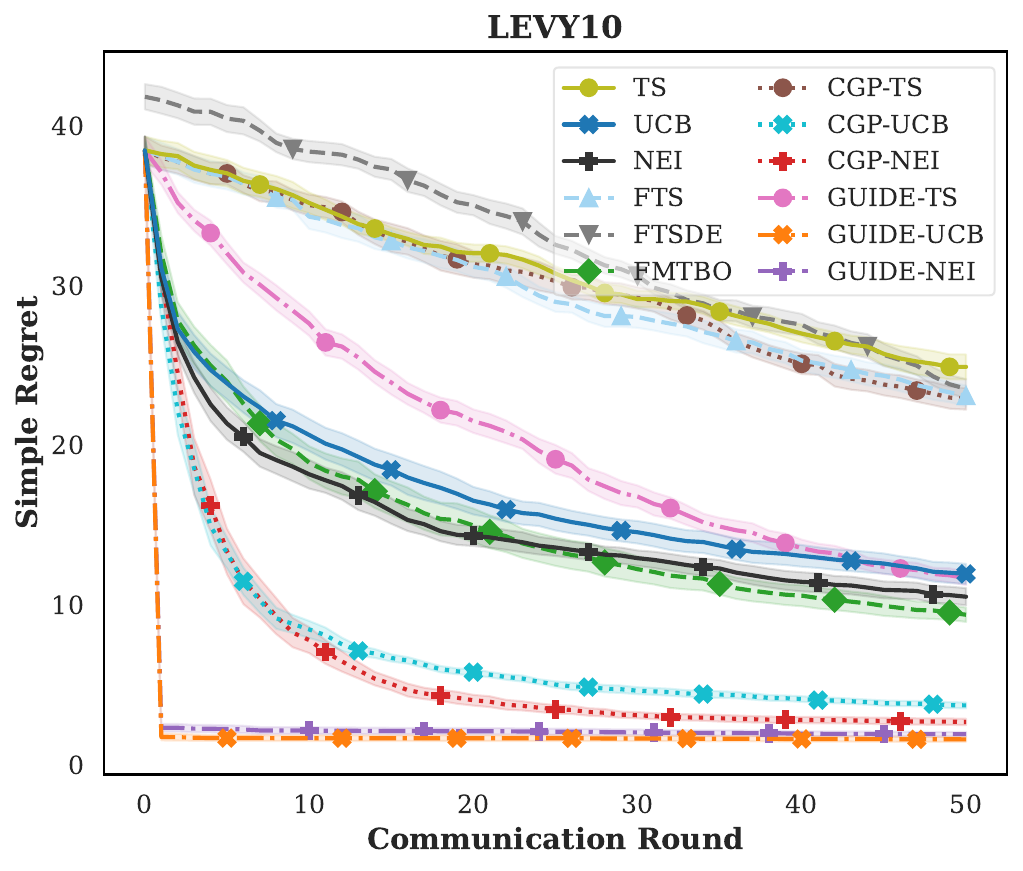}
        \caption{Levy}
    \end{subfigure}\hfill
    \begin{subfigure}[t]{0.25\textwidth}
        \centering
        \includegraphics[width=\linewidth]{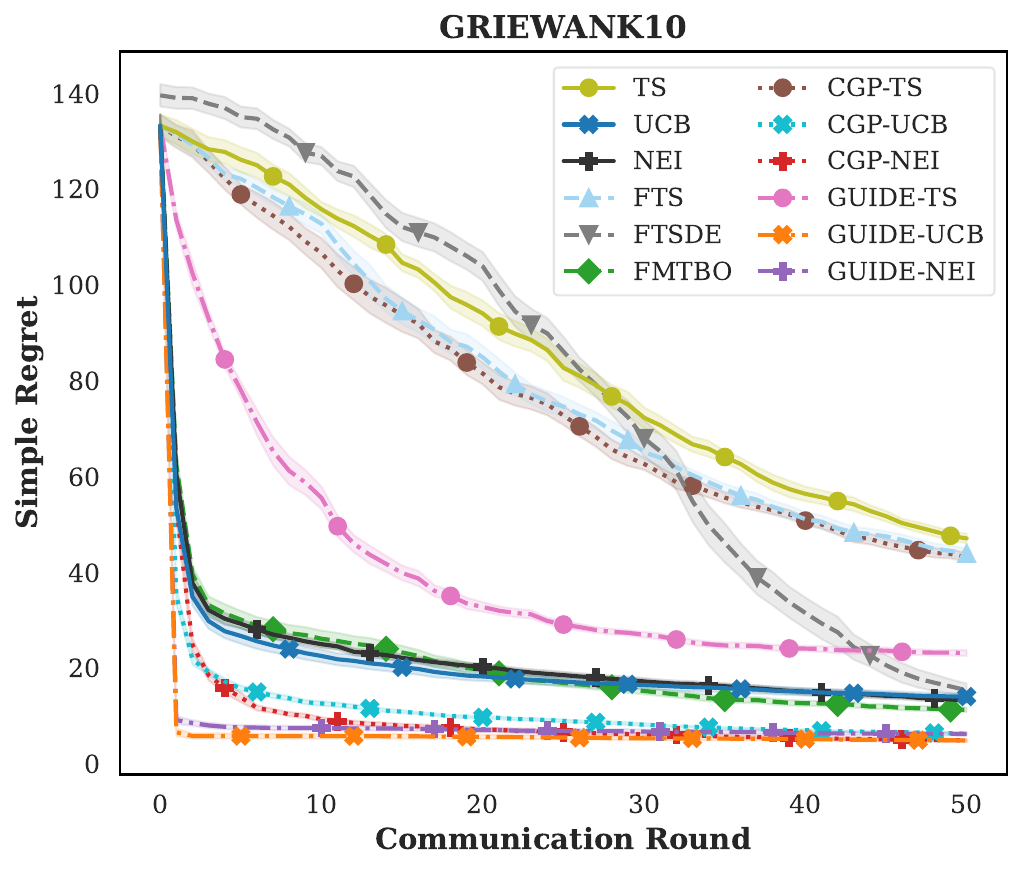}
        \caption{Griewank}
    \end{subfigure}\hfill
    \begin{subfigure}[t]{0.25\textwidth}
        \centering
        \includegraphics[width=\linewidth]{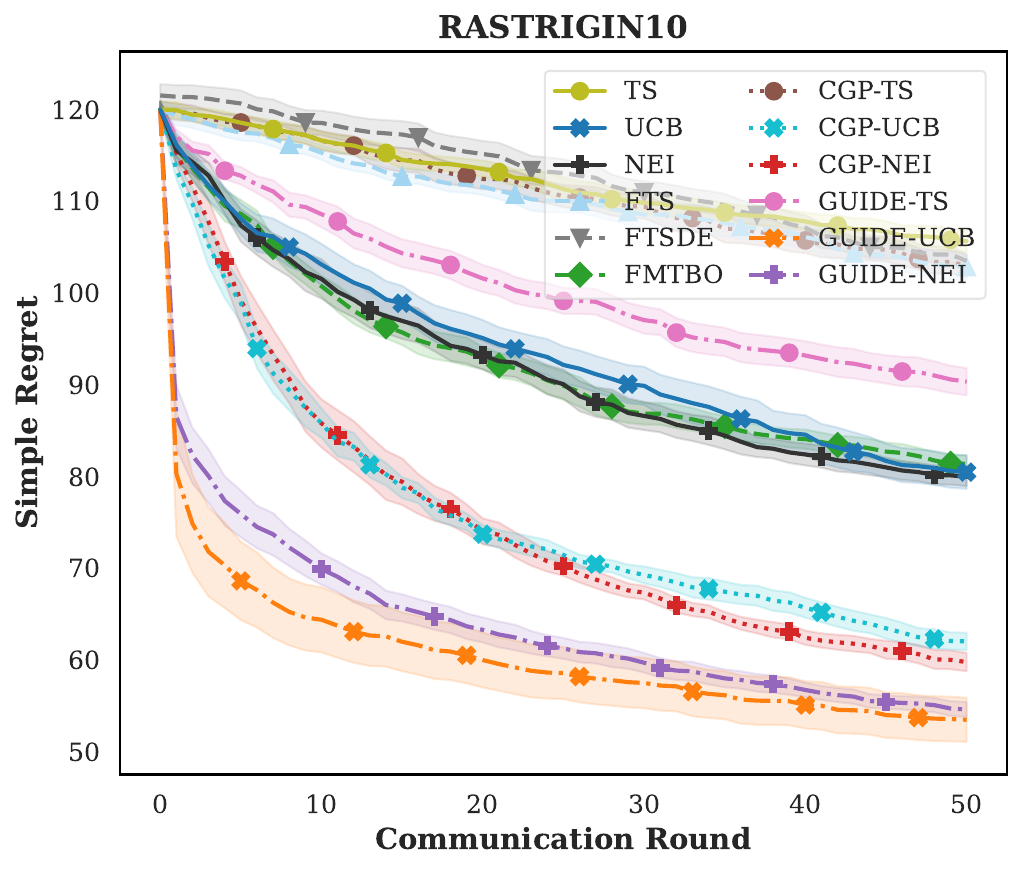}
        \caption{Rastrigin}
    \end{subfigure}\hfill

    \begin{subfigure}[t]{0.25\textwidth}
        \centering
        \includegraphics[width=\linewidth]{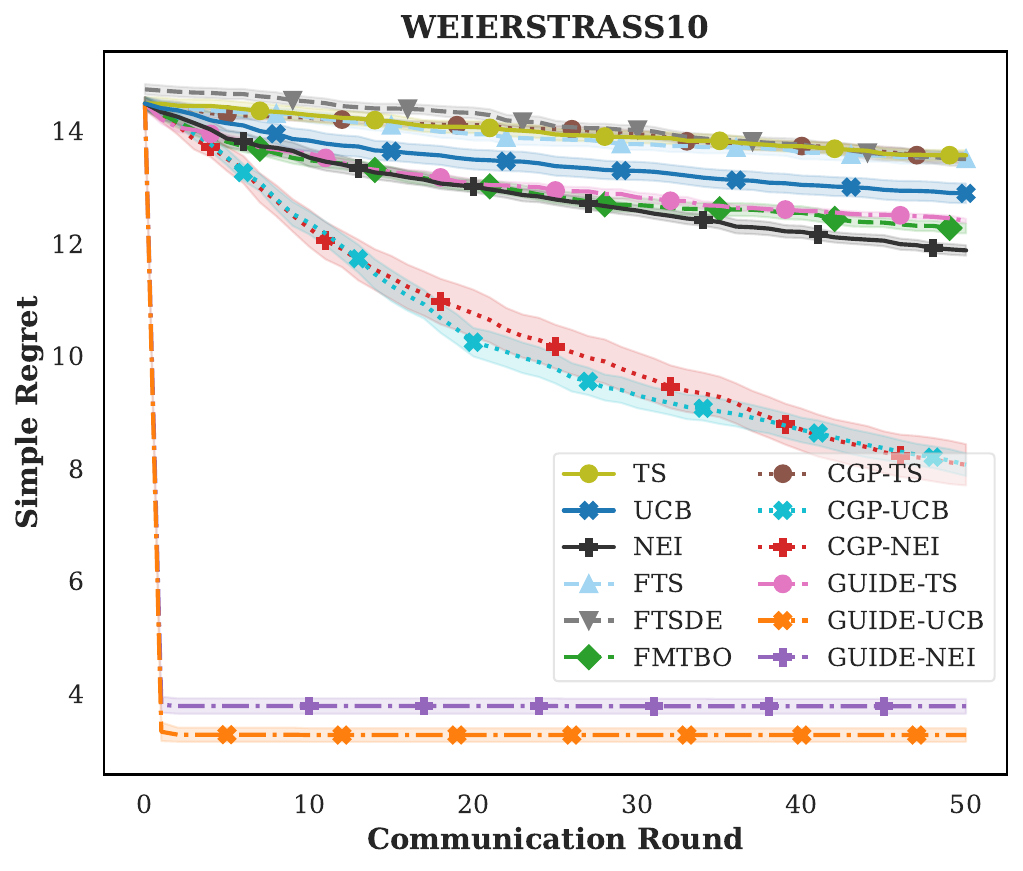}
        \caption{Weierstrass}
    \end{subfigure}\hfill
    \begin{subfigure}[t]{0.25\textwidth}
        \centering
        \includegraphics[width=\linewidth]{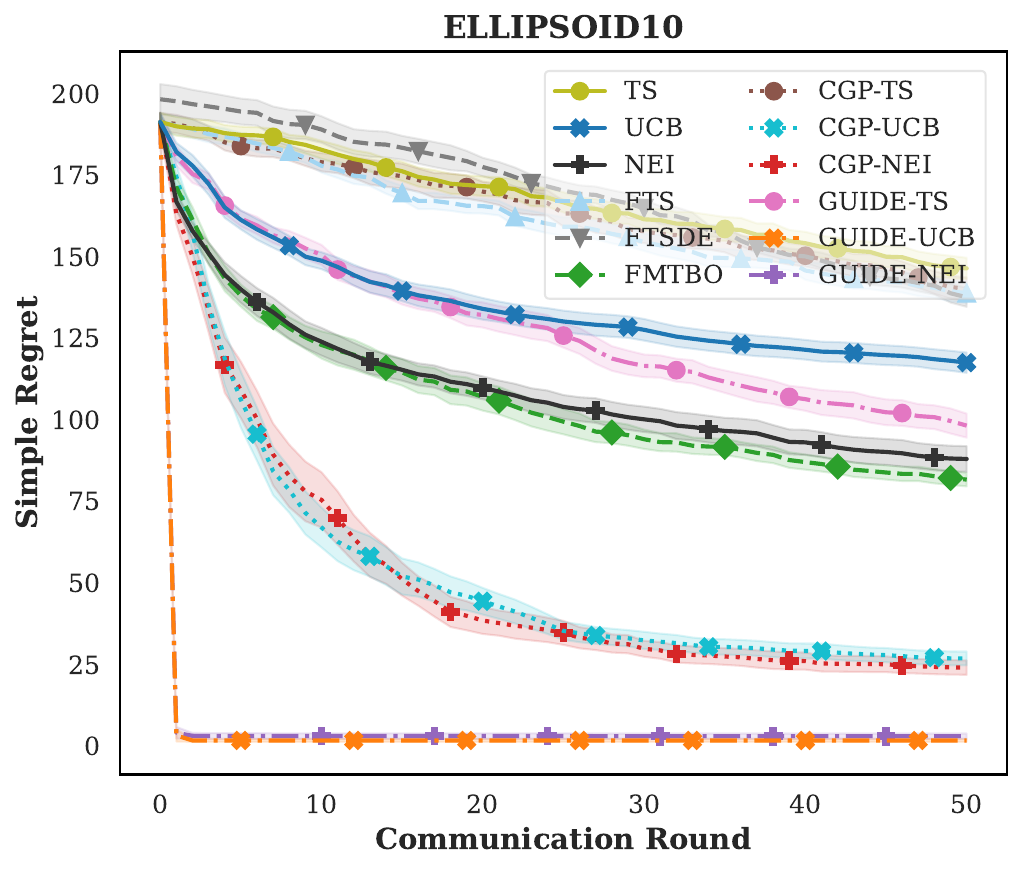}
        \caption{Ellipsoid}
    \end{subfigure}\hfill
    \begin{subfigure}[t]{0.25\textwidth}
        \centering
        \includegraphics[width=\linewidth]{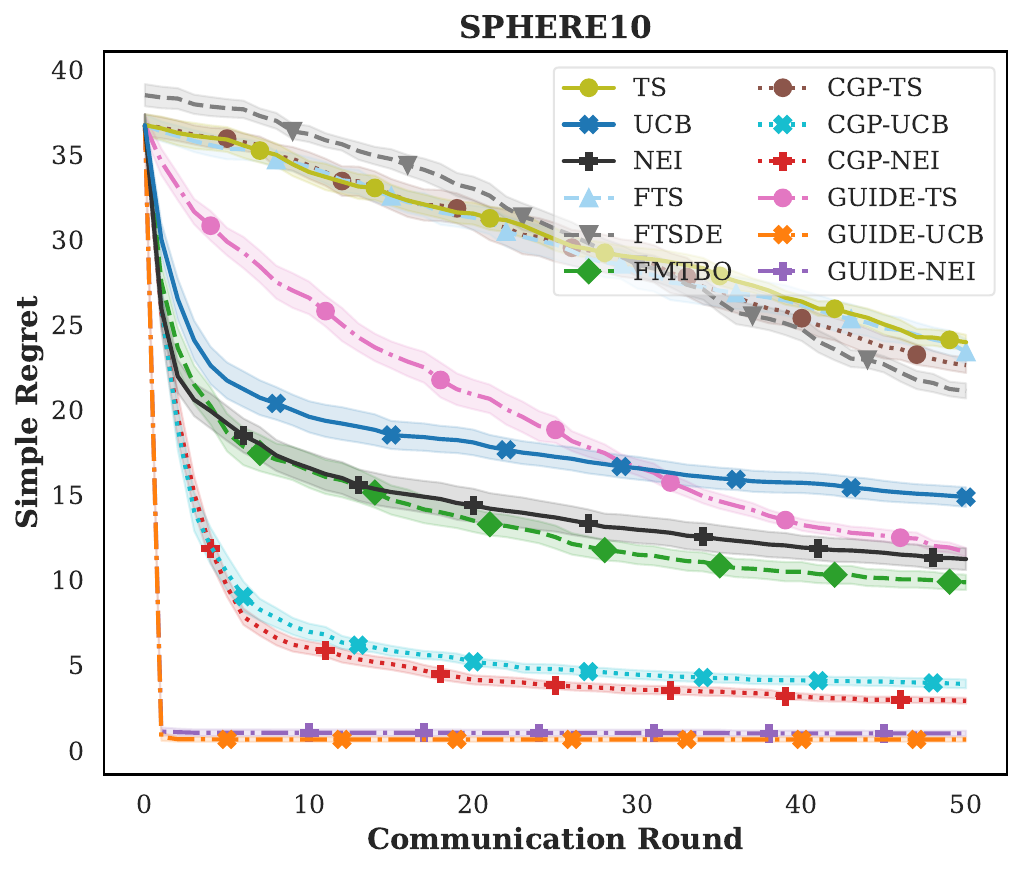}
        \caption{Sphere}
    \end{subfigure}\hfill
    \begin{subfigure}[t]{0.25\textwidth}
        \centering
        \includegraphics[width=\linewidth]{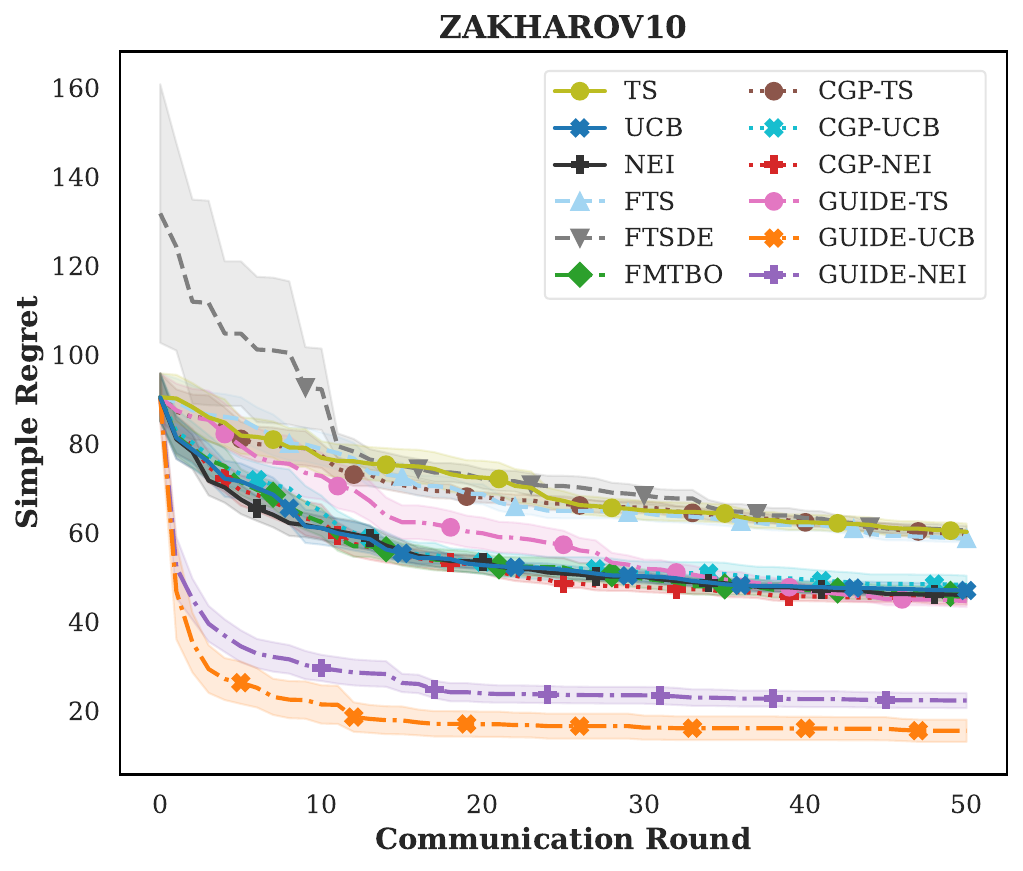}
        \caption{Zakharov}
    \end{subfigure}

    \begin{subfigure}[t]{0.25\textwidth}
        \centering
        \includegraphics[width=\linewidth]{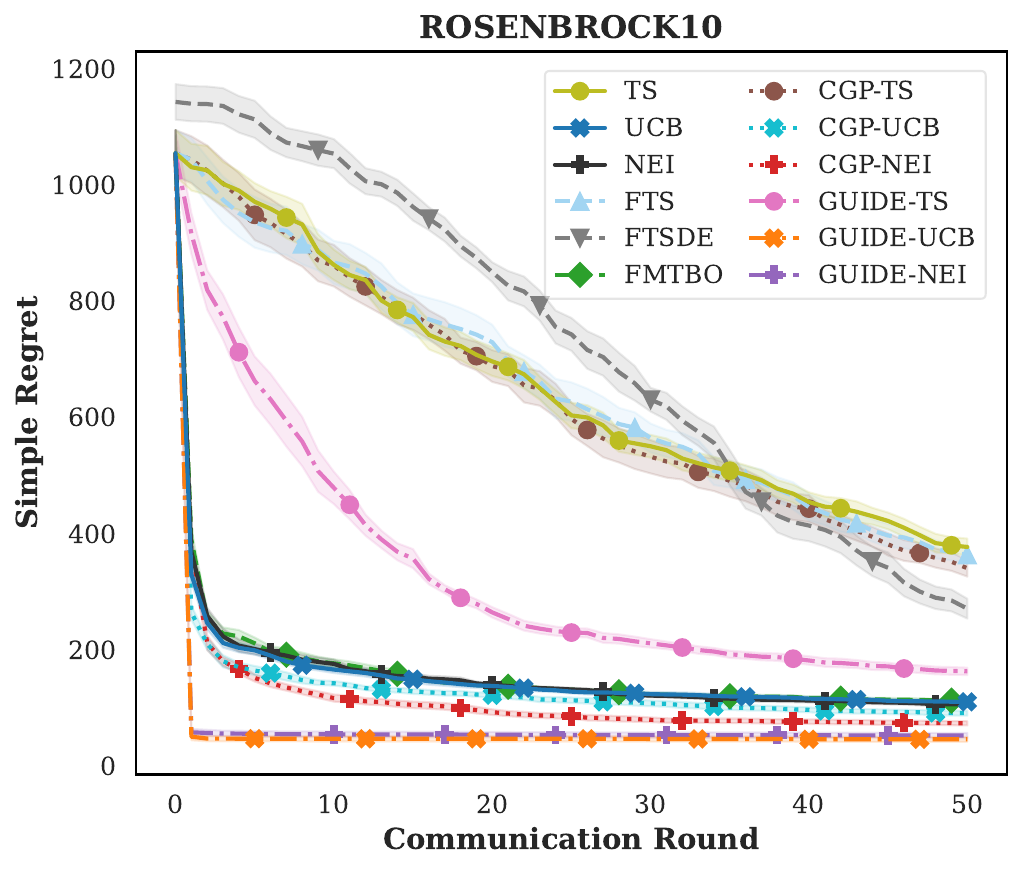}
        \caption{Rosenbrock}
    \end{subfigure}\hfill
    \begin{subfigure}[t]{0.25\textwidth}
        \centering
        \includegraphics[width=\linewidth]{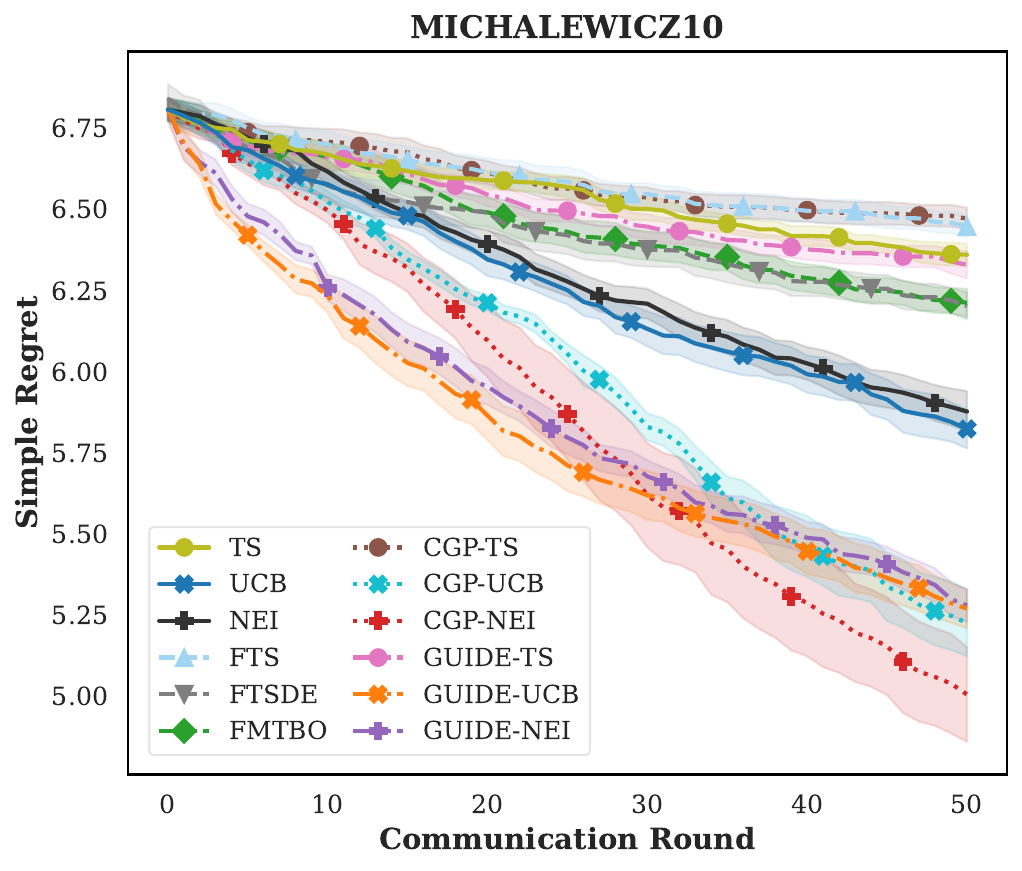}
        \caption{Michalewicz}
    \end{subfigure}\hfill
    \begin{subfigure}[t]{0.25\textwidth}
        \centering
        \includegraphics[width=\linewidth]{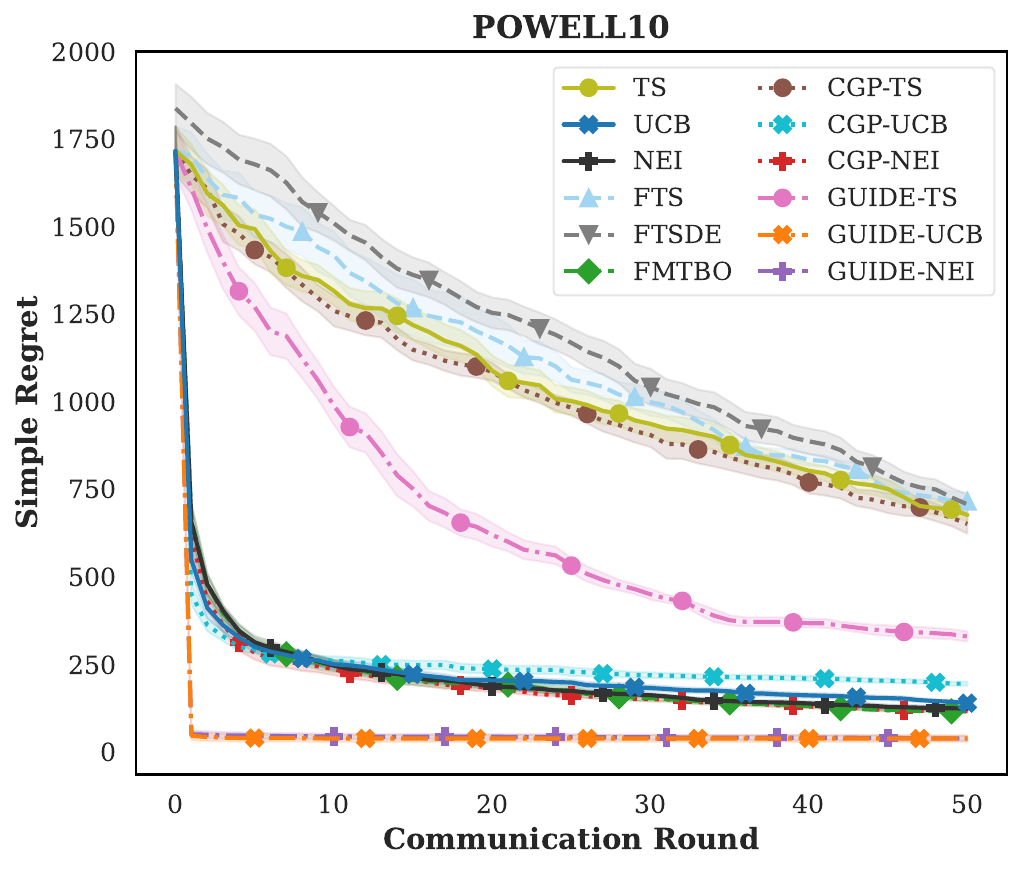}
        \caption{Powell}
    \end{subfigure}\hfill
    \begin{subfigure}[t]{0.25\textwidth}
        \centering
        \includegraphics[width=\linewidth]{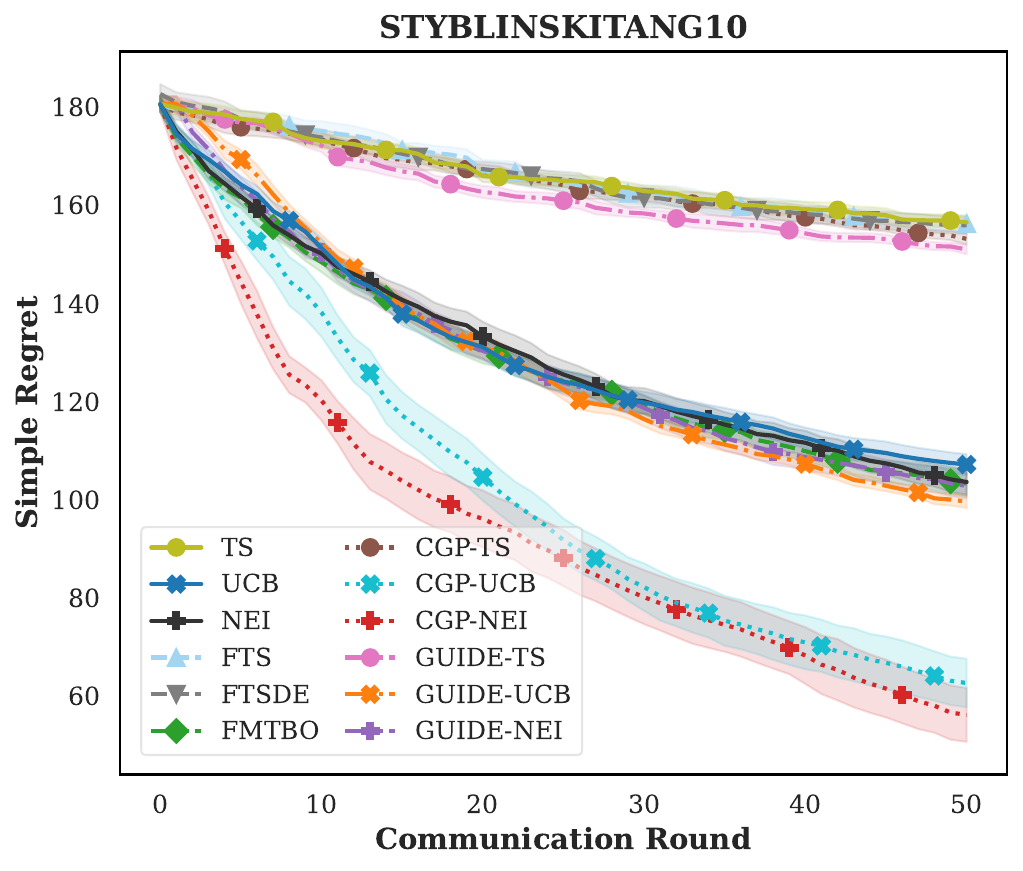}
        \caption{Styblinski--Tang}
    \end{subfigure}

    \caption{Convergence trajectories for all 12 synthetic functions under Level 1 (Homogeneous). Curves show the mean average simple regret over 10 independent runs, and shaded regions denote $\pm 1$ standard error of the mean across runs. Lower is better.}
    \label{fig:appendix_level1_curves}
\end{figure*}

Level~1 represents the fully aligned setting. All agents optimize the same
latent objective, and the empirical mean pairwise Spearman correlation is
$1.000$. GUIDE-UCB achieves the best average rank of $1.33$ and obtains the
lowest final average simple regret on $10$ of the $12$ benchmarks.
GUIDE-NEI ranks second overall with an average rank of $2.42$ and is among
the two best methods on nine benchmarks. Moreover, GUIDE-UCB, GUIDE-NEI,
and GUIDE-TS each outperform their matched independent acquisition rule on
all $12$ functions.

The numerical gaps are also large on several benchmarks.
UCB and GUIDE-UCB obtain final regrets of $117.5092$ and $1.7311$ on
Ellipsoid, $14.8929$ and $0.6487$ on Sphere, $47.1536$ and $15.6653$ on
Zakharov, $110.0299$ and $45.4639$ on Rosenbrock, and $139.5000$ and
$38.6966$ on Powell. GUIDE-UCB also obtains $4.4896$ on Ackley and
$1.5865$ on Levy, compared with $8.7013$ and $3.7128$ for CGP-UCB.
These results show that, when the local objectives are fully aligned,
exchanging distributions over the locations of the agents' optima can substantially reduce simple regret within a fixed evaluation budget.

The convergence trajectories in
Figure~\ref{fig:appendix_level1_curves} make this effect more apparent.
Except on Michalewicz and Styblinski--Tang, GUIDE-UCB and GUIDE-NEI
separate from most competing methods relatively early and remain in the
lowest-regret group over much of the subsequent optimization horizon. This
behavior is consistent with the mechanism of GUIDE. Because the agents optimize the same objective, their uploaded components are more likely to concentrate around the same promising regions. Server-side merging can therefore combine consistent information from different agents. FI-GP then increases
local posterior uncertainty selectively around these regions without changing
the local posterior mean. The resulting FI-GP makes these globally supported regions more attractive
to UCB and NEI when they remain plausible under the local posterior, which
can lead them to be evaluated earlier.

GUIDE-TS shows a smaller visual advantage. Unlike UCB and NEI, TS selects
each query by maximizing a random sample path drawn from FI-GP. Federated
guidance changes the distribution from which this path is sampled, but the
realized path can still attain its maximum elsewhere. This additional
sampling variability makes the effect of spatial guidance less consistent
across rounds. GUIDE-TS still improves upon independent TS on all $12$
functions, but does not show the same clear early separation as GUIDE-UCB
and GUIDE-NEI.

Michalewicz and Styblinski--Tang are the two main exceptions. Michalewicz
contains multiple steep valleys, so its high-quality regions are spatially
narrow. Region-level guidance is less useful when its spatial support does
not closely match the relevant valley. Since the tasks are identical at Level~1, the
high-potential points transferred by CGP are directly relevant to every
agent, which provides a favorable setting for point-level transfer.
CGP-NEI and CGP-UCB accordingly obtain $5.0059$ and $5.2263$, compared
with $5.2700$ and $5.2798$ for GUIDE-UCB and GUIDE-NEI.

Styblinski--Tang has many competing local basins with similar structure.
Because several local basins can remain plausible, a distribution over
optimum locations may provide a less precise signal than a directly
transferred high-quality point.
Under Level~1, where all agents optimize exactly the same objective, directly
transferring a good point can be more efficient because the same point is
useful to every agent. This favors the point-level transfer used by CGP.
CGP-NEI and CGP-UCB obtain final regrets of $56.2330$ and $62.7248$,
while GUIDE-UCB and GUIDE-NEI obtain $99.6545$ and $102.8457$. 
These two exceptions show that, even when the tasks are identical, the most
effective form of transferred information still depends on the geometry of
the objective.

\subsection{Level 2: $\delta_{\mathrm{shift}}=0.05$ and $\delta_{\mathrm{rot}}=0.1$}

\begin{table*}[h]
    \caption{Complete synthetic results under Level 2 (Mild heterogeneity). Entries report the final mean average simple regret over 10 independent runs. Lower is better. The best and second-best results in each row are shown in bold and underlined, respectively.}
    \label{tab:appendix_level2_results}
    \centering
    \setlength{\tabcolsep}{3.0pt}
    \renewcommand{\arraystretch}{1.07}
    \scriptsize
    \resizebox{\textwidth}{!}{%
    \begin{tabular}{lcccccccccccc}
    \toprule
    \textbf{Benchmark}
    & \textbf{TS} & \textbf{UCB} & \textbf{NEI}
    & \textbf{FTS} & \textbf{FTS-DE} & \textbf{FMTBO}
    & \textbf{CGP-TS} & \textbf{CGP-UCB} & \textbf{CGP-NEI}
    & \textbf{GUIDE-TS} & \textbf{GUIDE-UCB} & \textbf{GUIDE-NEI} \\
    \midrule
    Ackley             & 19.5136 & 19.4350 & 18.7628 & 19.5511 & 19.7171 & 18.9928 
                       & 19.5347 & 10.1460 & 10.9368 & 18.8906 & \best{8.4785} & \second{9.5379} \\
    Levy               & 24.3585 & 11.0529 & 10.1363 & 23.4060 & 25.8641 & 9.7791 
                       & 24.2127 & 4.3481 & 3.5741 & 12.7145 & \best{3.2816} & \second{3.3482} \\
    Griewank           & 44.4283 & 13.5953 & 11.9247 & 44.1421 & 26.2381 & 11.3470 
                       & 43.6216 & 7.1888 & \second{7.1002} & 24.8356 & \best{6.3000} & 7.8444 \\
    Rastrigin          & 101.5417 & 80.4771 & 78.9030 & 101.4711 & 101.7745 & 80.9836 
                       & 103.0048 & 64.6460 & 62.5459 & 88.0868 & \best{58.3082} & \second{60.1472} \\
    Weierstrass        & 13.4680 & 12.9344 & 12.1369 & 13.4517 & 13.6576 & 12.5151 
                       & 13.5531 & 8.9999 & 9.3717 & 12.7340 & \best{6.6966} & \second{6.8316} \\
    Ellipsoid          & 139.8685 & 113.6023 & 86.7074 & 139.5091 & 130.8015 & 78.4957 
                       & 136.6720 & 37.2115 & 29.1808 & 91.6564 & \best{13.1331} & \second{13.3723} \\
    Sphere             & 21.6883 & 14.2443 & 10.2896 & 23.9013 & 20.9801 & 9.0656 
                       & 21.6460 & 4.5267 & 3.3859 & 11.3485 & \best{2.2689} & \second{2.5063} \\
    Zakharov           & 62.9710 & 49.8750 & 48.2225 & 62.7988 & 62.9958 & 47.6923 
                       & 62.7797 & 50.1082 & 47.1746 & 48.4994 & \second{22.1717} & \best{21.8975} \\
    Rosenbrock         & 385.1965 & 97.8257 & 104.8257 & 365.1723 & 356.7614 & 107.8170 
                       & 392.6816 & 98.7769 & 83.6536 & 183.4833 & \best{55.2114} & \second{59.3597} \\
    Michalewicz        & 6.6497 & 6.1248 & 6.1995 & 6.5934 & 6.5978 & 6.4469 
                       & 6.6054 & 6.1800 & 6.1802 & 6.5636 & \best{5.8629} & \second{5.8783} \\
    Powell             & 701.0029 & 157.3605 & 126.0791 & 746.8994 & 853.3029 & 117.3149 
                       & 710.5802 & 205.5435 & 116.5086 & 348.1294 & \second{94.6703} & \best{74.3549} \\
    Styblinski--Tang    & 159.3047 & \second{114.4204} & 114.9073 & 160.0953 & 162.8046 & 119.9616 
                       & 157.4158 & 127.6079 & \best{109.7452} & 158.4985 & 123.2241 & 116.9887 \\
    \bottomrule
    \end{tabular}%
    }
\end{table*}

\begin{figure*}[h]
    \centering
    \begin{subfigure}[t]{0.25\textwidth}
        \centering
        \includegraphics[width=\linewidth]{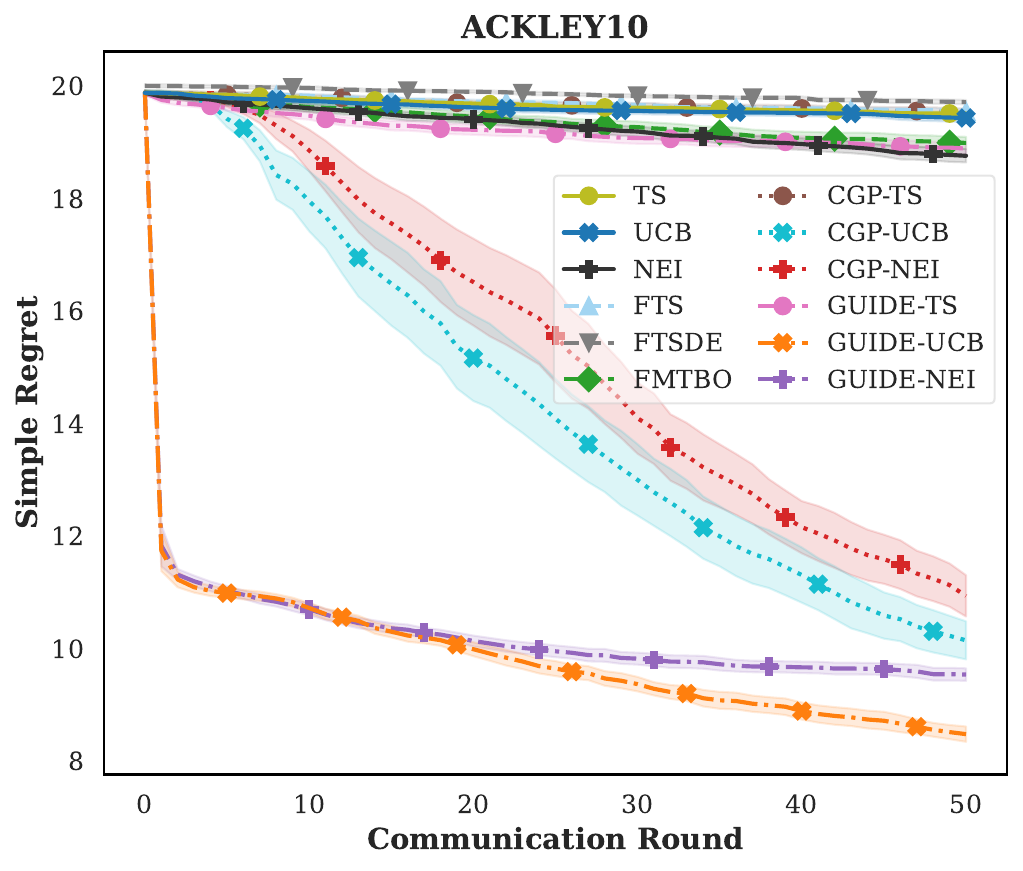}
        \caption{Ackley}
    \end{subfigure}\hfill
    \begin{subfigure}[t]{0.25\textwidth}
        \centering
        \includegraphics[width=\linewidth]{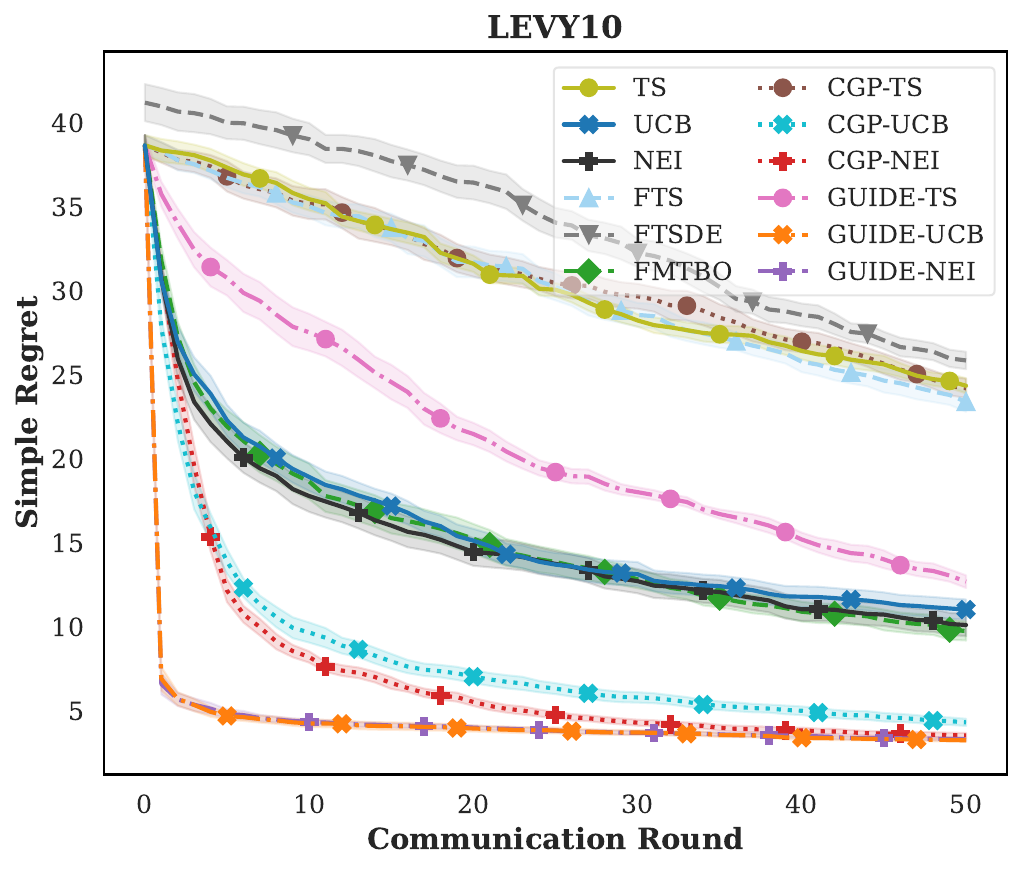}
        \caption{Levy}
    \end{subfigure}\hfill
    \begin{subfigure}[t]{0.25\textwidth}
        \centering
        \includegraphics[width=\linewidth]{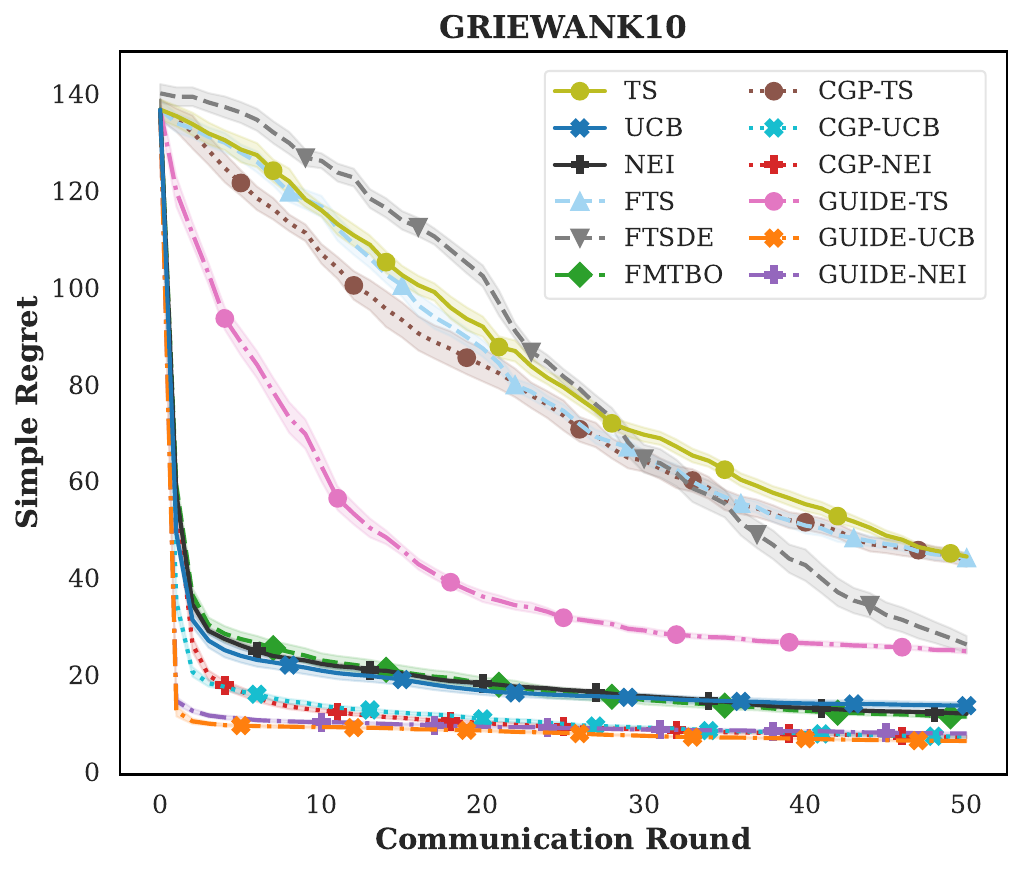}
        \caption{Griewank}
    \end{subfigure}\hfill
    \begin{subfigure}[t]{0.25\textwidth}
        \centering
        \includegraphics[width=\linewidth]{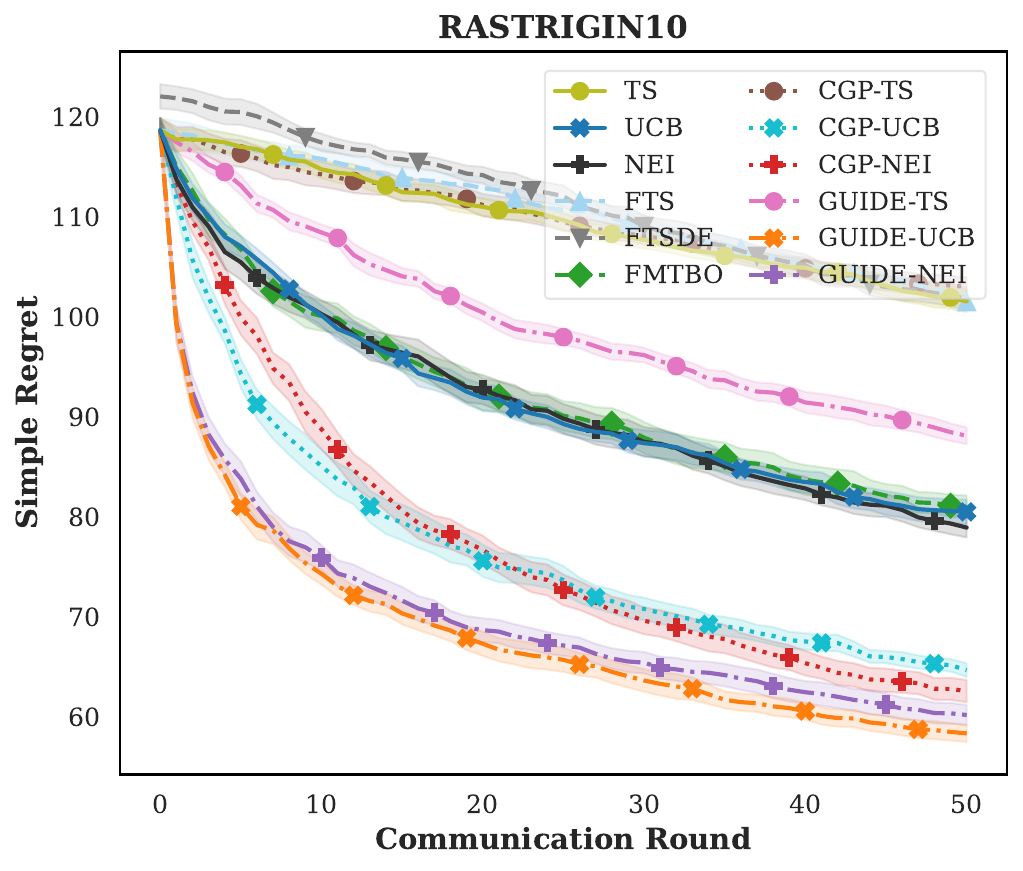}
        \caption{Rastrigin}
    \end{subfigure}

    \begin{subfigure}[t]{0.25\textwidth}
        \centering
        \includegraphics[width=\linewidth]{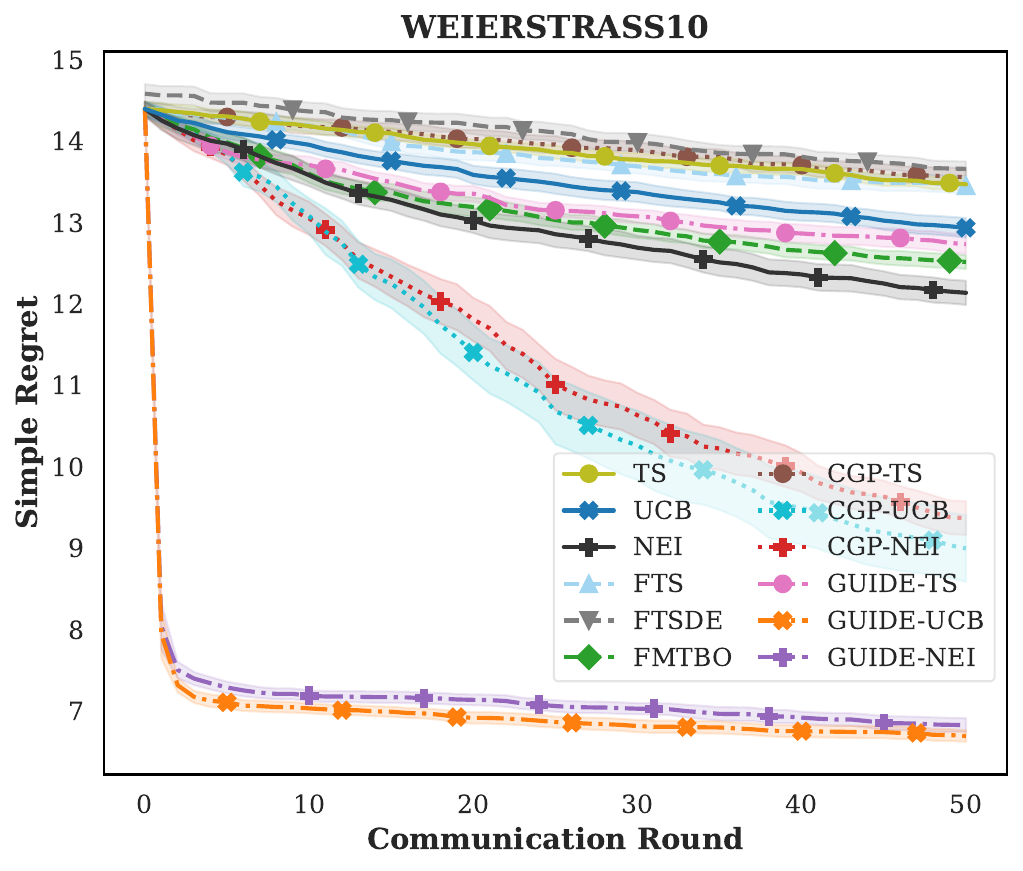}
        \caption{Weierstrass}
    \end{subfigure}\hfill
    \begin{subfigure}[t]{0.25\textwidth}
        \centering
        \includegraphics[width=\linewidth]{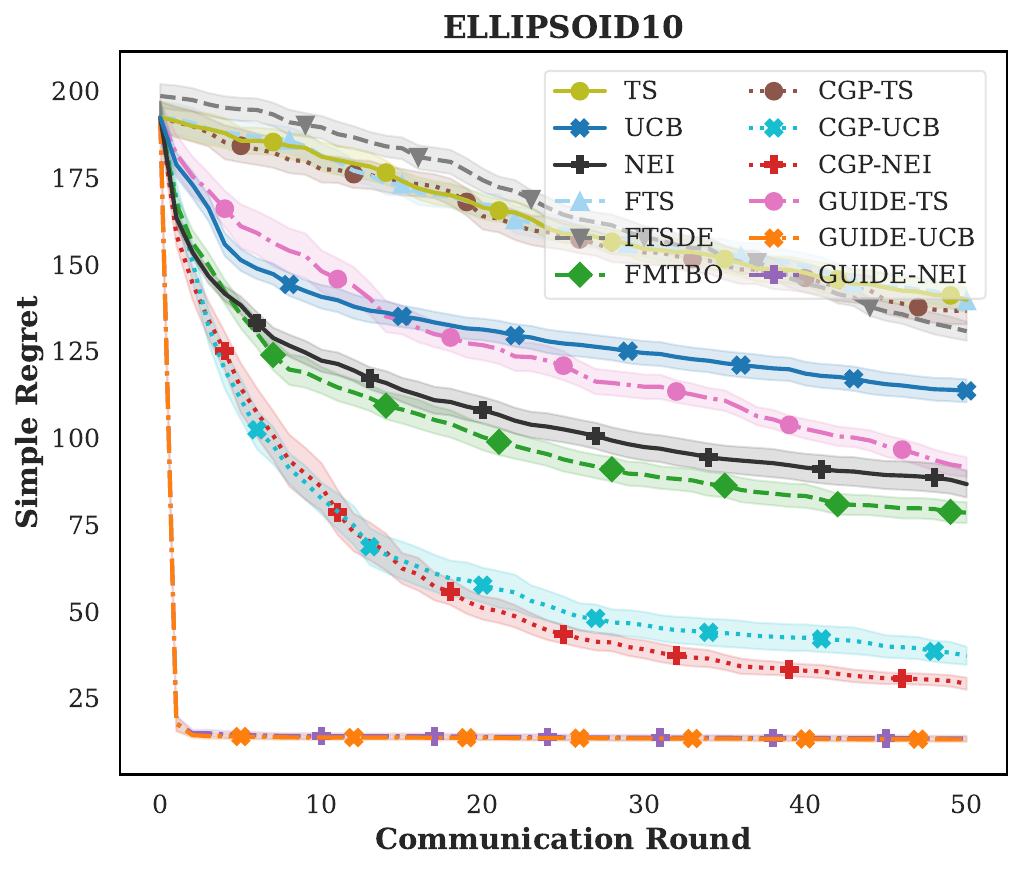}
        \caption{Ellipsoid}
    \end{subfigure}\hfill
    \begin{subfigure}[t]{0.25\textwidth}
        \centering
        \includegraphics[width=\linewidth]{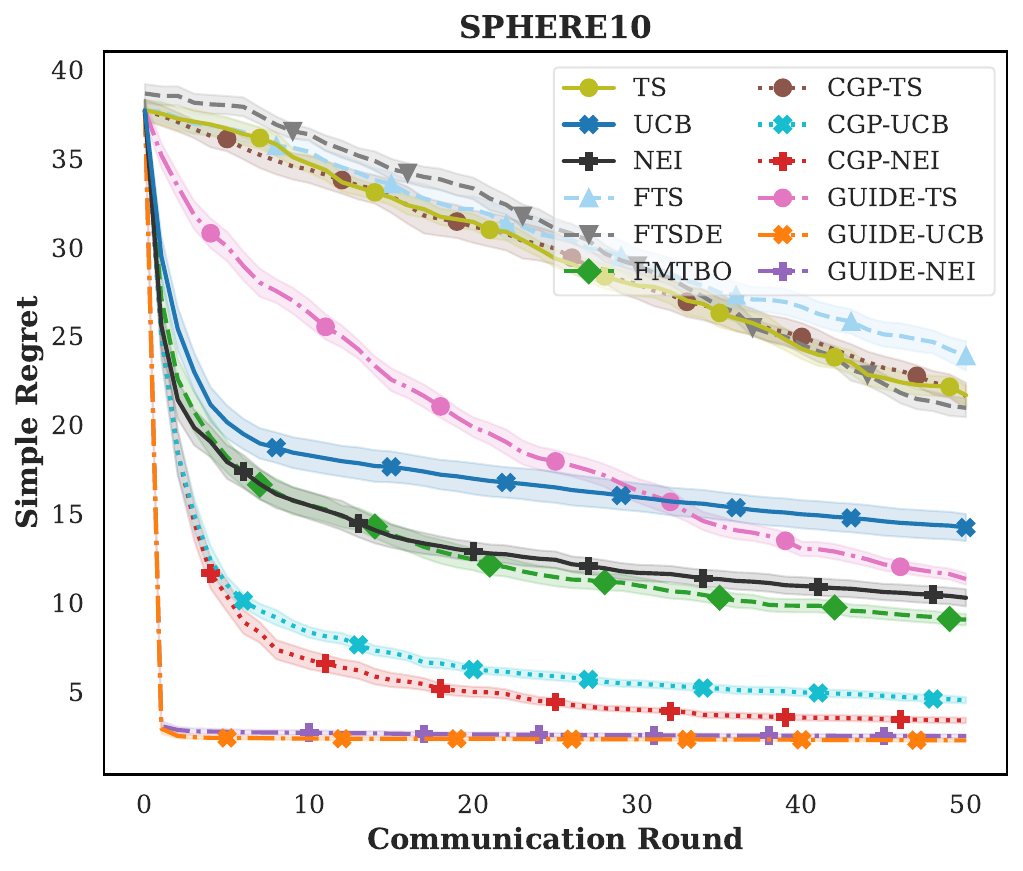}
        \caption{Sphere}
    \end{subfigure}\hfill
    \begin{subfigure}[t]{0.25\textwidth}
        \centering
        \includegraphics[width=\linewidth]{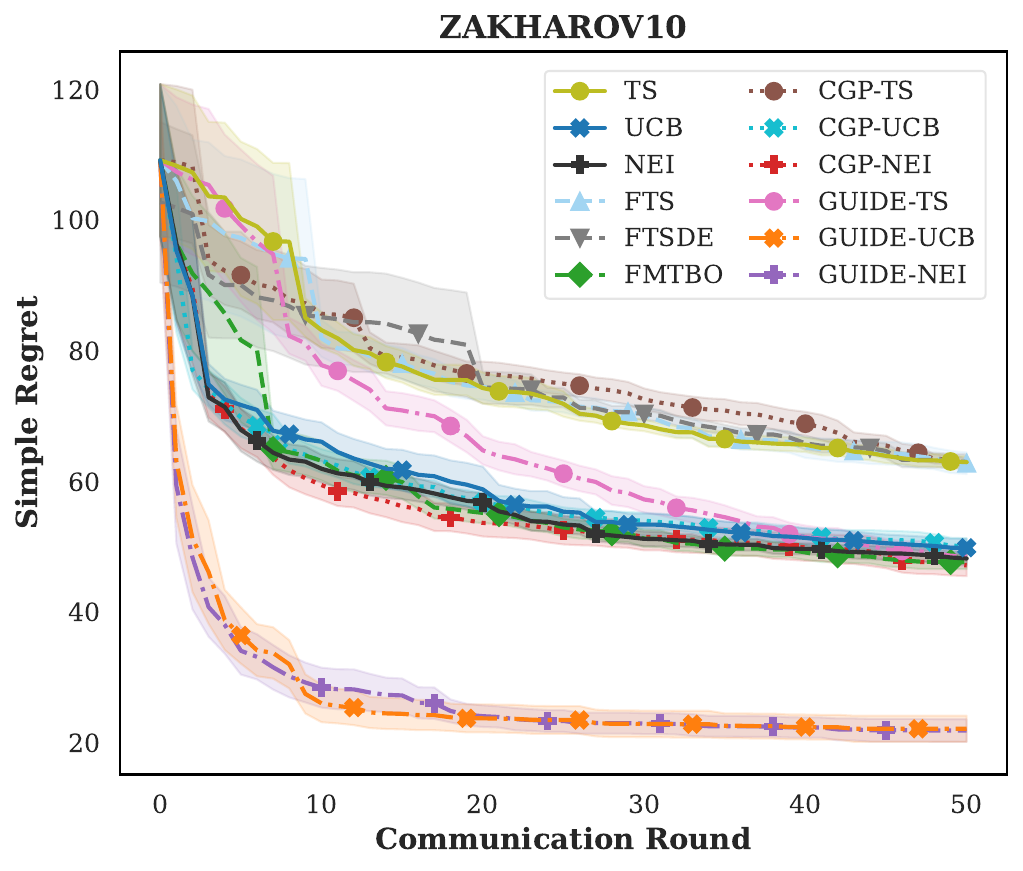}
        \caption{Zakharov}
    \end{subfigure}

    \begin{subfigure}[t]{0.25\textwidth}
        \centering
        \includegraphics[width=\linewidth]{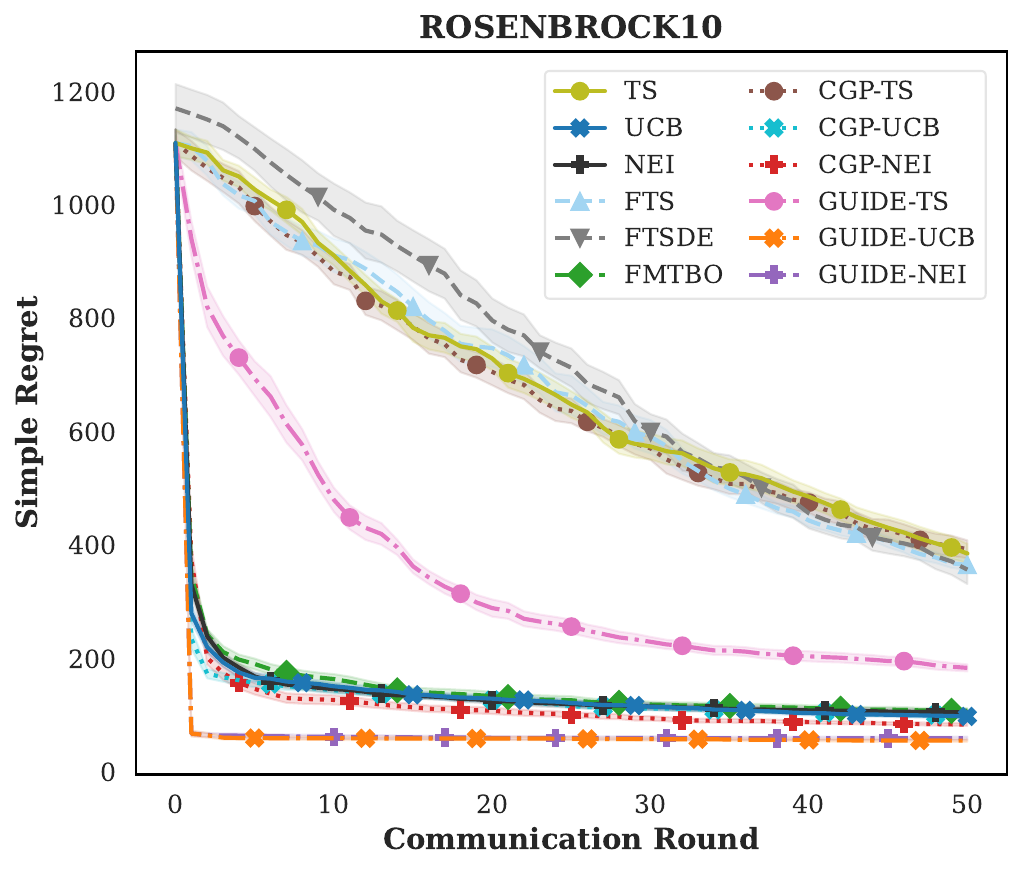}
        \caption{Rosenbrock}
    \end{subfigure}\hfill
    \begin{subfigure}[t]{0.25\textwidth}
        \centering
        \includegraphics[width=\linewidth]{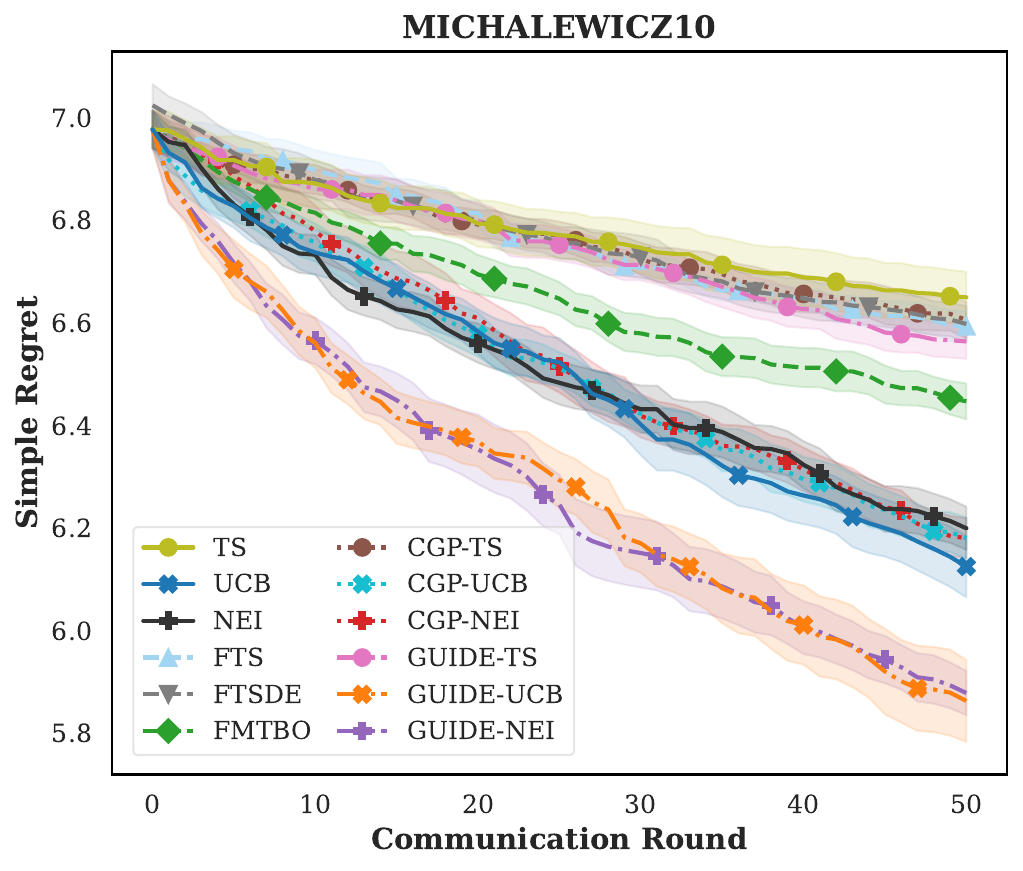}
        \caption{Michalewicz}
    \end{subfigure}\hfill
    \begin{subfigure}[t]{0.25\textwidth}
        \centering
        \includegraphics[width=\linewidth]{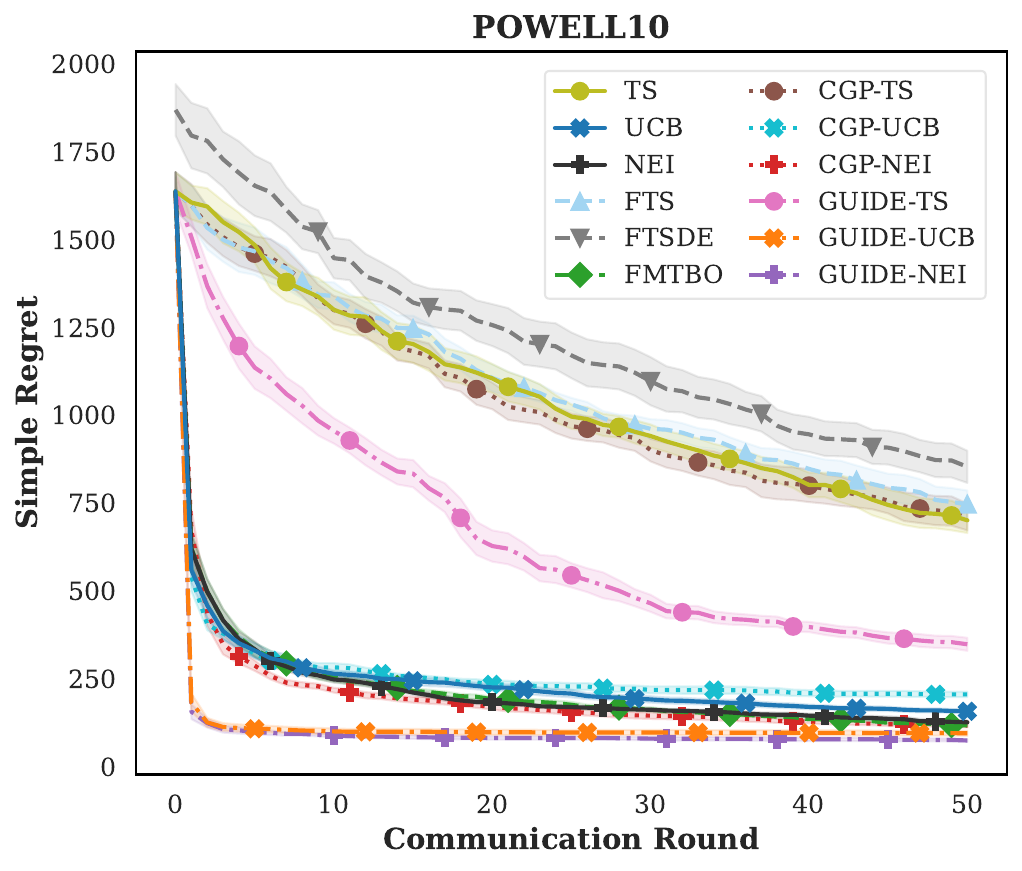}
        \caption{Powell}
    \end{subfigure}\hfill
    \begin{subfigure}[t]{0.25\textwidth}
        \centering
        \includegraphics[width=\linewidth]{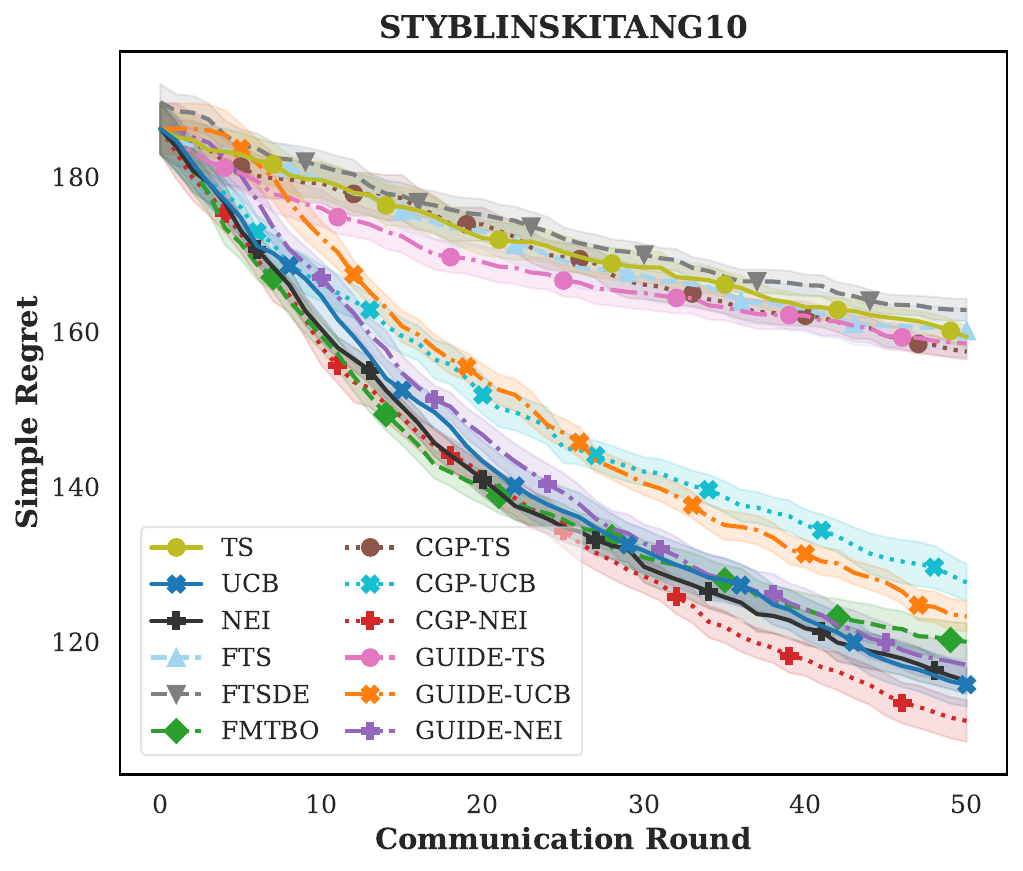}
        \caption{Styblinski--Tang}
    \end{subfigure}

    \caption{Convergence trajectories for all 12 synthetic functions under Level 2 (Mild heterogeneity). Curves show the mean average simple regret over 10 independent runs, and shaded regions denote $\pm 1$ standard error of the mean across runs. Lower is better.}
    \label{fig:appendix_level2_curves}
\end{figure*}

Level~2 introduces mild task heterogeneity, with the empirical mean
pairwise Spearman correlation decreasing from $1.000$ to $0.557$.
GUIDE-UCB remains the strongest aggregate method with an average rank of
$1.58$. It achieves the lowest final regret on $9$ benchmarks and ranks
among the top two methods on $11$ of the $12$ functions. GUIDE-NEI ranks
second overall at $2.17$, is the best method on Zakharov and Powell, and
appears among the top two methods on $10$ functions. Relative to their
matched independent baselines, GUIDE-UCB and GUIDE-NEI improve on $11$ of
the $12$ functions, while GUIDE-TS improves on all $12$.

Large absolute differences remain on several benchmarks. UCB and GUIDE-UCB
obtain $19.4350$ and $8.4785$ on Ackley, $113.6023$ and $13.1331$ on
Ellipsoid, $14.2443$ and $2.2689$ on Sphere, and $97.8257$ and $55.2114$
on Rosenbrock. On Zakharov, GUIDE-NEI obtains $21.8975$, compared with
$47.1746$ for CGP-NEI and $47.6923$ for FMTBO. On Powell, GUIDE-NEI
obtains $74.3549$, compared with $116.5086$ for CGP-NEI and $117.3149$
for FMTBO. The benefit of exchanging distributions over the locations of the agents' optima therefore remains substantial after moderate shifts and rotations are introduced.

The trajectories in Figure~\ref{fig:appendix_level2_curves} retain much of
the qualitative pattern observed at Level~1. GUIDE-UCB and GUIDE-NEI show
a particularly clear lead on Ackley, Levy, Weierstrass, Ellipsoid, Sphere,
and Zakharov, where their curves separate visibly from the main competing
methods during the optimization process. On Griewank, Rastrigin,
Rosenbrock, Michalewicz, and Powell, the two GUIDE variants also reach
competitive or leading low-regret regions, but the gap from the strongest
baselines is smaller. Styblinski--Tang remains the main exception.
Compared with Level~1, GUIDE still reaches low-regret regions earlier on
most benchmarks, but its lead over the strongest baselines becomes smaller
on several functions.

This difference follows directly from the task construction. At Level~1,
the distributions transferred by different agents refer to the same
underlying objective and therefore tend to support the same promising
regions. At Level~2, the shifts move these regions and the rotations change
the local geometry of non-rotationally-invariant objectives. A promising
region for one agent may therefore no longer align exactly with the
corresponding region of another agent. Since these transformations are still
moderate, useful spatial structure can remain shared across agents. GUIDE
transfers distributions over promising regions, so a small spatial mismatch
does not invalidate the shared information as easily as it can invalidate a
single transferred design.

The change on Michalewicz is especially informative. Under Level~1,
CGP-NEI and CGP-UCB outperform the GUIDE variants, whereas under Level~2
GUIDE-UCB and GUIDE-NEI obtain the two lowest final regrets of $5.8629$
and $5.8783$. A likely explanation is that point-level transfer is most
effective when the narrow high-quality valleys of different agents are
closely aligned. Once mild shifts and rotations are introduced, a
high-quality design from one agent can fall outside the corresponding valley
of another agent. GUIDE instead transfers a distribution over a promising
region, making the transferred information less dependent on exact point
alignment. This broader spatial representation is therefore better suited to
the mild mismatch introduced at Level~2.

Michalewicz also illustrates the distinction between global task similarity
and local transferability. Its Level~2 mean pairwise Spearman correlation is
only $0.034$, yet GUIDE-UCB and GUIDE-NEI obtain the two best final results.
The Spearman correlation measures ranking agreement over the full search
space, whereas GUIDE acts on the spatial support of promising regions.
A low global correlation can therefore coexist with useful shared structure
near high-value regions.

Styblinski--Tang remains the main exception. Its many competing local basins
make it difficult for a regional distribution to indicate one clearly
preferred search direction. Under Level~1, point-level transfer is especially
effective because a good point identified by one agent is directly useful to
all others. After the Level~2 transformations are introduced, this exact
point correspondence becomes weaker. This is visible in the final results.
CGP-UCB changes from $62.7248$ at Level~1 to $127.6079$ at Level~2, while
CGP-NEI changes from $56.2330$ to $109.7452$. The corresponding GUIDE-UCB
results are $99.6545$ and $123.2241$, and the GUIDE-NEI results are
$102.8457$ and $116.9887$. GUIDE is therefore still not the best method on
Styblinski--Tang at Level~2, but its performance deteriorates much less than
the two CGP variants as mild heterogeneity is introduced. On this benchmark, the smaller degradation of GUIDE suggests that distributional guidance is less sensitive to mild spatial misalignment than
point-level transfer.

\subsection{Level 3: $\delta_{\mathrm{shift}}=0.3$ and $\delta_{\mathrm{rot}}=1.0$}

\begin{table*}[h]
    \caption{Complete synthetic results under Level 3 (Severe heterogeneity). Entries report the final mean average simple regret over 10 independent runs. Lower is better. The best and second-best results in each row are shown in bold and underlined, respectively.}
    \label{tab:appendix_level3_results}
    \centering
    \setlength{\tabcolsep}{3.0pt}
    \renewcommand{\arraystretch}{1.07}
    \scriptsize
    \resizebox{\textwidth}{!}{%
    \begin{tabular}{lcccccccccccc}
    \toprule
    \textbf{Benchmark}
    & \textbf{TS} & \textbf{UCB} & \textbf{NEI}
    & \textbf{FTS} & \textbf{FTS-DE} & \textbf{FMTBO}
    & \textbf{CGP-TS} & \textbf{CGP-UCB} & \textbf{CGP-NEI}
    & \textbf{GUIDE-TS} & \textbf{GUIDE-UCB} & \textbf{GUIDE-NEI} \\
    \midrule
    Ackley             & 20.4047 & 20.5073 & 20.4203 & 20.4080 & 20.3972 & 20.2383 
                       & 20.3699 & 20.3380 & 20.4331 & 20.3552 & \second{20.1739} & \best{20.1161} \\
    Levy               & 37.5643 & 24.3700 & 25.5279 & 41.1159 & 40.0354 & 25.0598 
                       & 34.5967 & 24.6258 & 30.4253 & 34.2904 & \best{20.8306} & \second{23.6984} \\
    Griewank           & 67.8144 & 28.8836 & 26.9351 & 72.7054 & 74.5664 & 27.0838 
                       & 65.7857 & \second{21.7611} & 30.0882 & 68.9316 & \best{21.2957} & 29.7074 \\
    Rastrigin          & 106.3844 & 95.6377 & 96.0487 & 110.2650 & 111.8623 & 95.4028 
                       & 108.3831 & 96.2275 & 100.1082 & 104.5270 & \best{90.8849} & \second{94.0912} \\
    Weierstrass        & 13.6349 & 13.1439 & \second{12.7244} & 13.6075 & 13.6386 & 12.8031 
                       & 13.4763 & 12.9093 & 12.8169 & 13.4107 & 12.8443 & \best{12.4937} \\
    Ellipsoid          & 157.2353 & 162.9886 & 138.9518 & 172.5919 & 167.3659 & \best{120.5134} 
                       & 156.5689 & 134.2802 & 151.8904 & 149.4864 & \second{127.6623} & 127.9452 \\
    Sphere             & 24.1755 & 22.4991 & 19.2331 & 26.1404 & 26.1197 & \best{15.6632} 
                       & 22.9534 & 18.3412 & 21.1273 & 24.0913 & \second{17.6787} & 17.9497 \\
    Zakharov           & 196.5950 & \second{101.9986} & 1061.0835 & 1095.3250 & 139.9111 & 542.0744 
                       & 167.0674 & 103.6758 & 601.0594 & 128.2162 & \best{92.9444} & 307.8536 \\
    Rosenbrock         & 911.4690 & \best{257.8472} & 316.0209 & 1036.4767 & 1056.0210 & \second{294.0791} 
                       & 873.7124 & 299.9545 & 457.7168 & 919.0478 & 321.9631 & 331.0628 \\
    Michalewicz        & 7.6120 & \best{7.3490} & 7.4127 & 7.6202 & 7.6026 & 7.4742 
                       & 7.6412 & 7.3963 & 7.3846 & 7.6373 & \second{7.3793} & 7.4323 \\
    Powell             & 1397.4194 & \second{681.4695} & 793.8595 & 1683.7471 & 1659.8198 & 751.6538 
                       & 1267.8210 & 718.8015 & 1046.9279 & 1419.2525 & 685.3075 & \best{656.6218} \\
    Styblinski--Tang    & 199.0988 & 157.1972 & 149.6469 & 206.1874 & 207.4384 & \best{148.3097} 
                       & 193.8517 & 169.4907 & 157.8670 & 196.6009 & 165.9432 & \second{149.2083} \\
    \bottomrule
    \end{tabular}%
    }
\end{table*}

\begin{figure*}[h]
    \centering
    \begin{subfigure}[t]{0.25\textwidth}
        \centering
        \includegraphics[width=\linewidth]{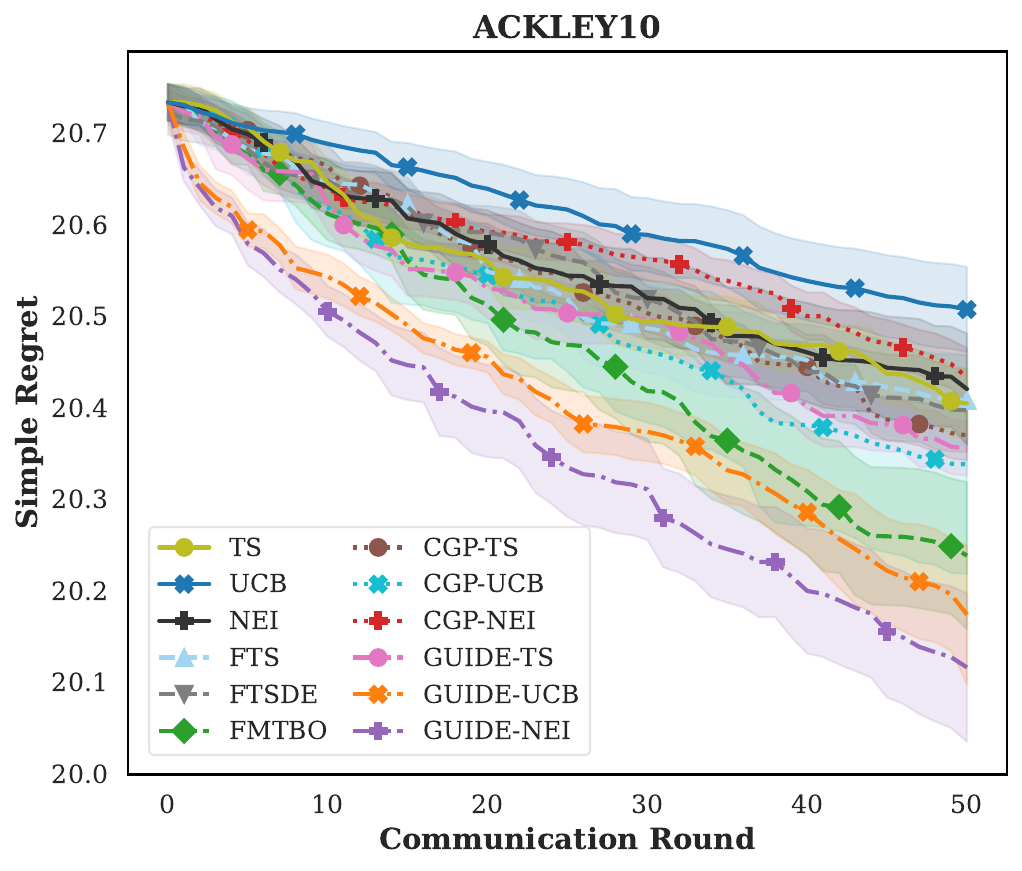}
        \caption{Ackley}
    \end{subfigure}\hfill
    \begin{subfigure}[t]{0.25\textwidth}
        \centering
        \includegraphics[width=\linewidth]{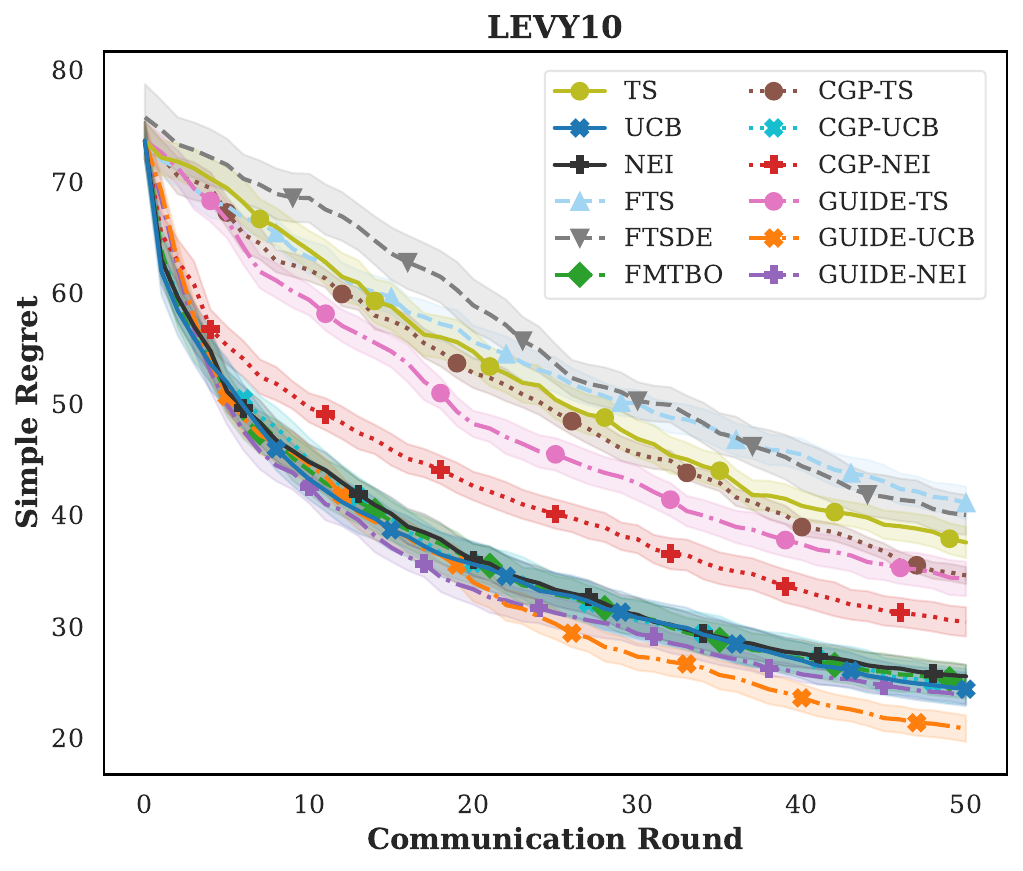}
        \caption{Levy}
    \end{subfigure}\hfill
    \begin{subfigure}[t]{0.25\textwidth}
        \centering
        \includegraphics[width=\linewidth]{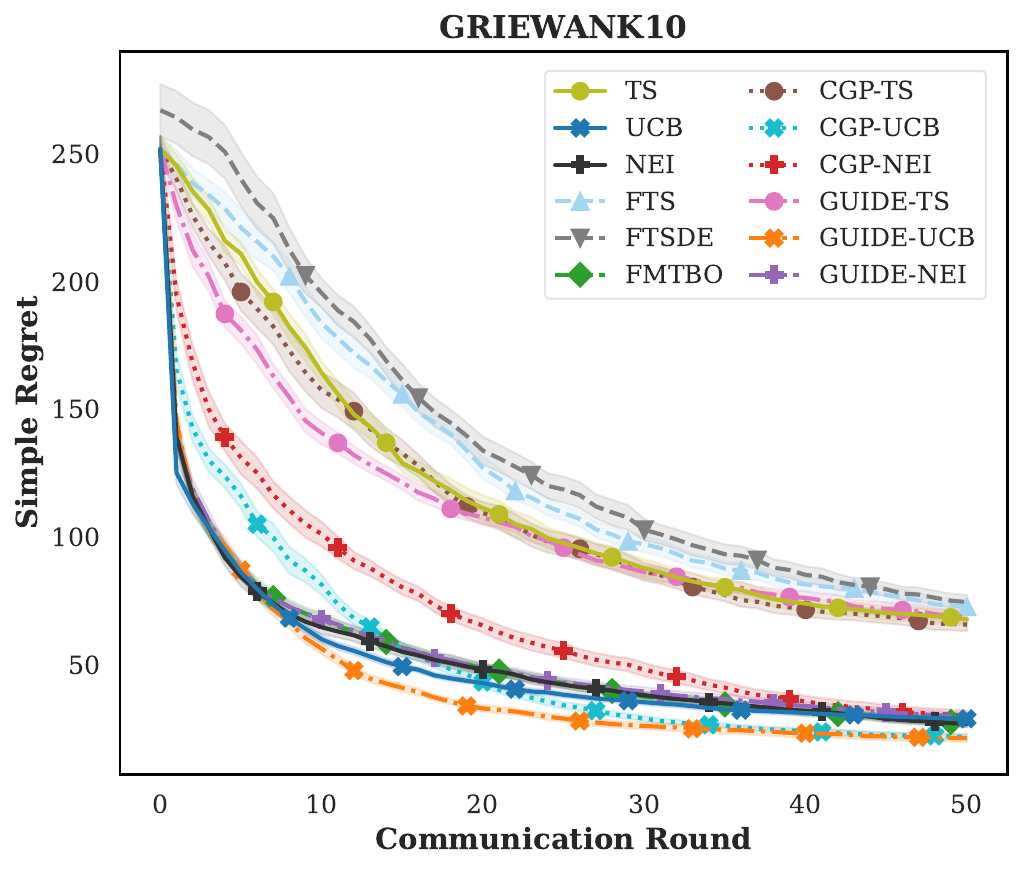}
        \caption{Griewank}
    \end{subfigure}\hfill
    \begin{subfigure}[t]{0.25\textwidth}
        \centering
        \includegraphics[width=\linewidth]{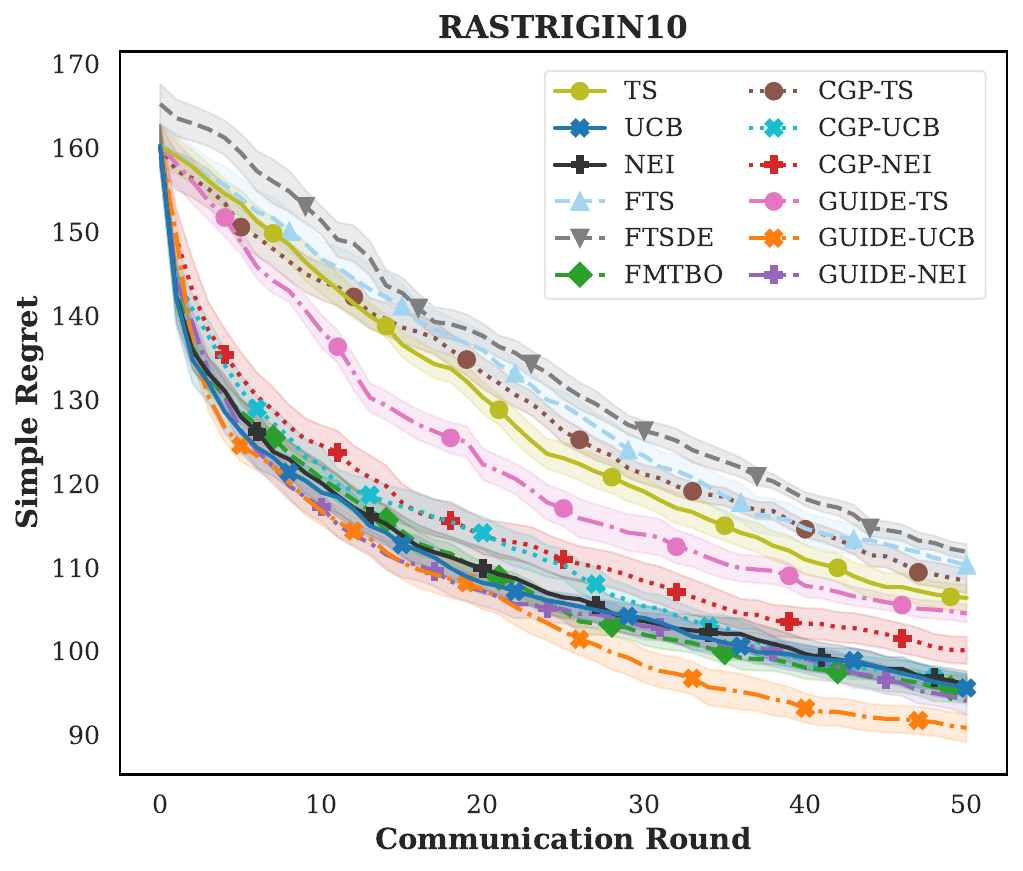}
        \caption{Rastrigin}
    \end{subfigure}

    \begin{subfigure}[t]{0.25\textwidth}
        \centering
        \includegraphics[width=\linewidth]{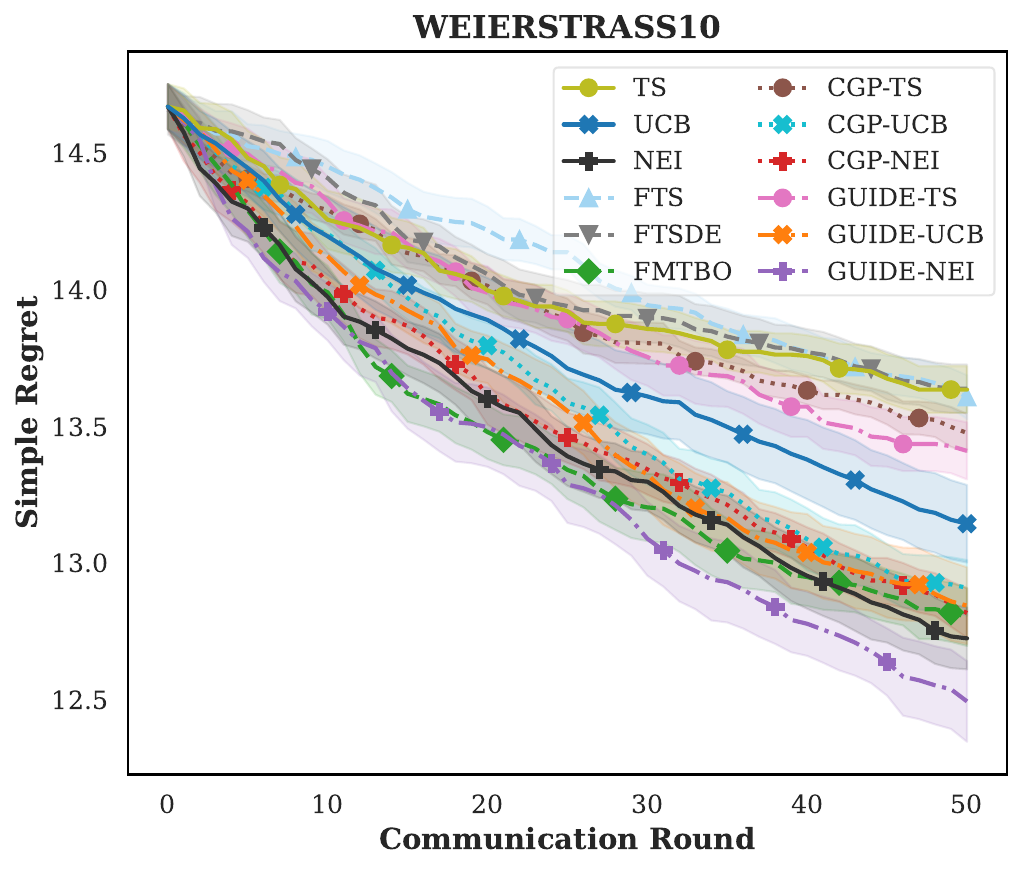}
        \caption{Weierstrass}
    \end{subfigure}\hfill
    \begin{subfigure}[t]{0.25\textwidth}
        \centering
        \includegraphics[width=\linewidth]{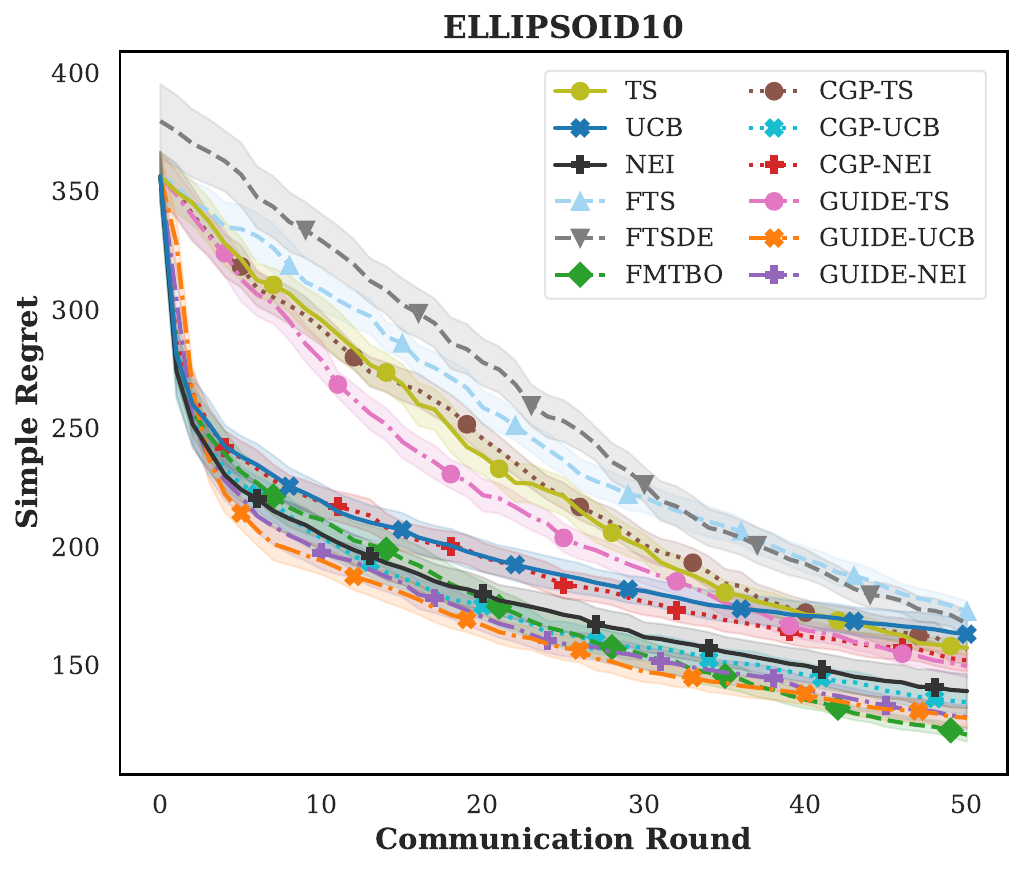}
        \caption{Ellipsoid}
    \end{subfigure}\hfill
    \begin{subfigure}[t]{0.25\textwidth}
        \centering
        \includegraphics[width=\linewidth]{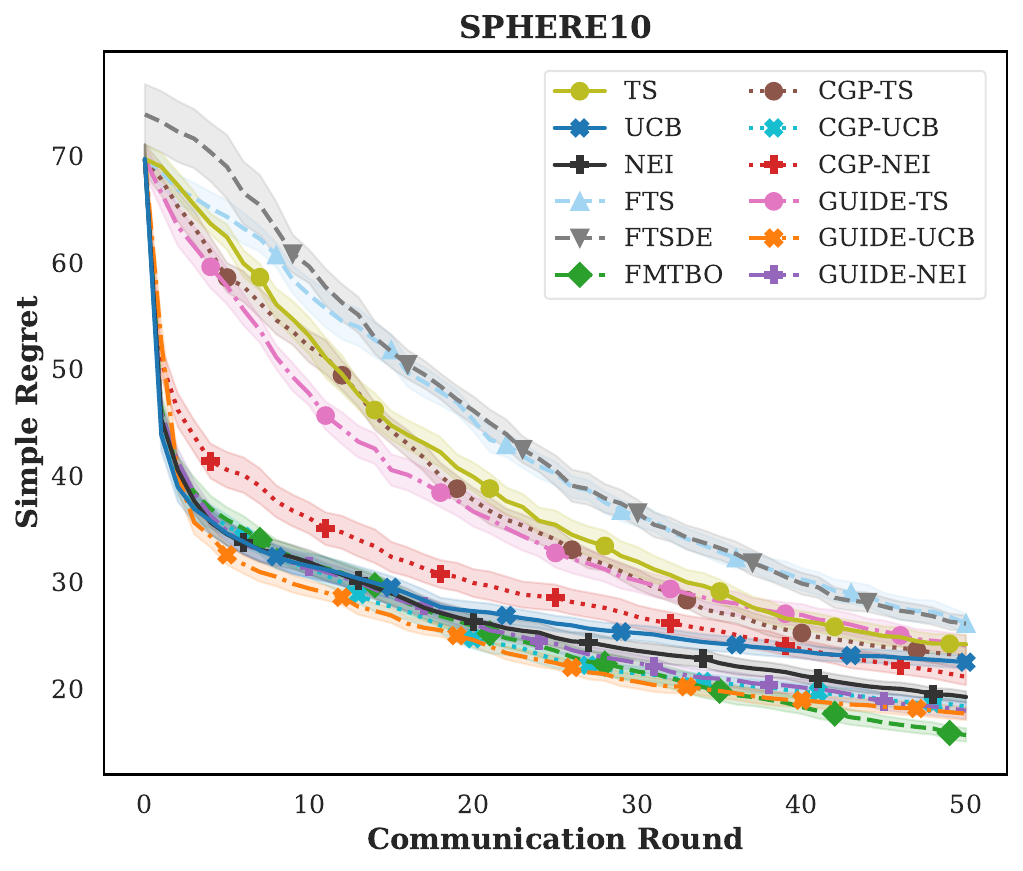}
        \caption{Sphere}
    \end{subfigure}\hfill
    \begin{subfigure}[t]{0.25\textwidth}
        \centering
        \includegraphics[width=\linewidth]{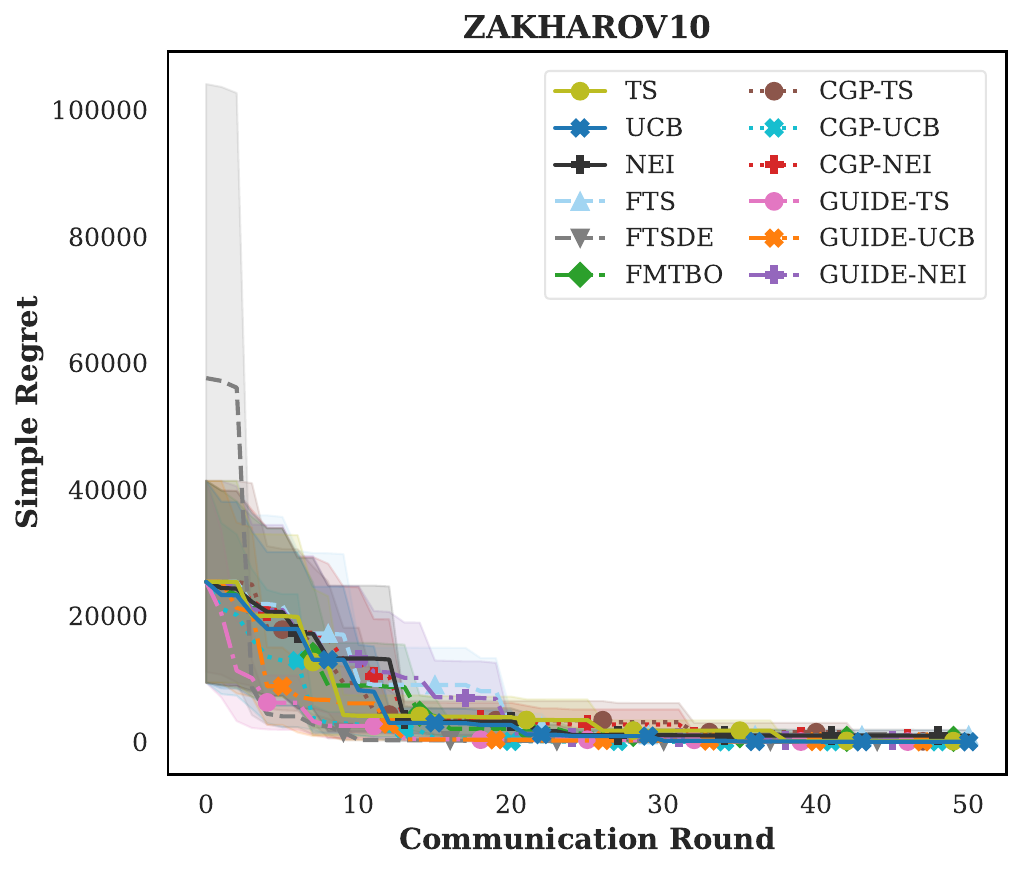}
        \caption{Zakharov}
    \end{subfigure}

    \begin{subfigure}[t]{0.25\textwidth}
        \centering
        \includegraphics[width=\linewidth]{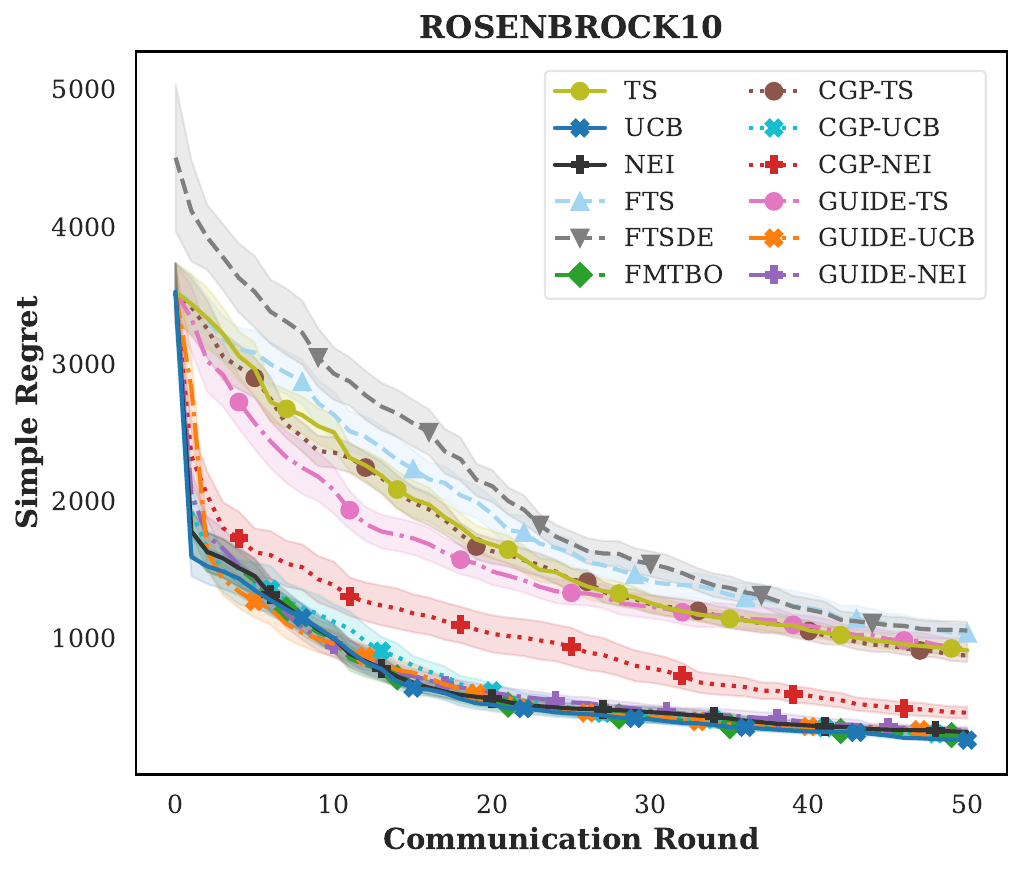}
        \caption{Rosenbrock}
    \end{subfigure}\hfill
    \begin{subfigure}[t]{0.25\textwidth}
        \centering
        \includegraphics[width=\linewidth]{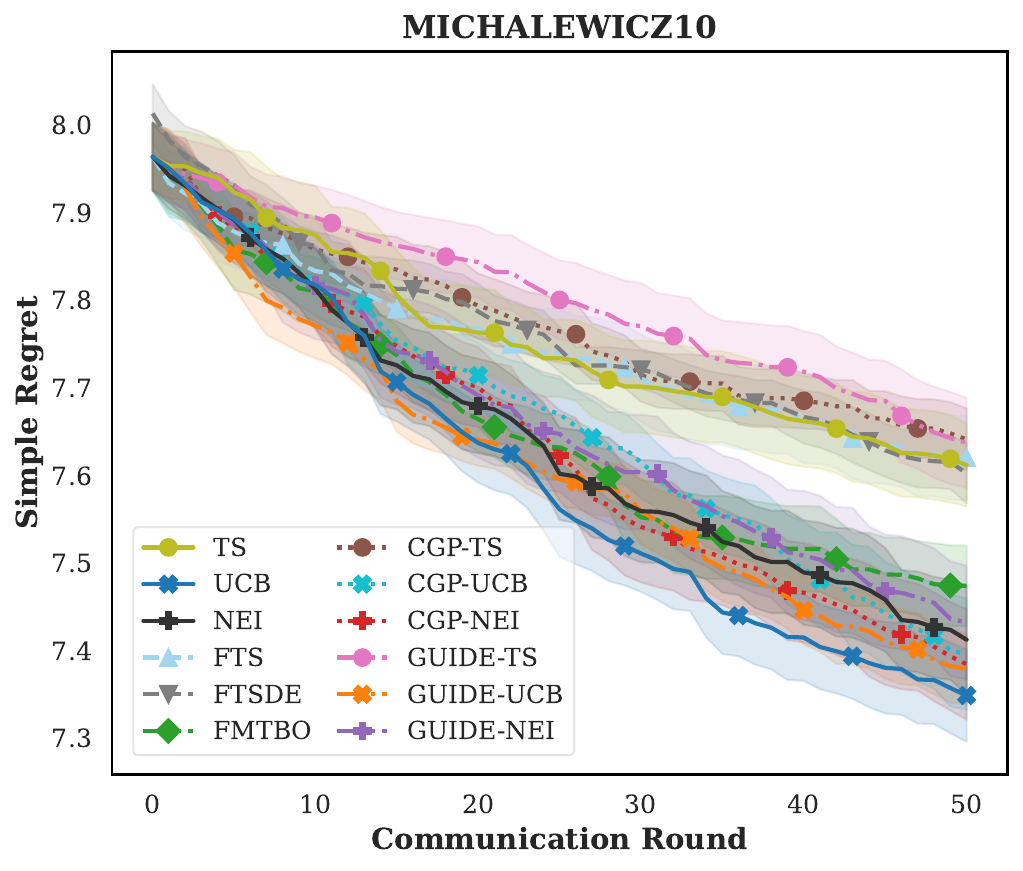}
        \caption{Michalewicz}
    \end{subfigure}\hfill
    \begin{subfigure}[t]{0.25\textwidth}
        \centering
        \includegraphics[width=\linewidth]{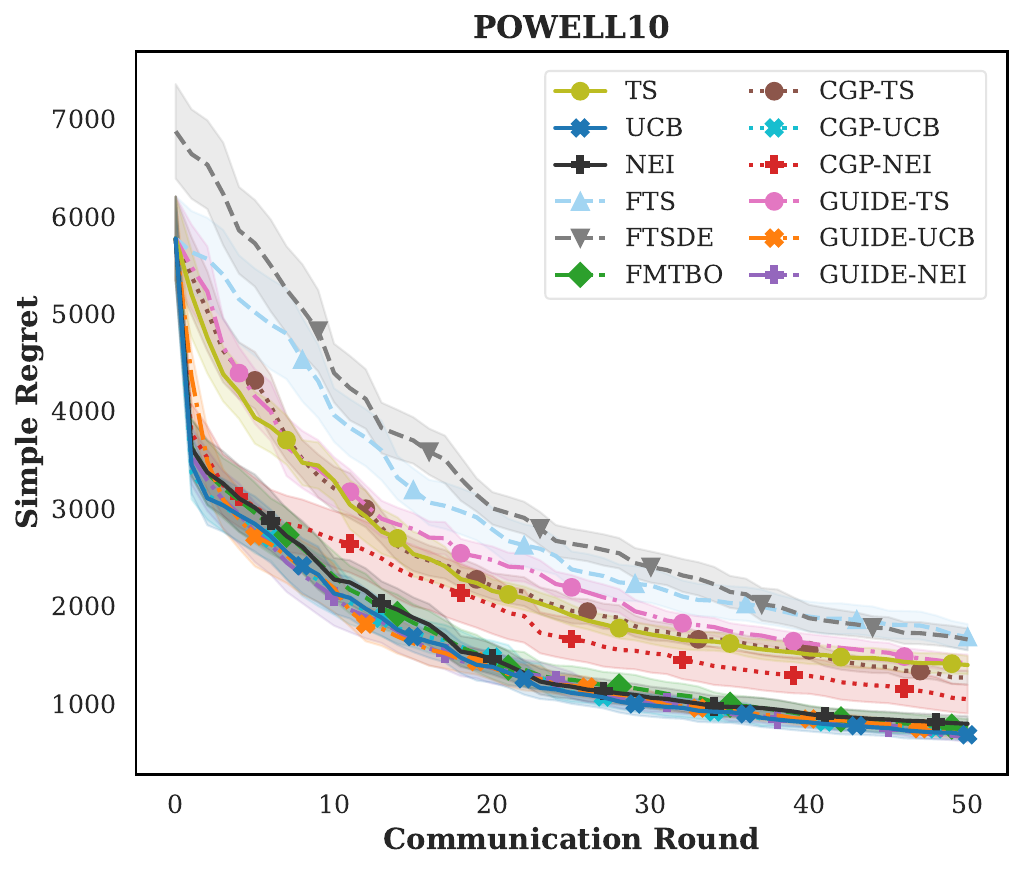}
        \caption{Powell}
    \end{subfigure}\hfill
    \begin{subfigure}[t]{0.25\textwidth}
        \centering
        \includegraphics[width=\linewidth]{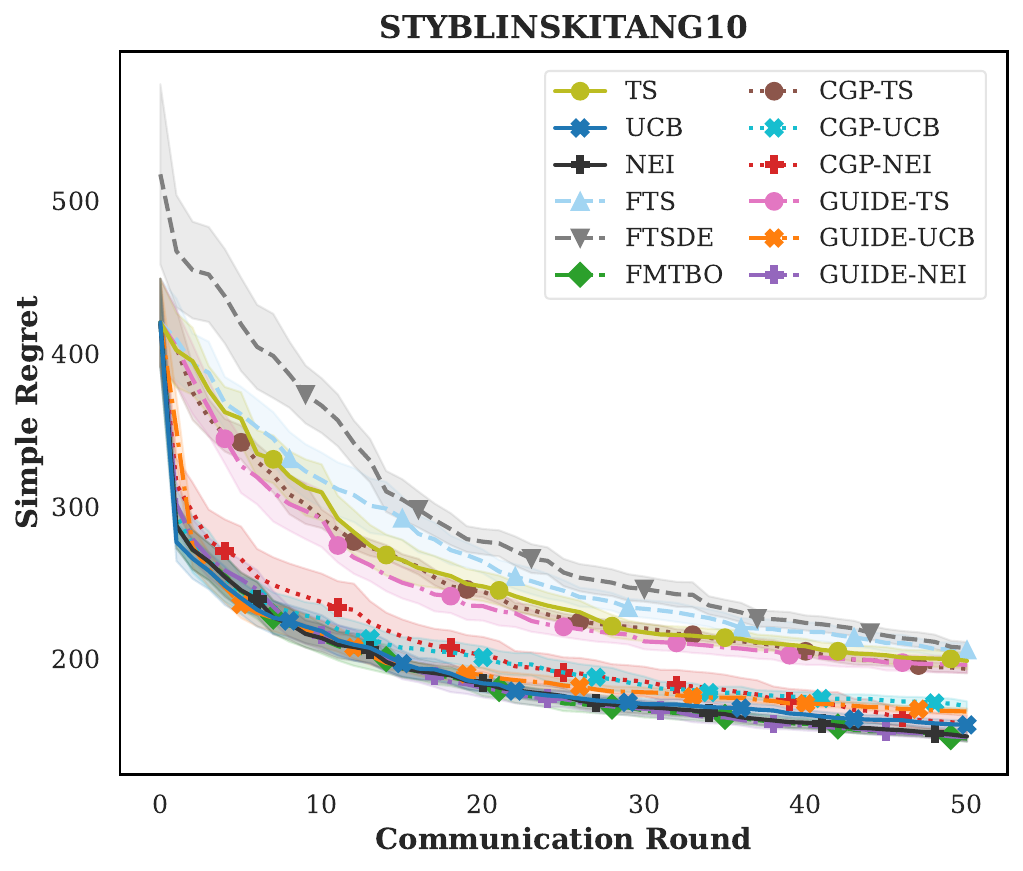}
        \caption{Styblinski--Tang}
    \end{subfigure}

    \caption{Convergence trajectories for all 12 synthetic functions under Level 3 (Severe heterogeneity). Curves show the mean average simple regret over 10 independent runs, and shaded regions denote $\pm 1$ standard error of the mean across runs. Lower is better.}
    \label{fig:appendix_level3_curves}
\end{figure*}

Level~3 introduces much stronger task heterogeneity. The larger translations
move the promising regions of different agents farther apart, while the
stronger rotations can substantially change the geometry of
non-rotationally-invariant objectives. The empirical mean pairwise Spearman
correlation decreases to $0.081$. The agents therefore share much less directly transferable spatial information than under Levels~1 and~2.

This setting is particularly difficult for point-level knowledge transfer.
A promising design found by one agent may no longer lie in a promising
region for another agent after the stronger shifts and rotations. The CGP
results illustrate this limitation. CGP-TS and CGP-UCB only slightly improve their average ranks over independent TS and UCB, from $9.17$ to $8.33$ and from $4.83$ to $4.25$, respectively.
In contrast, CGP-NEI has an average rank of $6.75$, worse than $5.42$ for
independent NEI. Point-level transfer therefore no longer provides a consistent advantage
when the agents' optima are poorly aligned.

FMTBO becomes more competitive in this regime and achieves an average rank
of $3.67$, the best result among the non-GUIDE methods. It is also the best
method on Ellipsoid, Sphere, and Styblinski--Tang. Unlike methods that
transfer particular promising locations, FMTBO communicates model-level
information through GP hyperparameters and estimates task relatedness from
predictive rankings. Such information does not require the optimum of one
agent to occur at nearly the same location as the optimum of another agent.
This weaker dependence on exact spatial alignment helps explain why FMTBO
becomes relatively more competitive at Level~3.

Despite the severe heterogeneity, GUIDE-UCB retains the best aggregate rank
of $2.58$, while GUIDE-NEI ranks second at $3.42$. GUIDE-UCB is among the
top two methods on $8$ of the $12$ benchmarks and obtains the lowest final
regret on Levy, Griewank, Rastrigin, and Zakharov. GUIDE-NEI is best on
Ackley, Weierstrass, and Powell. The complete trajectories in
Figure~\ref{fig:appendix_level3_curves} show a much less uniform advantage
than at Levels~1 and~2. The leading curves overlap and cross more frequently,
and GUIDE no longer separates early from the other methods on most
benchmarks. This is expected because the received distributions now contain
more information that is useful to some agents but less relevant to others.

Two mechanisms help GUIDE remain robust in this regime. First, the strength
of federated guidance decays according to
$\lambda_t=\lambda_{\max}/\sqrt{t}$. Global information therefore has its
largest influence when local observations are scarce, while its influence
gradually decreases as each agent collects more evidence about its own
objective. The direct influence of global guidance therefore decreases over rounds,
allowing local evidence to play a larger role later in the optimization.

Second, FI-GP leaves the local posterior mean unchanged and rescales only
the posterior uncertainty. The absolute uncertainty increment at
$\mathbf{x}$ is
$\lambda_t G_{n,t}(\mathbf{x})\sigma_{n,t-1}(\mathbf{x})$.
Global guidance therefore has a large effect mainly where the federation
provides strong support and the local GP remains uncertain. If a region has
already been sufficiently explored locally, its posterior uncertainty is
typically smaller, so even strong global guidance produces only a limited
intervention. If local evaluations also indicate poor performance, the low
posterior mean is preserved. Such a region is therefore unlikely to be
selected solely because it is globally recommended.

The remaining failures also follow this interpretation. On Rosenbrock,
independent UCB obtains $257.8472$, compared with $321.9631$ for GUIDE-UCB.
Its narrow curved valley makes transferred spatial guidance particularly
sensitive to strong translations and rotations. On Michalewicz, UCB obtains
$7.3490$, while GUIDE-UCB obtains $7.3793$. The difference is much smaller,
but the steep and narrow high-quality regions again leave little tolerance
for spatial mismatch. These cases show that negative transfer can still occur on individual tasks.
The main Level~3 result is that GUIDE retains the best aggregate rank despite
these task-specific failures.

\subsection{Real-World Convergence Trajectories}
\label{app:realworld_curves}

Figure~\ref{fig:realworld_curves} reports the complete convergence
trajectories corresponding to Table~\ref{tab:realworld_results}.
For visualization, we use $1-\mathrm{AUC}$ for Landmine Detection and
$1-\mathrm{Accuracy}$ for Activity Recognition and FedHPO, so lower values
are better.

\begin{figure*}[ht]
    \centering

    \begin{subfigure}[t]{0.32\textwidth}
        \centering
        \includegraphics[width=\linewidth]{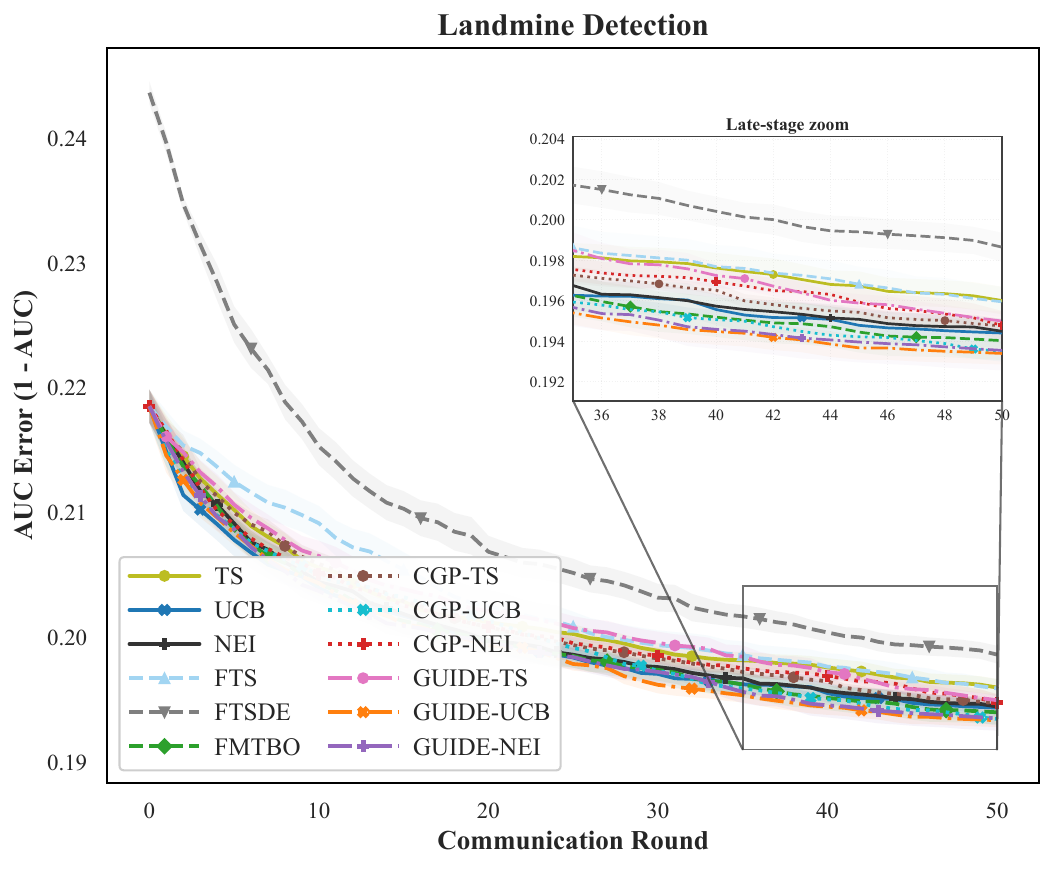}
        \caption{Landmine Detection}
    \end{subfigure}
    \hfill
    \begin{subfigure}[t]{0.32\textwidth}
        \centering
        \includegraphics[width=\linewidth]{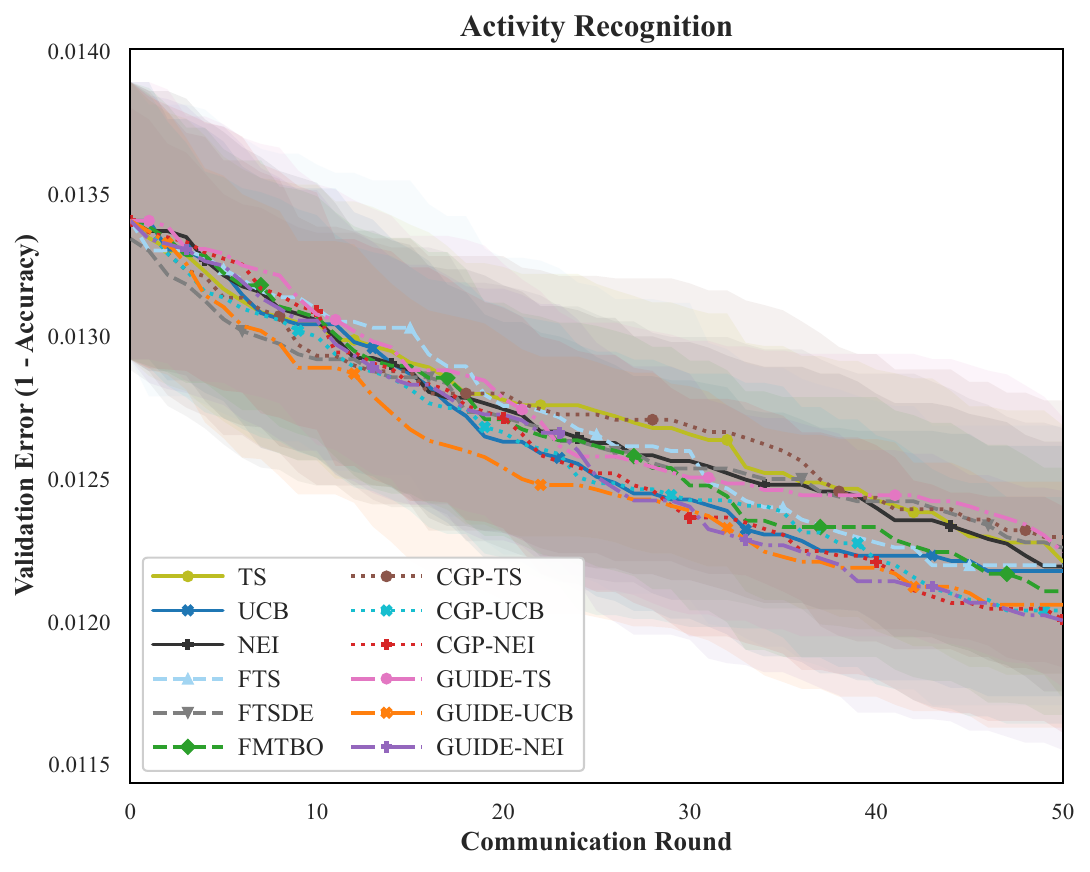}
        \caption{Activity Recognition}
    \end{subfigure}
    \hfill
    \begin{subfigure}[t]{0.32\textwidth}
        \centering
        \includegraphics[width=\linewidth]{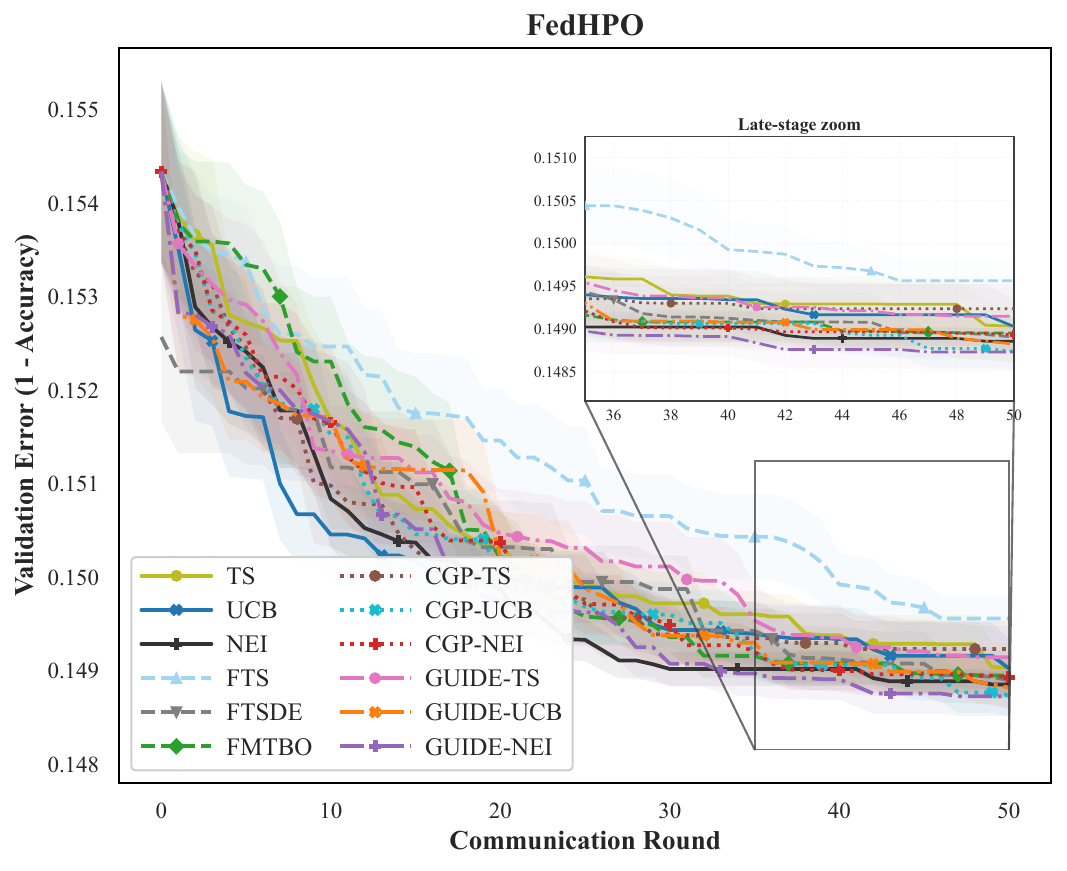}
        \caption{FedHPO}
    \end{subfigure}

    \caption{Convergence trajectories on the three real-world federated optimization tasks. Curves show the mean transformed error metric over 10 independent runs, and shaded regions denote $\pm 1$ standard error of the mean. Lower is better. }
    \label{fig:realworld_curves}
\end{figure*}

On the three real-world tasks, the strongest methods often finish within a
narrow performance range, while the matched comparisons remain consistently
favorable for GUIDE-UCB and GUIDE-NEI.

On Landmine Detection, GUIDE-UCB achieves the highest final mean AUC of
$0.806604$, followed by CGP-UCB at $0.806501$ and GUIDE-NEI at $0.806452$.
Independent UCB obtains $0.805593$. 
The convergence trajectories remain close during the earlier rounds, while
GUIDE-UCB gradually moves to the best mean performance later in the
optimization horizon. Unlike many Level~1 synthetic benchmarks, there is no
clear early separation from the strongest baselines.

Activity Recognition is much more tightly clustered. GUIDE-NEI obtains the
highest final mean validation accuracy of $0.987994$, followed by CGP-NEI at
$0.987990$ and CGP-UCB at $0.987959$. GUIDE-UCB obtains $0.987938$,
compared with $0.987821$ for independent UCB. All leading methods remain within a narrow accuracy range over most of the BO horizon. This task therefore shows that GUIDE remains competitive, but
does not exhibit a large practical separation among the strongest methods
under the tested budget.

FedHPO shows a similar narrow final range. GUIDE-NEI obtains the highest mean
accuracy of $0.851271$, followed by CGP-UCB at $0.851262$. GUIDE-UCB obtains
$0.851178$, compared with $0.850973$ for independent UCB, while independent
NEI obtains $0.851145$. The trajectories fluctuate more during the earlier
rounds and become increasingly concentrated later.

Across the three tasks, GUIDE-NEI achieves the best average rank of $1.67$,
while GUIDE-UCB ranks third at $2.67$. Both variants improve upon their
matched independent baselines on all three tasks. GUIDE-TS has an average
rank of $9.67$, so the real-world evidence is strongest for the UCB and NEI
instantiations.

The absolute gaps among the strongest methods are small on Activity
Recognition and FedHPO, so the results indicate comparable performance among
the leading methods rather than clear superiority over every collaborative
baseline.

\section{Ablation Study Details}
\label{app:ablation}

We conduct the ablation study using GUIDE-UCB on the same four representative benchmarks, namely Ackley, Rastrigin, Zakharov, and Michalewicz.
The main study uses Level~2 heterogeneity, while the same configurations are additionally evaluated under Level~3 to examine whether their effects persist as cross-agent transferability decreases.
All configurations use the same local GP models, posterior sampling procedure, initialization, evaluation budget, UCB decision rule, and repetition seeds.
The Level~2 final results are reported in Table~\ref{tab:main_ablation} in the main text.

\paragraph{Ablation variants.}
``Vanilla UCB'' removes the complete federated distributional-exchange and uncertainty-intervention pipeline and performs standard local UCB optimization independently for each agent.
It serves as a reference for measuring the overall benefit of federated guidance.

``Single-Gaussian belief'' replaces the DPGMM-based representation with a single diagonal Gaussian fitted directly to all posterior-optimum samples, while retaining the same posterior sampling procedure and uploaded message format.
This variant removes the multimodal representation of the distribution over optimum locations while preserving the remaining GUIDE pipeline.

``w/o value-aware reweighting'' removes the standardized conservative value score from the server weights, such that the merged components are weighted only by their normalized mixture masses.

``w/o component merging'' skips server-side clustering and moment matching while retaining value-aware reweighting and subsequent component sampling.
Thus, the original uploaded components directly enter the downstream weighting and redistribution procedure.

``w/o agent-specific sampling'' samples one global subset of at most $P$ components and broadcasts the same subset to all agents.
The packet size and weighted sampling distribution are kept identical to those of the full method, so this variant removes only the agent-specific stochastic redistribution.

``Uniform uncertainty scaling'' retains the complete distributional-exchange pipeline but removes the spatial localization of FI-GP.
Specifically, the spatially varying guidance field $G_{n,t}(\mathbf{x})$ is replaced at the intervention stage by its domain-averaged value
\[
\bar{G}_{n,t}
=
\frac{1}{Q}
\sum_{q=1}^{Q}
G_{n,t}(\mathbf{z}_q),
\]
where $\{\mathbf{z}_q\}_{q=1}^{Q}$ is a fixed quasi-uniform set over $\mathcal{X}$.
The corresponding uncertainty scaling becomes
\[
S^{\mathrm{uni}}_{n,t}(\mathbf{x})
=
1+\lambda_t\bar{G}_{n,t}.
\]
This preserves the average magnitude of the uncertainty intervention while removing its spatial variation, thereby separating spatially targeted guidance from a uniform increase in exploration.

\subsection{Level 2 Ablation Results}

Table~\ref{tab:main_ablation} and
Figure~\ref{fig:appendix_ablation_level2_curves} reveal a clear difference
between spatial uncertainty intervention and a generic increase in posterior
uncertainty. Full GUIDE-UCB achieves the best average rank of $1.50$.
Removing agent-specific sampling gives an average rank of $2.50$, followed
by removing value-aware reweighting at $2.75$, removing component merging
at $3.75$, and replacing the DPGMM with a single Gaussian at $4.50$.
Uniform uncertainty scaling and Vanilla UCB give the two weakest aggregate
results, with average ranks of $6.25$ and $6.75$.

The strongest effect comes from spatial localization. Full GUIDE-UCB,
Uniform uncertainty scaling, and Vanilla UCB obtain final regrets of
$8.4785$, $19.1366$, and $19.4350$ on Ackley, and $58.3082$, $80.3952$,
and $80.4771$ on Rastrigin. On Zakharov, the corresponding values are
$22.1717$, $52.0749$, and $49.8750$, while on Michalewicz they are
$5.8629$, $6.1078$, and $6.1248$. The trajectories show the same pattern.
Uniform uncertainty scaling stays close to Vanilla UCB on Ackley and
Rastrigin and loses most of the early convergence advantage of Full
GUIDE-UCB.

This ablation isolates an important part of FI-GP. The uniform variant keeps the complete distributional-exchange pipeline and
preserves the average magnitude of the intervention, but removes its spatial
variation. Its weak performance shows that a uniform increase in uncertainty is not
sufficient. The main benefit comes from increasing uncertainty selectively
in regions supported by the transferred distributions. FI-GP increases exploration around regions
supported by the received distributions while leaving other parts of the
search space much less affected.

The remaining ablations have smaller and more task-dependent effects.
Replacing the DPGMM with a single Gaussian gives an average rank of $4.50$.
A single Gaussian can represent one broad promising region but cannot retain
multiple separated modes when the posterior samples support several possible
optimum locations. Component merging removes redundant components so that multiple entries in
the limited downlink packet do not repeatedly represent the same region.
Removing this step increases the average rank to $3.75$.

Agent-specific sampling and value-aware reweighting affect the result less
uniformly under Level~2. Removing agent-specific sampling gives the
second-best average rank of $2.50$ and even obtains $21.5431$ on Zakharov,
compared with $22.1717$ for the full method. Removing value-aware
reweighting obtains $8.4753$ on Ackley, compared with $8.4785$ for Full
GUIDE-UCB. These reversals show that neither mechanism must improve every
individual function. Value-aware reweighting makes component selection more selective, while
agent-specific sampling gives different agents different subsets of the
global components.

\begin{figure*}[ht]
    \centering
    \begin{subfigure}[t]{0.25\textwidth}
        \centering
        \includegraphics[width=\linewidth]{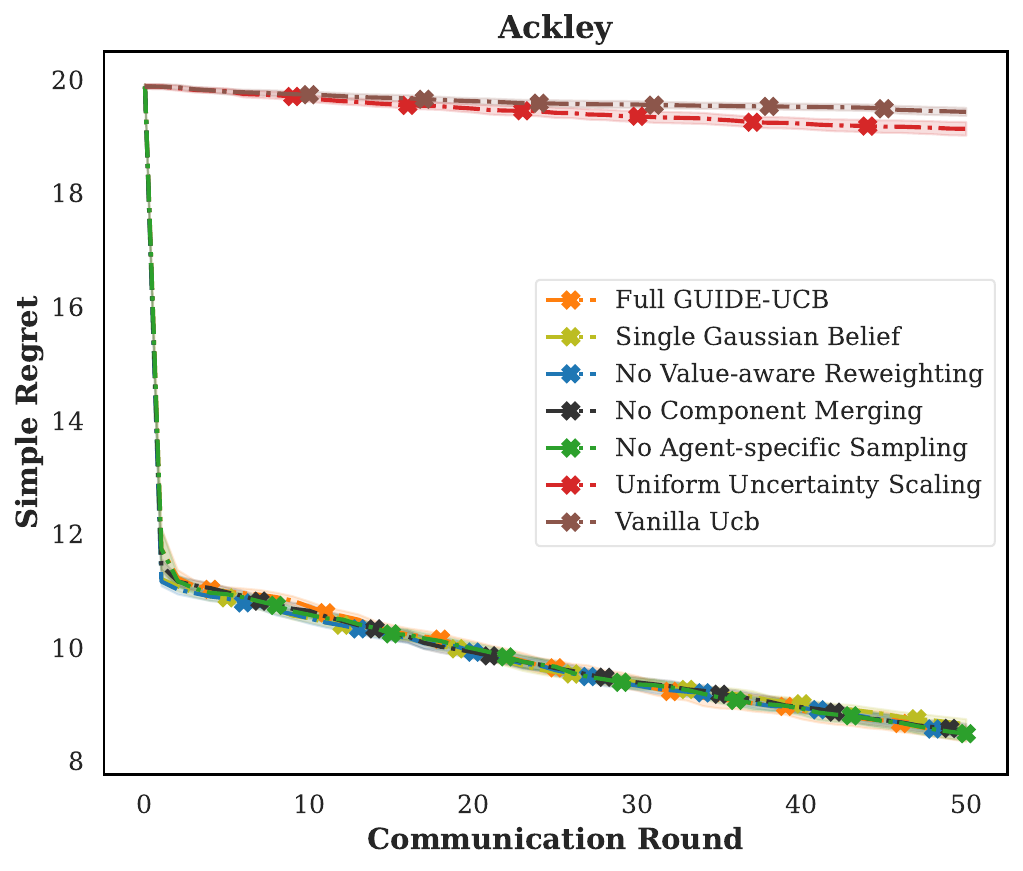}
        \caption{Ackley}
    \end{subfigure}\hfill
    \begin{subfigure}[t]{0.25\textwidth}
        \centering
        \includegraphics[width=\linewidth]{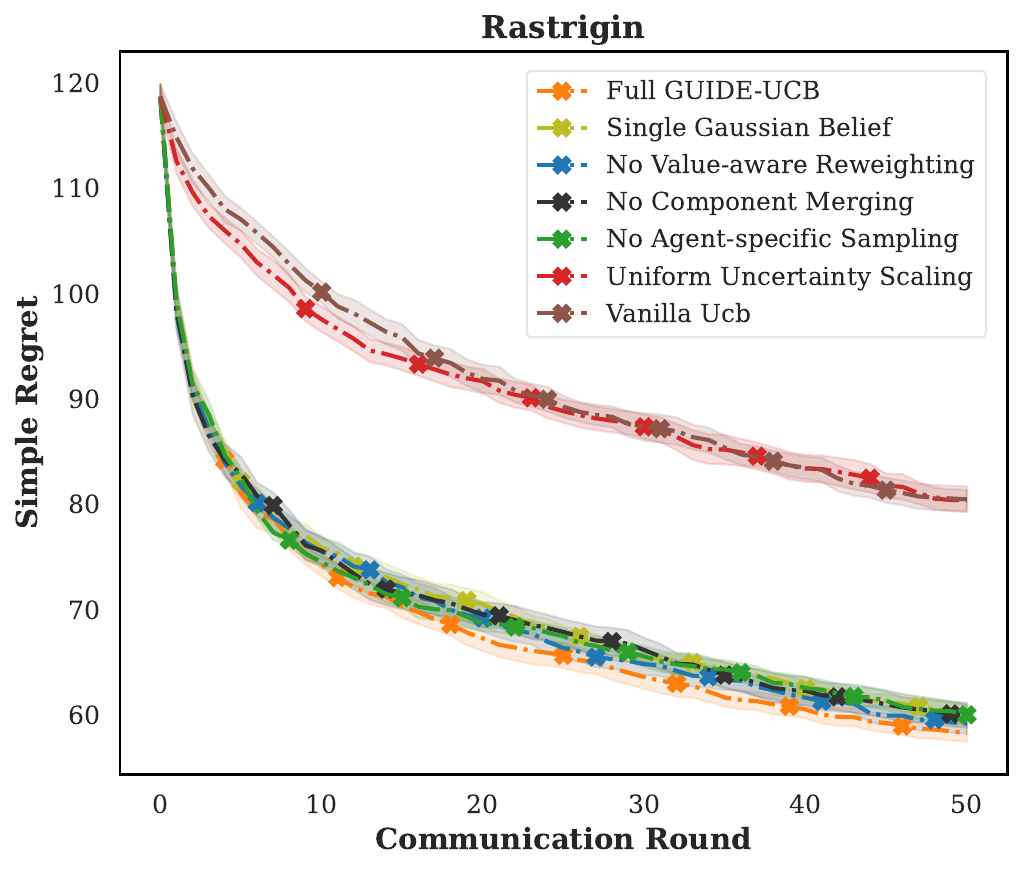}
        \caption{Rastrigin}
    \end{subfigure}\hfill
    \begin{subfigure}[t]{0.25\textwidth}
        \centering
        \includegraphics[width=\linewidth]{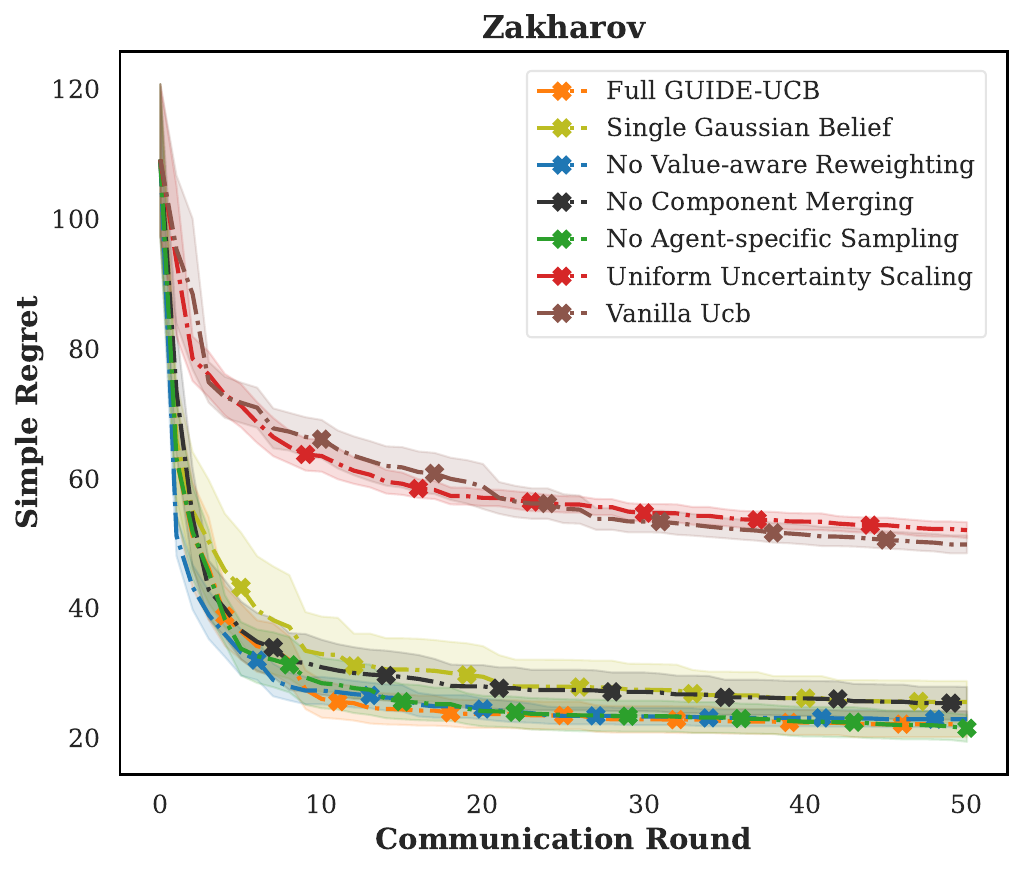}
        \caption{Zakharov}
    \end{subfigure}\hfill
    \begin{subfigure}[t]{0.25\textwidth}
        \centering
        \includegraphics[width=\linewidth]{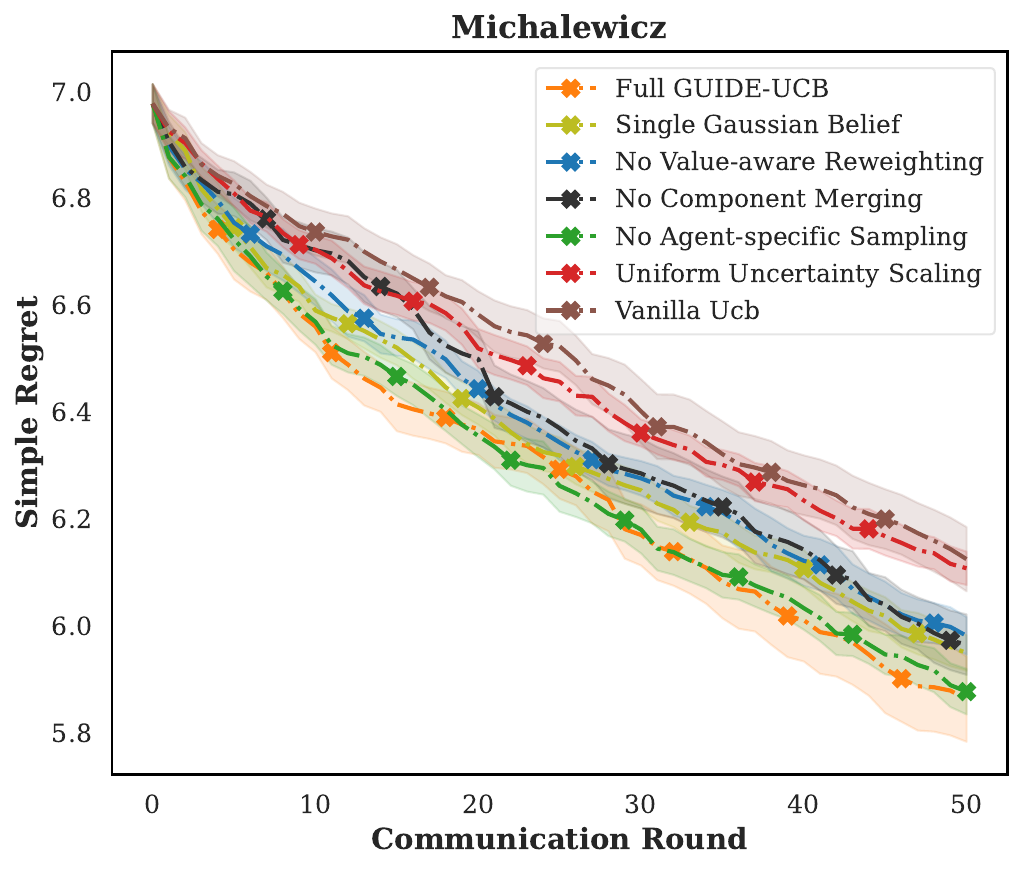}
        \caption{Michalewicz}
    \end{subfigure}
    \caption{Convergence trajectories for the GUIDE-UCB ablation study under Level~2 heterogeneity. Curves show the mean average simple regret over 10 independent runs, and shaded regions denote $\pm 1$ standard error of the mean across runs. Lower is better.}
    \label{fig:appendix_ablation_level2_curves}
\end{figure*}

\subsection{Level 3 Ablation Results}

Under Level~3 heterogeneity, the ablation results become much more
benchmark dependent. Full GUIDE-UCB nevertheless retains the best aggregate
rank of $2.50$. The Single-Gaussian variant ranks second at $3.00$, followed
by the variants without agent-specific sampling and component merging at
$3.75$ and $4.00$. Uniform uncertainty scaling obtains $4.50$, Vanilla UCB
obtains $5.00$, and removing value-aware reweighting gives the weakest
aggregate rank of $5.25$.

The best ablation variant now differs across functions. Removing value-aware
reweighting is best on Ackley with $20.1612$, while Full GUIDE-UCB obtains
$20.1739$. Uniform uncertainty scaling is best on Rastrigin at $90.2925$,
followed by the variant without agent-specific sampling at $90.7210$ and
Full GUIDE-UCB at $90.8849$. Full GUIDE-UCB is best on Zakharov at
$92.9444$, whereas Vanilla UCB is best on Michalewicz at $7.3490$. The
Level~3 trajectories consequently overlap much more than their Level~2
counterparts.

The most notable change is the role of value-aware reweighting. Removing it
gives an average rank of $2.75$ at Level~2 but the worst rank of $5.25$ at
Level~3. Strong heterogeneity means that the server receives distributions
from objectives that agree much less with one another. In this setting,
mixture mass alone is not enough to distinguish a component that is merely
frequent in posterior samples from one that is also supported by favorable
local function values. The value-aware score provides this additional
filter. Its effect is not uniform, as shown by the strong Ackley result
without reweighting, but it becomes more important for avoiding weak
aggregate performance across different tasks.

The Single-Gaussian variant ranks second overall, suggesting that the
additional benefit of multimodal representation is relatively modest in this
Level~3 study. Uniform uncertainty scaling is best on Rastrigin, showing that spatial
localization is not beneficial on every individual function.
Under severe heterogeneity, filtering unreliable transferred components
becomes more important because the received spatial information is less
consistently relevant to each local objective.

Full GUIDE-UCB achieves the best Level~3 average rank of $2.50$ despite not
being the best variant on every benchmark. Its advantage is therefore
primarily one of robustness across different objective landscapes.

\begin{table*}[ht]
    \caption{Ablation study of GUIDE-UCB on four representative synthetic benchmarks under Level~3 heterogeneity. Entries report the final mean average simple regret over 10 independent runs; lower is better. The last row reports the average rank across the four benchmarks. The best and second-best results are shown in bold and underlined, respectively.}
    \label{tab:appendix_ablation_level3}
    \centering
    \setlength{\tabcolsep}{3.0pt}
    \renewcommand{\arraystretch}{1.08}
    \scriptsize
    \resizebox{\textwidth}{!}{%
    \begin{tabular}{lccccccc}
    \toprule
    \textbf{Benchmark}
    & \shortstack{\textbf{Vanilla}\\\textbf{UCB}}
    & \shortstack{\textbf{Single-Gaussian}\\\textbf{belief}}
    & \shortstack{\textbf{w/o value-aware}\\\textbf{reweighting}}
    & \shortstack{\textbf{w/o component}\\\textbf{merging}}
    & \shortstack{\textbf{w/o agent-specific}\\\textbf{sampling}}
    & \shortstack{\textbf{Uniform uncertainty}\\\textbf{scaling}}
    & \shortstack{\textbf{Full}\\\textbf{GUIDE-UCB}} \\
    \midrule
    Ackley      & 20.5073 & 20.2321 & \best{20.1612} & 20.2473 & 20.2122 & 20.4661 & \second{20.1739} \\
    Rastrigin   & 95.6377 & 92.1619 & 94.3347 & 92.5862 & \second{90.7210} & \best{90.2925} & 90.8849 \\
    Zakharov    & 101.9986 & \second{95.9903} & 109.8422 & 96.5643 & 98.2310 & 103.0493 & \best{92.9444} \\
    Michalewicz & \best{7.3490} & \second{7.3529} & 7.5093 & 7.3660 & 7.4075 & 7.4026 & 7.3793 \\
    \midrule
    \textbf{Average rank}
                & 5.00 & \second{3.00} & 5.25 & 4.00 & 3.75 & 4.50 & \best{2.50} \\
    \bottomrule
    \end{tabular}%
    }
\end{table*}

\begin{figure*}[ht]
    \centering
    \begin{subfigure}[t]{0.25\textwidth}
        \centering
        \includegraphics[width=\linewidth]{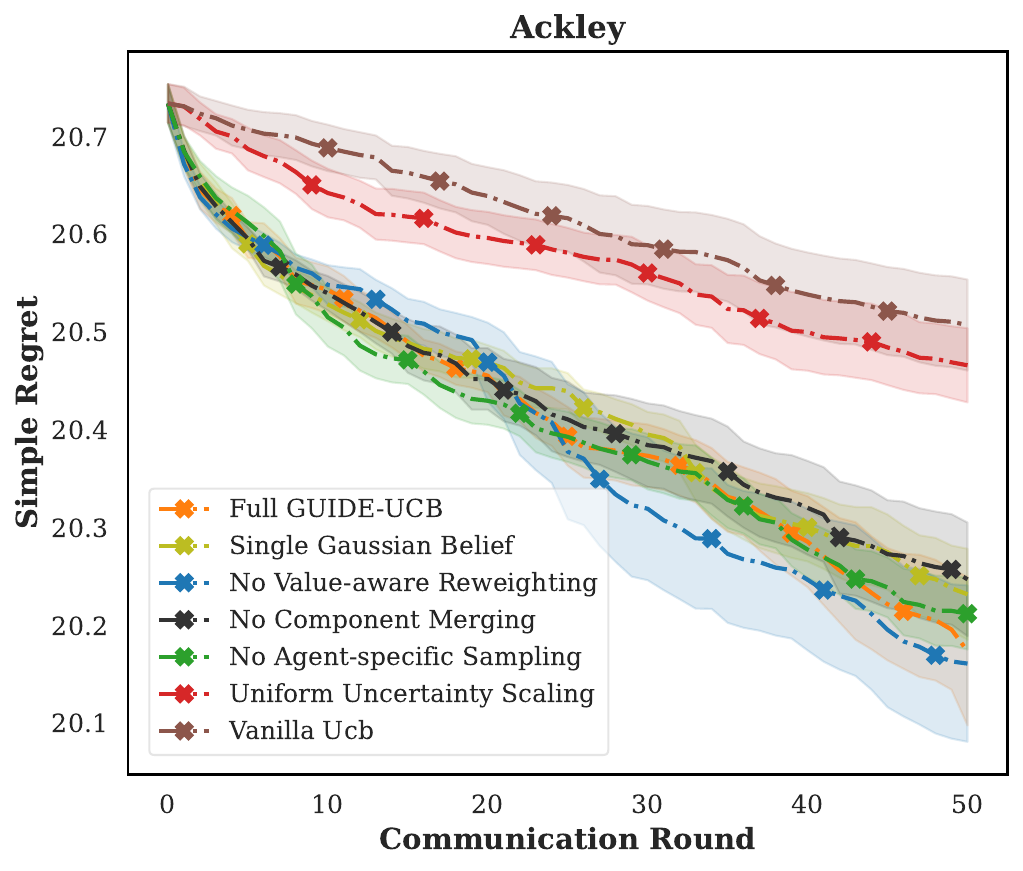}
        \caption{Ackley}
    \end{subfigure}\hfill
    \begin{subfigure}[t]{0.25\textwidth}
        \centering
        \includegraphics[width=\linewidth]{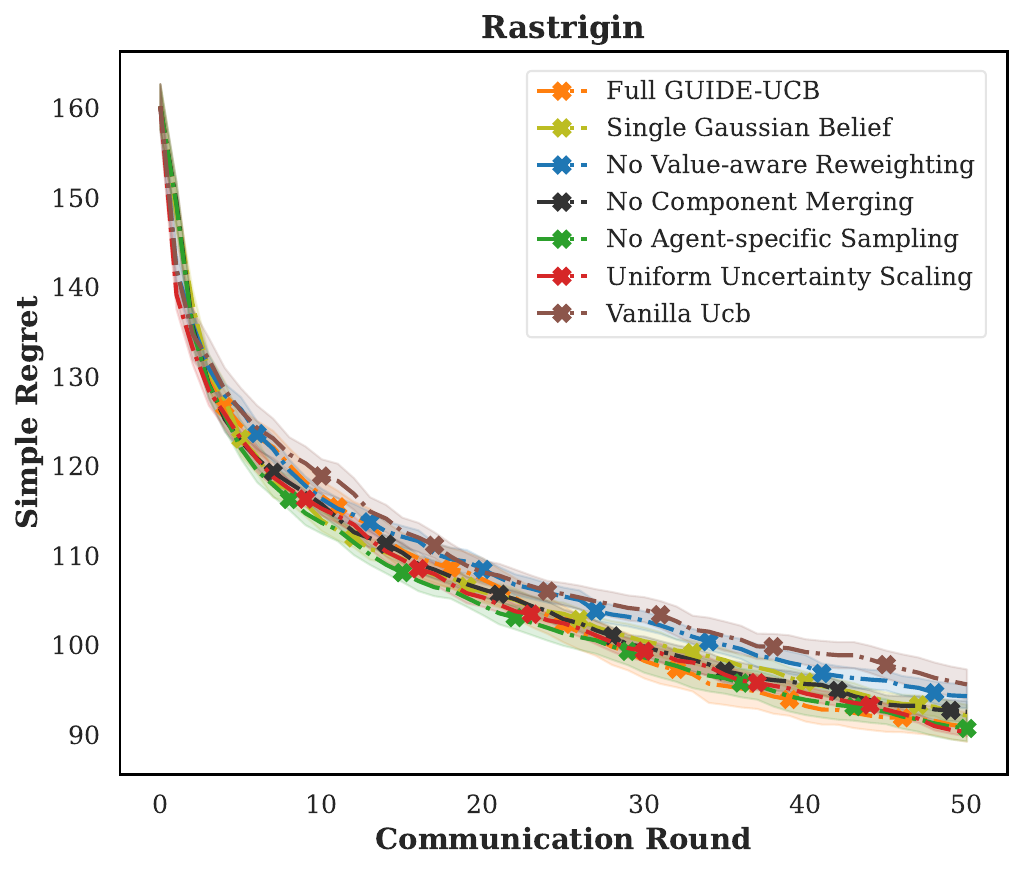}
        \caption{Rastrigin}
    \end{subfigure}\hfill
    \begin{subfigure}[t]{0.25\textwidth}
        \centering
        \includegraphics[width=\linewidth]{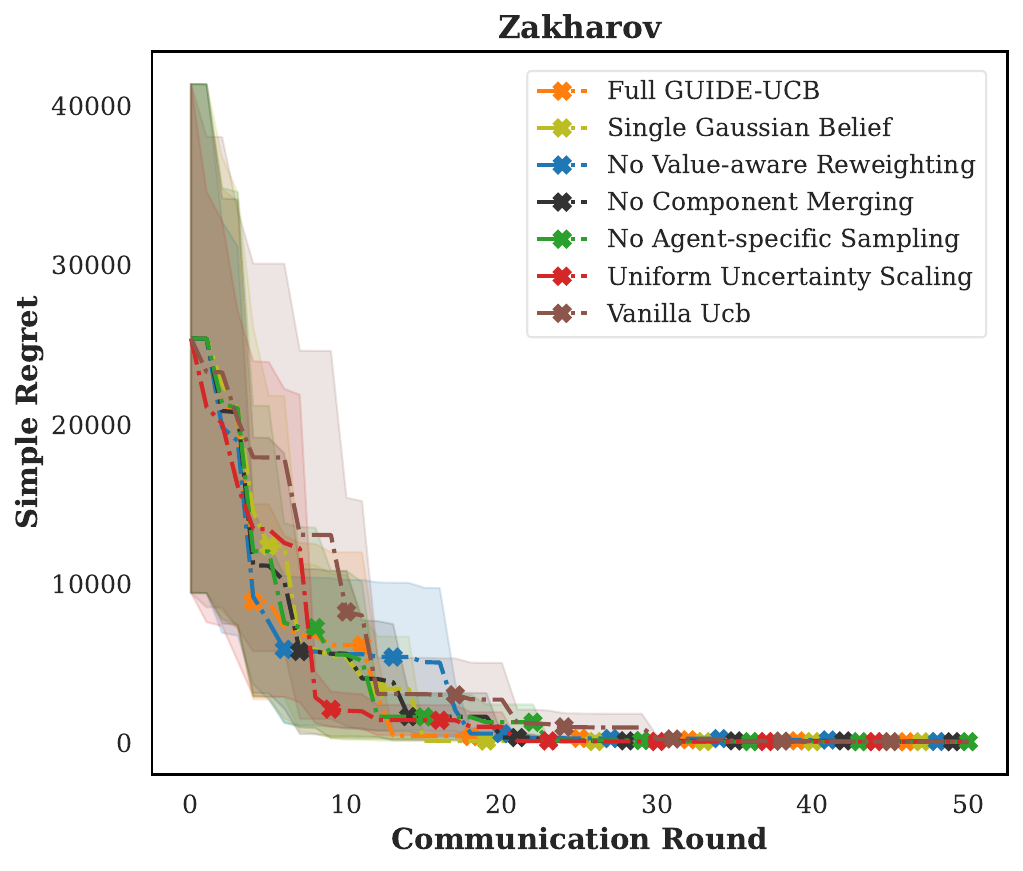}
        \caption{Zakharov}
    \end{subfigure}\hfill
    \begin{subfigure}[t]{0.25\textwidth}
        \centering
        \includegraphics[width=\linewidth]{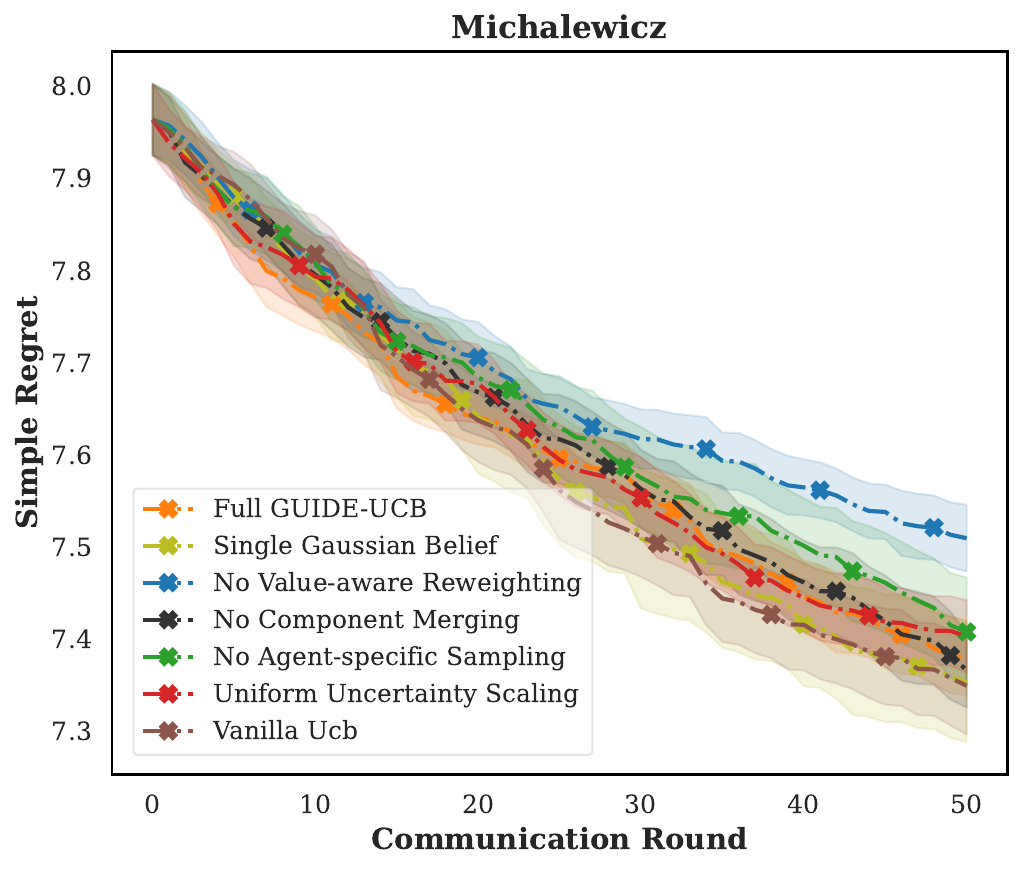}
        \caption{Michalewicz}
    \end{subfigure}
    \caption{Convergence trajectories for the GUIDE-UCB ablation study under Level~3 heterogeneity. Curves show the mean average simple regret over 10 independent runs, and shaded regions denote $\pm 1$ standard error of the mean across runs. Lower is better.}
    \label{fig:appendix_ablation_level3_curves}
\end{figure*}

\section{Sensitivity Analysis}
\label{app:sensitivity}

We evaluate the sensitivity of GUIDE-UCB under Level~3 heterogeneity on
Rastrigin and Michalewicz. Level~3 has the lowest empirical task similarity
($\overline{\mathrm{Sim}}=0.081$) and therefore provides a demanding setting
for examining whether the main GUIDE hyperparameters remain well behaved when
cross-agent agreement is weak. We vary one parameter at a time while keeping
all remaining settings fixed.

The tested parameters are the number of posterior-optimum samples $M$, the
downlink packet size $P$, the maximum intervention strength
$\lambda_{\max}$, and the component merging threshold
$\delta_{\mathrm{merge}}$. The complete parameter grids are reported in
Table~\ref{tab:sensitivity_grids}.

\begin{table}[ht]
    \caption{
    Hyperparameter grids used in the Level~3 sensitivity analysis. Bold entries denote the default settings.
    }
    \label{tab:sensitivity_grids}
    \centering
    \setlength{\tabcolsep}{5.0pt}
    \renewcommand{\arraystretch}{1.05}
    \small
    \begin{tabular}{lll}
    \toprule
    \textbf{Parameter}
    & \textbf{Tested values}
    & \textbf{Default} \\
    \midrule
    posterior-optimum samples $M$
        & $\{100,250,\mathbf{500},750,1000\}$ & 500 \\
    Downlink packet size $P$
        & $\{1,3,\mathbf{5},8,10\}$ & 5 \\
    Maximum guidance $\lambda_{\max}$
        & $\{0.25,0.5,\mathbf{1.0},2.0\}$ & 1.0 \\
    Merging threshold $\delta_{\mathrm{merge}}$
        & $\{0.01,0.02,0.03,0.04,\mathbf{0.05}\}$ & 0.05 \\
    \bottomrule
    \end{tabular}
\end{table}

\begin{table}[t]
    \caption{Sensitivity results of GUIDE-UCB on Rastrigin and Michalewicz under Level~3 heterogeneity.
    Entries report the final mean average simple regret over 10 independent runs; lower is better.}
    \label{tab:sensitivity_results}
    \centering
    \setlength{\tabcolsep}{6.0pt}
    \renewcommand{\arraystretch}{1.05}
    \small
    \begin{tabular}{llcc}
    \toprule
    \textbf{Parameter}
    & \textbf{Value}
    & \textbf{Rastrigin}
    & \textbf{Michalewicz} \\
    \midrule
    $M$
        & 100  & 92.6392 & 7.3885 \\
        & 250  & 91.9958 & 7.4102 \\
        & 500  & 90.8849 & 7.3793 \\
        & 750  & 91.3377 & 7.3489 \\
        & 1000 & 93.1664 & 7.3473 \\
    \midrule
    $P$
        & 1  & 92.5324 & 7.3935 \\
        & 3  & 91.4631 & 7.3586 \\
        & 5  & 90.8849 & 7.3793 \\
        & 8  & 93.0066 & 7.3982 \\
        & 10 & 93.2827 & 7.3982 \\
    \midrule
    $\lambda_{\max}$
        & 0.25 & 90.7113 & 7.3563 \\
        & 0.5  & 92.1546 & 7.3841 \\
        & 1.0  & 90.8849 & 7.3793 \\
        & 2.0  & 95.0430 & 7.4711 \\
    \midrule
    $\delta_{\mathrm{merge}}$
        & 0.01 & 89.8604 & 7.3847 \\
        & 0.02 & 90.5706 & 7.3613 \\
        & 0.03 & 90.4103 & 7.3823 \\
        & 0.04 & 89.5435 & 7.4494 \\
        & 0.05 & 90.8849 & 7.3793 \\
    \bottomrule
    \end{tabular}
\end{table}

\begin{figure*}[t]
    \centering
    \begin{subfigure}[t]{0.25\textwidth}
        \centering
        \includegraphics[width=\linewidth]{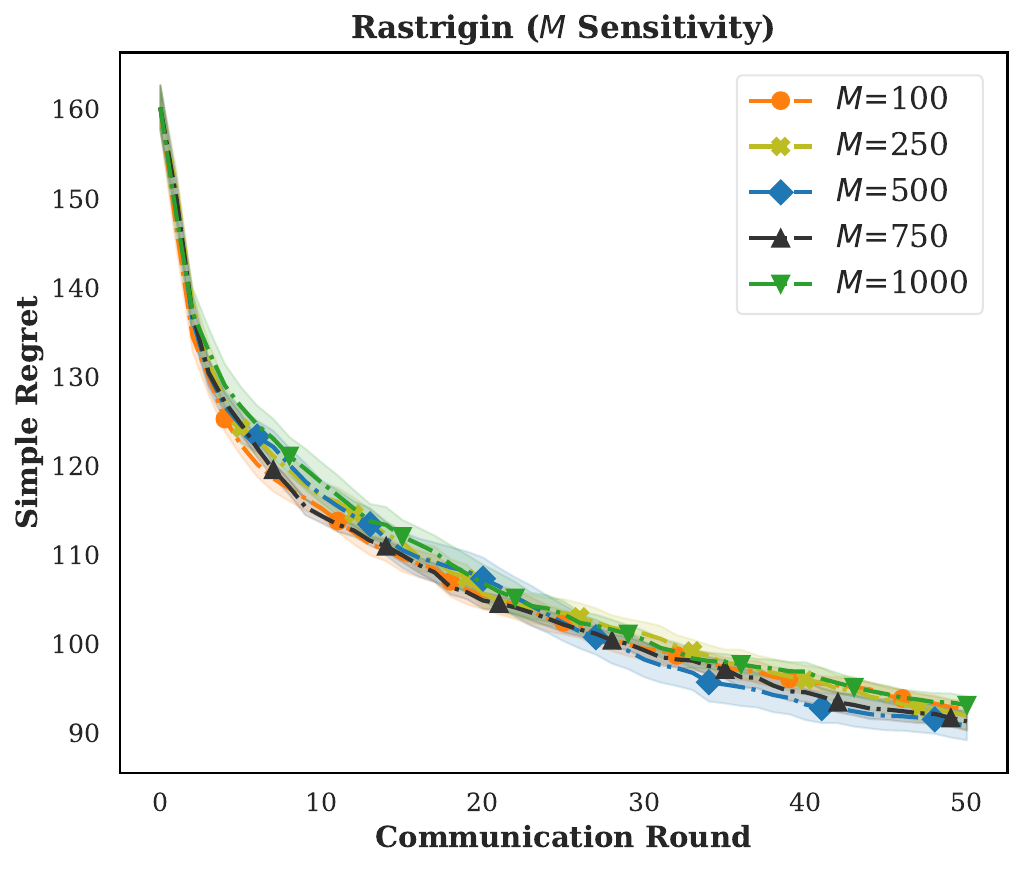}
        \caption{Posterior-optimum samples $M$}
    \end{subfigure}\hfill
    \begin{subfigure}[t]{0.25\textwidth}
        \centering
        \includegraphics[width=\linewidth]{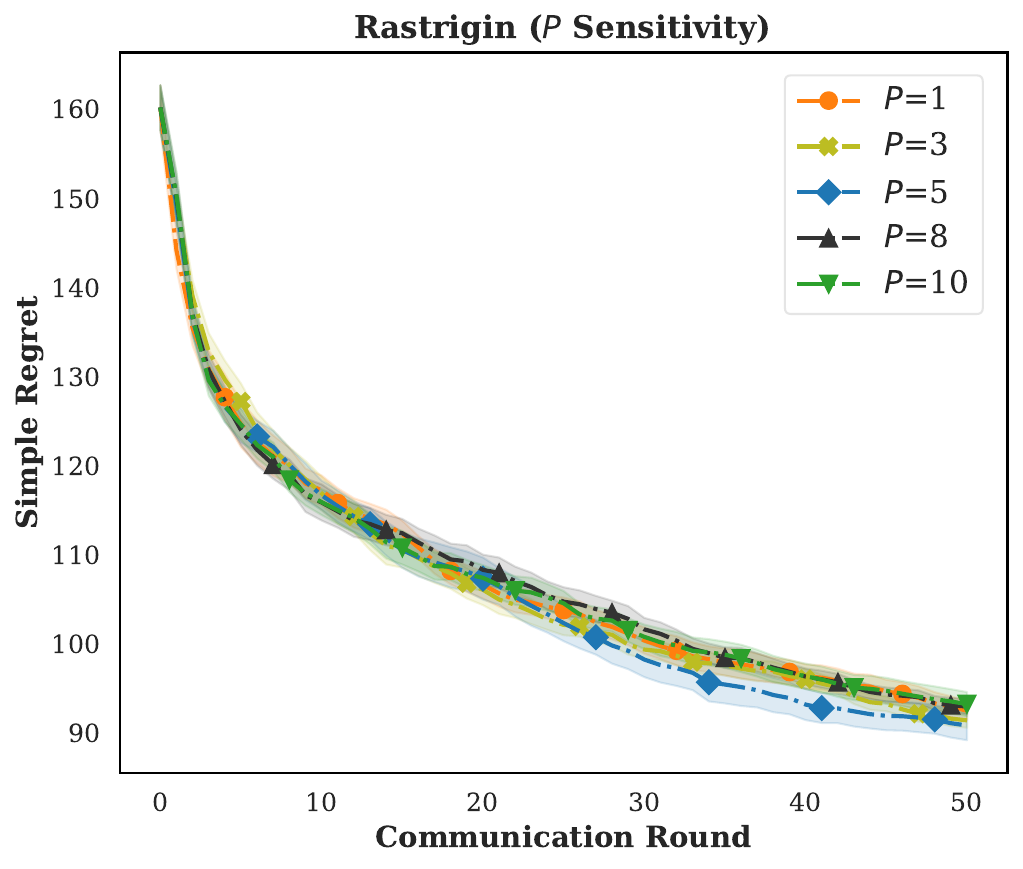}
        \caption{Packet size $P$}
    \end{subfigure}\hfill
    \begin{subfigure}[t]{0.25\textwidth}
        \centering
        \includegraphics[width=\linewidth]{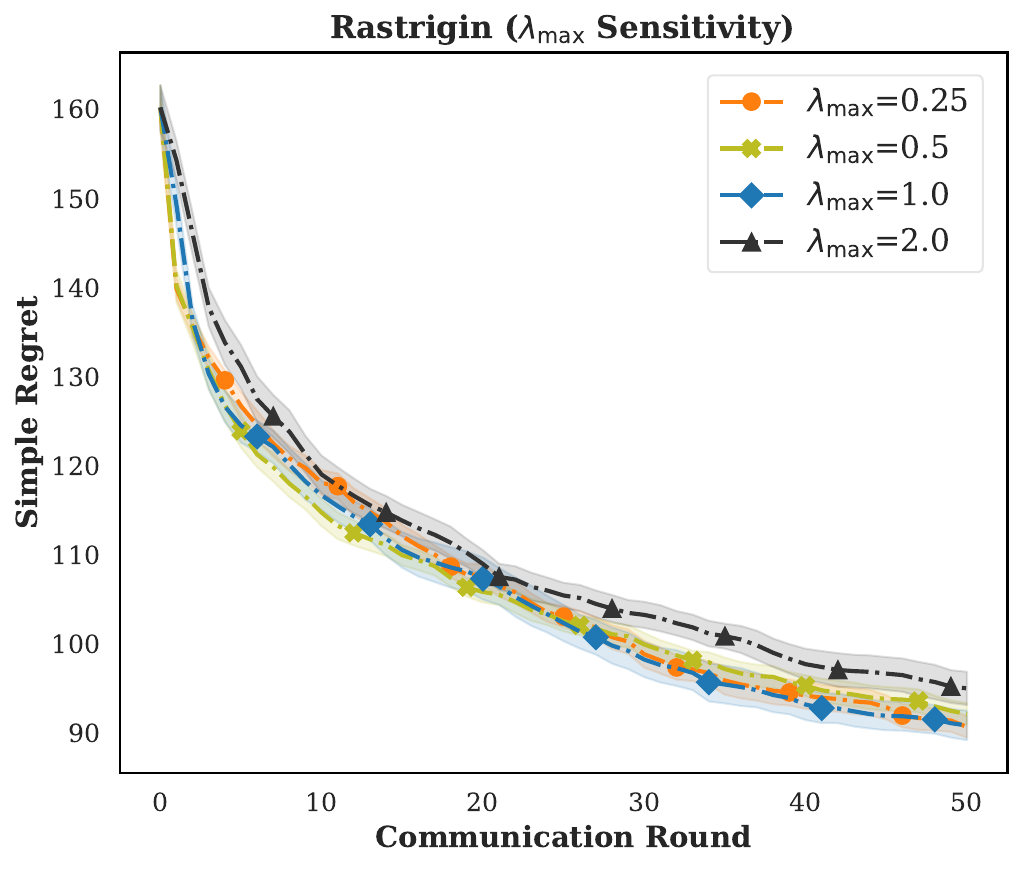}
        \caption{Guidance strength $\lambda_{\max}$}
    \end{subfigure}\hfill
    \begin{subfigure}[t]{0.25\textwidth}
        \centering
        \includegraphics[width=\linewidth]{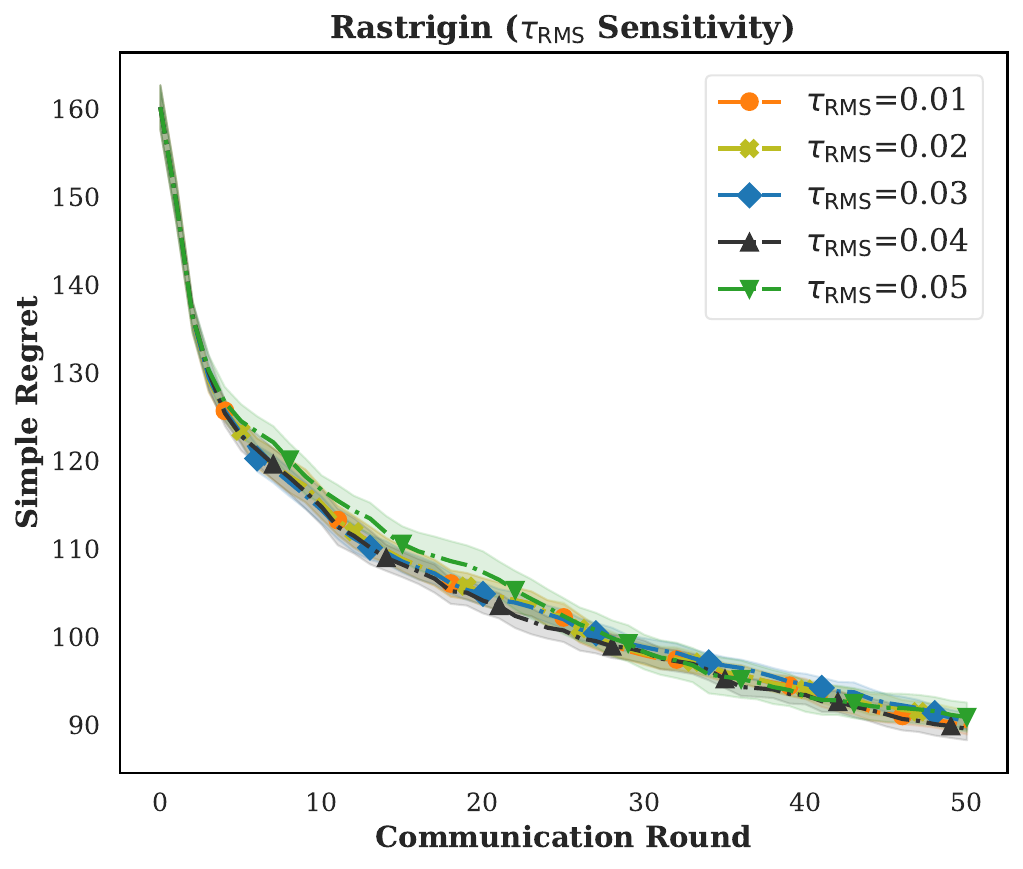}
        \caption{Merging threshold $\delta_{\mathrm{merge}}$}
    \end{subfigure}
    \caption{Sensitivity of GUIDE-UCB to its main hyperparameters on Rastrigin under Level~3 heterogeneity. Curves show the mean average simple regret over 10 independent runs, and shaded regions denote $\pm 1$ standard error of the mean across runs. Lower is better.}
    \label{fig:sensitivity_rastrigin}
\end{figure*}

\begin{figure*}[t]
    \centering
    \begin{subfigure}[t]{0.25\textwidth}
        \centering
        \includegraphics[width=\linewidth]{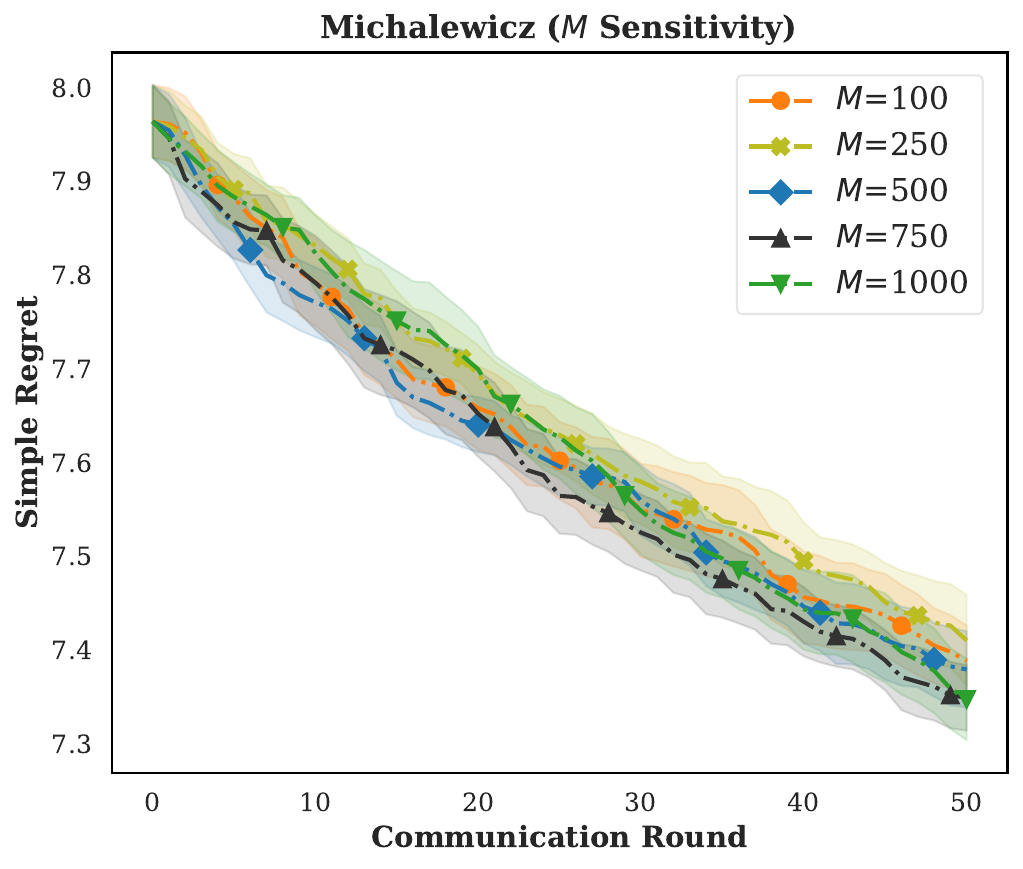}
        \caption{Posterior-optimum samples $M$}
    \end{subfigure}\hfill
    \begin{subfigure}[t]{0.25\textwidth}
        \centering
        \includegraphics[width=\linewidth]{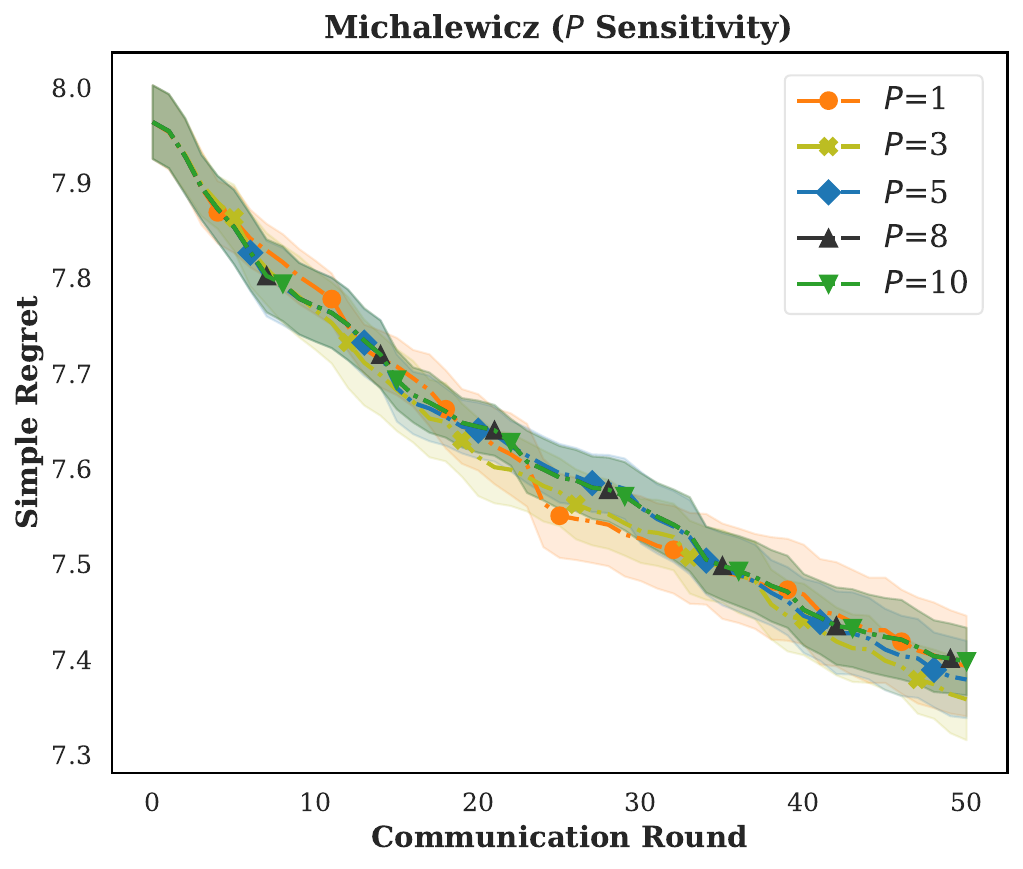}
        \caption{Packet size $P$}
    \end{subfigure}\hfill
    \begin{subfigure}[t]{0.25\textwidth}
        \centering
        \includegraphics[width=\linewidth]{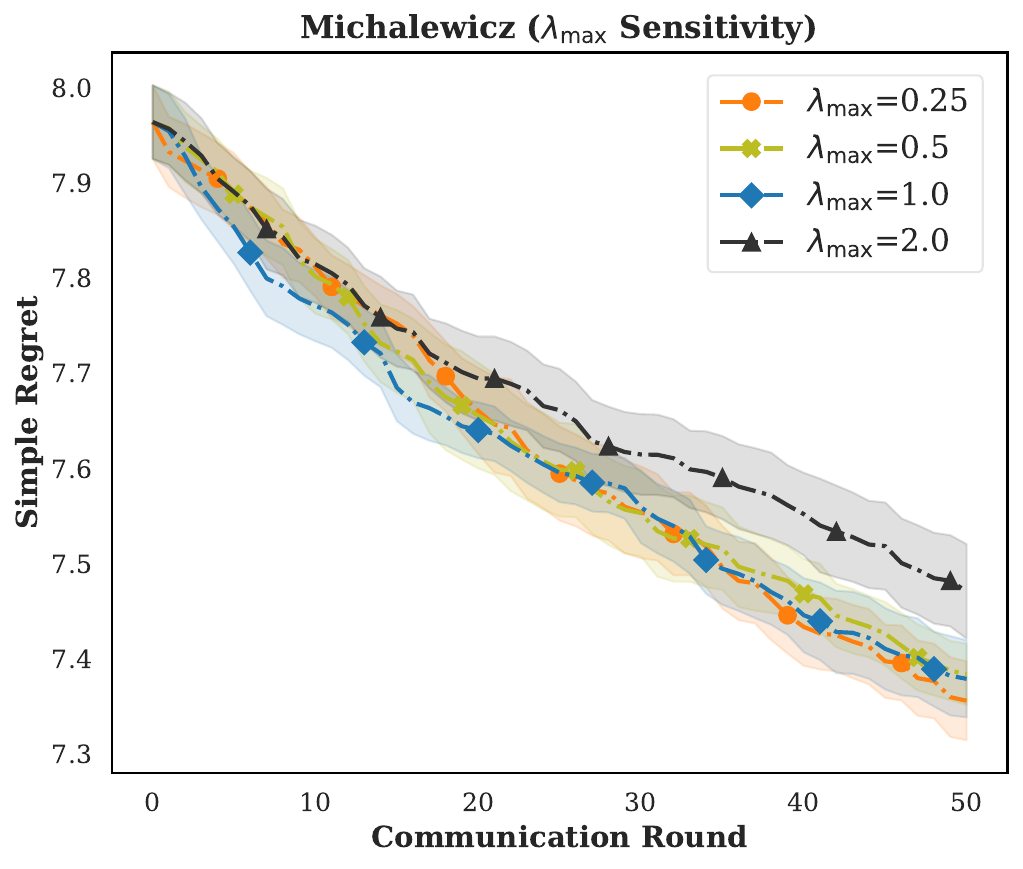}
        \caption{Guidance strength $\lambda_{\max}$}
    \end{subfigure}\hfill
    \begin{subfigure}[t]{0.25\textwidth}
        \centering
        \includegraphics[width=\linewidth]{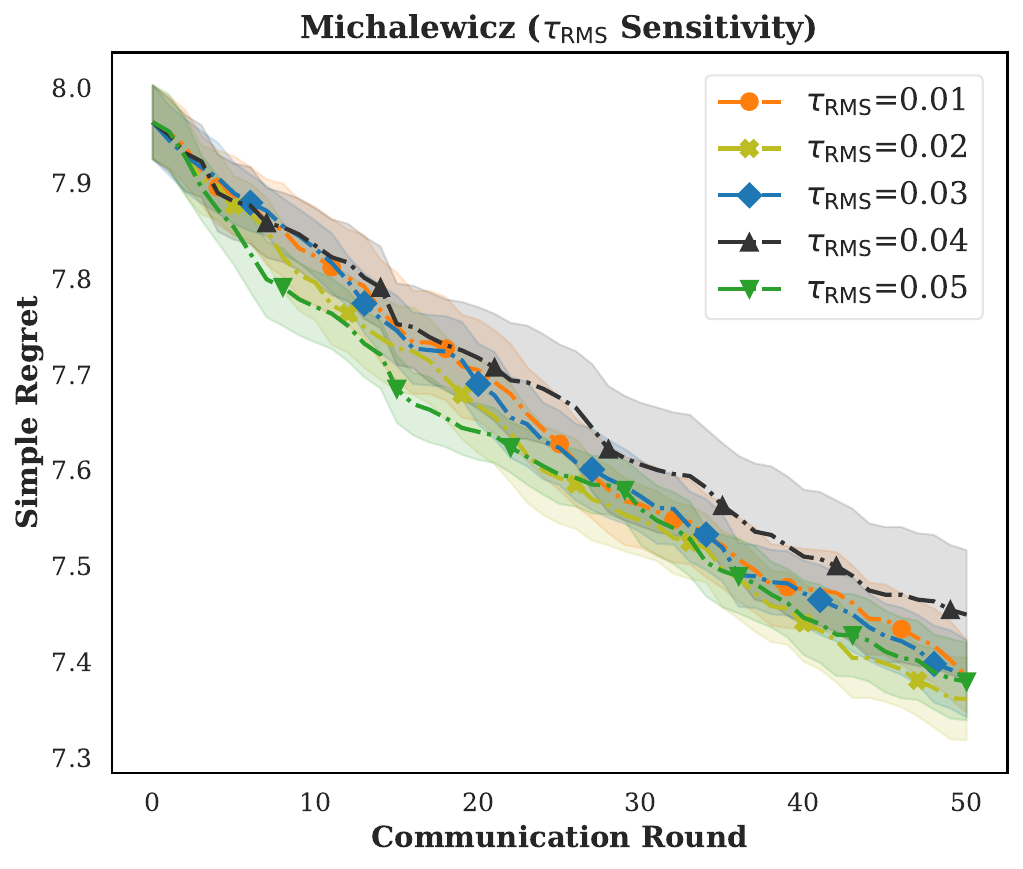}
        \caption{Merging threshold $\delta_{\mathrm{merge}}$}
    \end{subfigure}
    \caption{Sensitivity of GUIDE-UCB to its main hyperparameters on Michalewicz under Level~3 heterogeneity. Curves show the mean average simple regret over 10 independent runs, and shaded regions denote $\pm 1$ standard error of the mean across runs. Lower is better.}
    \label{fig:sensitivity_michalewicz}
\end{figure*}

Table~\ref{tab:sensitivity_results} reports the final average simple regret
for each parameter setting. For each parameter group, only the parameter
under study is varied, while all remaining hyperparameters are fixed at
their default values. Figures~\ref{fig:sensitivity_rastrigin} and
\ref{fig:sensitivity_michalewicz} show the corresponding convergence
trajectories over the complete BO horizon.

The effect of $M$ differs between the two benchmarks. On Rastrigin,
$M=500$ gives the lowest final regret of $90.8849$, while increasing the
sample size to $750$ and $1000$ gives $91.3377$ and $93.1664$,
respectively. Michalewicz shows the opposite tendency, with final regrets of
$7.3489$ and $7.3473$ at $M=750$ and $M=1000$, compared with $7.3793$ at
$M=500$. Increasing $M$ beyond $500$ therefore does not provide a
consistent benefit across the two benchmarks. A moderate number of
posterior-optimum samples is sufficient for Rastrigin, while Michalewicz
benefits slightly from a finer Monte Carlo approximation of the distribution over optimum locations.

The packet size $P$ also shows a task-dependent optimum. Rastrigin performs
best at $P=5$, with a final regret of $90.8849$, whereas Michalewicz performs
best at $P=3$, with $7.3586$. Increasing the packet size further degrades
both benchmarks. At $P=8$ and $P=10$, Rastrigin reaches $93.0066$ and
$93.2827$, while Michalewicz reaches $7.3982$ for both settings. A larger
packet gives each agent access to more global components, but under severe
heterogeneity some of these additional components may correspond to regions
that are poorly aligned with the local objective. Increasing $P$ therefore
does not necessarily improve the quality of the guidance. Moderate packet
sizes also keep the communication cost bounded while limiting the amount of
potentially conflicting global information received by each agent.

The clearest common pattern appears for $\lambda_{\max}$. Among the tested
values, $\lambda_{\max}=0.25$ gives the lowest final regret on both
benchmarks, with $90.7113$ on Rastrigin and $7.3563$ on Michalewicz. In
contrast, the strongest intervention, $\lambda_{\max}=2.0$, gives the worst
result on both benchmarks, reaching $95.0430$ and $7.4711$, respectively.
The results show that strong federated guidance is undesirable when
cross-agent agreement is weak. Increasing posterior uncertainty too
aggressively gives transferred information too much influence on the local
decision. A weaker intervention is sufficient to exploit useful global
information while leaving more weight to the local posterior.

The merging threshold $\delta_{\mathrm{merge}}$ has no common optimum across
the two benchmarks. Rastrigin achieves its lowest final regret of $89.5435$
at $\delta_{\mathrm{merge}}=0.04$, whereas Michalewicz achieves its lowest
value of $7.3613$ at $\delta_{\mathrm{merge}}=0.02$. This difference is
expected because the threshold controls the spatial resolution of the global
representation. A larger value merges more nearby components and produces a
coarser representation of the promising regions, while a smaller value
retains finer spatial distinctions. The preferred resolution therefore
depends on the geometry of the objective.

Overall, the sensitivity study does not favor uniformly larger parameter
settings. Increasing $M$, $P$, or $\lambda_{\max}$ does not consistently
reduce final regret, and the preferred merging threshold is task dependent.
The most consistent finding is that excessively strong uncertainty
intervention should be avoided under severe heterogeneity.

\section{Communication Cost Analysis}
\label{app:communication}

We measure communication by the number of scalar values transmitted per agent per BO round. 
Independent TS, UCB, and NEI do not communicate. 
The following accounting uses the default synthetic setting $d=10$, $D_{\mathrm{RFF}}=500$, diagonal GUIDE covariance, and $P=5$.

\paragraph{FTS and FTS-DE.}
FTS uploads and downloads a $D_{\mathrm{RFF}}$-dimensional random feature weight representation, giving $500$ scalars in each direction. 
Under the four-region distributed-exploration implementation of FTS-DE, the corresponding cost is $4D_{\mathrm{RFF}}=2000$ scalars in each direction.

\paragraph{FMTBO.}
FMTBO uploads two GP hyperparameters and ten ranking-related values, and receives two updated hyperparameters. 
Its uplink and downlink costs are therefore 12 and 2 scalars, respectively.

\paragraph{CGP family.}
Each CGP agent uploads one $d$-dimensional design point, one LCB value, and one posterior-mean value, for a total of $d+2=12$ scalars. 
Each agent receives 20 design points, giving $20d=200$ downlink scalars.

\paragraph{GUIDE-FBO.}
With a diagonal covariance, one uploaded component contains a $d$-dimensional mean, a $d$-dimensional covariance vector, one mixture weight, and one standardized value score. Hence,
\begin{equation*}
    C_{\mathrm{up}}^{\mathrm{GUIDE}} = 2d+2 = 22.
\end{equation*}
Each downlink component contains a mean, diagonal covariance, and global guidance weight. Therefore,
\begin{equation*}
    C_{\mathrm{down}}^{\mathrm{GUIDE}} = P_t(2d+1) \le P(2d+1) = 5\times21 = 105.
\end{equation*}
Thus, the total communication cost is at most 127 scalars per agent per round under the default setting. 
For a full covariance implementation, the covariance contribution becomes $d(d+1)/2$ scalars per component when only the symmetric entries are transmitted. 
The resulting communication costs under the default diagonal-covariance implementation are summarized in Table~\ref{tab:main_communication}.

\section{Computational Efficiency Analysis}
\label{app:efficiency}

We report the mean wall-clock time per independent run for each method using the same software environment and hardware. Experiments are executed on an AMD Ryzen 9 7945HX platform with 32 logical processors and 16 physical cores. Reported times are averaged over the repeated runs and measured in seconds.

\begin{table*}[h]
    \caption{Mean wall-clock time per independent run on the synthetic benchmarks under Level 2 heterogeneity, averaged over 10 runs. Unit: seconds.}
    \label{tab:appendix_time_synthetic}
    \centering
    \setlength{\tabcolsep}{3.0pt}
    \renewcommand{\arraystretch}{1.07}
    \scriptsize
    \resizebox{\textwidth}{!}{%
    \begin{tabular}{lcccccccccccc}
    \toprule
    \textbf{Benchmark}
    & \textbf{TS} & \textbf{UCB} & \textbf{NEI}
    & \textbf{FTS} & \textbf{FTS-DE} & \textbf{FMTBO}
    & \textbf{CGP-TS} & \textbf{CGP-UCB} & \textbf{CGP-NEI}
    & \textbf{GUIDE-TS} & \textbf{GUIDE-UCB} & \textbf{GUIDE-NEI} \\
    \midrule
    Ackley             & 129.51 & 143.48 & 269.34 & 141.88 & 148.32 & 160.56 
                       & 252.10 & 411.90 & 1811.38 & 267.19 & 275.87 & 402.82 \\
    Levy               & 111.27 & 125.76 & 262.50 & 133.12 & 124.19 & 256.59 
                       & 237.84 & 263.90 & 765.01 & 215.37 & 217.87 & 499.63 \\
    Griewank           & 85.13 & 99.67 & 279.07 & 118.16 & 113.43 & 242.71
                       & 246.30 & 253.81 & 773.83 & 179.68 & 185.57 & 344.68 \\
    Rastrigin          & 120.42 & 163.72 & 290.74 & 128.78 & 121.39 & 297.36 
                       & 265.75 & 359.93 & 1221.39 & 227.67 & 262.98 & 555.44 \\
    Weierstrass        & 126.20 & 140.59 & 250.92 & 141.79 & 149.75 & 171.29 
                       & 246.86 & 347.38 & 1050.71 & 266.59 & 267.94 & 677.94 \\
    Ellipsoid          & 175.70 & 164.10 & 306.68 & 161.63 & 153.78 & 246.67 
                       & 304.80 & 326.98 & 892.53 & 249.66 & 253.14 & 624.32 \\
    Sphere             & 167.19 & 174.37 & 327.94 & 178.33 & 164.62 & 266.78 
                       & 257.60 & 275.84 & 814.01 & 223.04 & 234.56 & 534.97 \\
    Zakharov           & 115.02 & 113.23 & 224.83 & 115.94 & 120.83 & 225.49 
                       & 242.18 & 240.12 & 449.03 & 233.55 & 240.50 & 562.81 \\
    Rosenbrock         & 65.79 & 78.15 & 172.91 & 78.58 & 89.00 & 154.95
                       & 154.68 & 163.35 & 333.75 & 194.66 & 201.80 & 439.28 \\
    Michalewicz        & 114.72 & 166.72 & 267.53 & 131.72 & 137.26 & 195.98 
                       & 231.74 & 368.63 & 1406.76 & 264.11 & 294.16 & 742.14 \\
    Powell             & 82.17 & 90.78 & 186.99 & 97.34 & 107.10 & 167.66 
                       & 185.97 & 185.55 & 296.17 & 227.31 & 234.32 & 477.27 \\
    Styblinski--Tang   & 83.22 & 88.35 & 216.58 & 101.46 & 106.42 & 183.35 
                       & 224.13 & 274.83 & 1048.93 & 206.32 & 219.41 & 548.39 \\
    \bottomrule
    \end{tabular}%
    }
\end{table*}

The synthetic objectives are inexpensive to evaluate, so the reported
wall-clock times primarily reflect the computational overhead of the BO
algorithms themselves. Averaged across the 12 Level~2 benchmarks, UCB,
GUIDE-UCB, and CGP-UCB require approximately $129.1$, $240.7$, and
$289.4$ seconds per complete run, respectively. GUIDE-UCB is faster than
CGP-UCB on $9$ of the $12$ benchmarks.

For the NEI family, GUIDE-NEI and CGP-NEI require approximately $534.1$
and $905.3$ seconds on average, respectively, with GUIDE-NEI being faster
on $9$ of the $12$ benchmarks. GUIDE therefore introduces clear
additional computation relative to independent local BO, mainly from
posterior-optimum sampling, distribution fitting, and server-side processing, while
remaining less expensive than the corresponding CGP variants on most of
the tested benchmarks.

This additional computation is a practical cost when objective
evaluations themselves are inexpensive. In expensive black-box
applications, however, simulation, model training, or physical
experimentation may dominate the total optimization time, in which case
optimizer-side computation can represent a smaller fraction of the
end-to-end cost. The relative runtime impact therefore depends on the
expense of the application being optimized.

\section{Orthogonal Random Features for Posterior-Optimum Sampling}
\label{app:orf}

GUIDE uses random Fourier features \citep{rahimi2007random} with an orthogonal-block construction inspired by \citet{yu2016orthogonal} to generate approximate posterior paths.
The local GP provides the posterior mean, while a Bayesian linear feature model provides the stochastic residual.

\paragraph{Feature representation.}
For a fixed agent and round, let $\boldsymbol{\ell}$ and $\sigma_f^2$ denote the fitted local GP's ARD lengthscales and output variance, and define $\boldsymbol{\Lambda} =\operatorname{diag}(\boldsymbol{\ell})$. We suppress the agent and round indices on the feature map for readability.
Direction vectors $\mathbf q_j$ are obtained from rows of orthogonal matrices generated by QR decomposition of independent standard Gaussian matrices.
For each direction, draw independent $u_j\sim\chi_d^2$, $v_j\sim\chi_5^2$, and $b_j\sim\mathrm{Unif}[0,2\pi)$, and define
\begin{equation*}
\boldsymbol{\omega}_j
=
\sqrt{u_j}\sqrt{\frac{5}{v_j}}\,
\boldsymbol{\Lambda}^{-1}\mathbf q_j,
\qquad
\phi_j(\mathbf x)
=
\sqrt{\frac{2\sigma_f^2}{D_{\mathrm{ORF}}}}
\cos(\boldsymbol{\omega}_j^\top\mathbf x+b_j).
\end{equation*}
The Student-$t_5$ radial scaling and ARD transformation follow the spectral parameterization of the local Mat\'ern-$5/2$ kernel. The resulting feature map approximates its stationary covariance through
$k(\mathbf x,\mathbf x')
\approx
\boldsymbol{\phi}(\mathbf x)^\top
\boldsymbol{\phi}(\mathbf x')$.

\paragraph{Posterior paths.}
We sample feature weights from their Gaussian posterior under the Bayesian linear feature model.
Let $\widetilde{\mathbf w}_{n,t}^{(m)}$ denote such a draw conditioned on $\mathcal D_{n,t-1}$, and let $\boldsymbol{\mu}_{w,n,t-1}$ denote the corresponding posterior weight mean. To retain the fitted local GP posterior mean, we construct
\begin{equation*}
\widehat f_{n,t}^{(m)}(\mathbf x)
=
\mu_{n,t-1}(\mathbf x)
+
\boldsymbol{\phi}(\mathbf x)^\top
\left(
\widetilde{\mathbf w}_{n,t}^{(m)}
-
\boldsymbol{\mu}_{w,n,t-1}
\right).
\end{equation*}
Conditional on the fitted model and feature map, the residual has zero mean.
Thus, the paths preserve the local GP posterior mean while approximating its posterior covariance through the feature model.

\paragraph{Optimum distribution extraction and TS decisions.}
For optimum distribution extraction, we generate $M$ paths with independent posterior weight draws, sharing
a feature map and a uniformly sampled candidate set $\mathcal X_{\mathrm{cand}}\subset\mathcal X$.
We obtain optimum-location samples as
\begin{equation*}
\widehat{\mathbf x}_{n,t}^{(m)}
\in
\arg\max_{\mathbf x\in\mathcal X_{\mathrm{cand}}}
\widehat f_{n,t}^{(m)}(\mathbf x),
\end{equation*}
and use these samples to fit the local DPGMM.

For GUIDE-TS, a fresh posterior weight draw defines a path whose centered residual is scaled by the guidance multiplier:
\begin{equation*}
\widetilde f_{n,t}(\mathbf x)
=
\mu_{n,t-1}(\mathbf x)
+
S_{n,t}(\mathbf x)
\left(
\widehat f_{n,t}(\mathbf x)
-
\mu_{n,t-1}(\mathbf x)
\right).
\end{equation*}
This path is optimized over $\mathcal X$ using continuous acquisition optimization.
For GUIDE-UCB and GUIDE-NEI, the feature sampler is used for optimum distribution extraction, while acquisition
evaluation uses the intervened local GP posterior.
All features and sampled paths remain on the agent; only the selected distributional component is uploaded.

\end{document}